\newcommand\blfootnote[1]{%
  \begingroup
  \renewcommand\thefootnote{}\footnote{#1}%
  \addtocounter{footnote}{-1}%
  \endgroup
}
\newtheorem{theorem}{Theorem}
\newtheorem{lemma}{Lemma}
\newtheorem{corollary}{Corollary}
\newtheorem{proposition}{Proposition}
\newtheorem{assumption}{Assumption}
\newcommand{\E}[2]{\mathbb{E}_{#1}\left[#2\right]}
\newcommand{\Ehat}[1]{\hat{\mathbb{E}}\left[#1\right]}
\newcommand{\Var}[2]{\mathrm{Var}_{#1}\left(#2\right)}
\newcommand{\Cov}[2]{\textrm{\textbf{Cov}}_{#1}\left[#2\right]}
\newcommand\independent{\protect\mathpalette{\protect\independenT}{\perp}}
\def\independenT#1#2{\mathrel{\rlap{$#1#2$}\mkern2mu{#1#2}}}
\newcommand{\Eprime}[2]{\mathbb{E}'_{#1}\left[#2\right]}
\newcommand{\Ptilde}[0]{\widetilde{\Pr}}
\newcommand{\Pbar}[0]{\overline{\Pr}}
\newcommand{\Phat}[0]{\hat{\Pr}}
\newcommand{\lf}[0]{\lambda}
\newcommand{\X}[0]{\mathcal{X}}
\newcommand{\N}[0]{\mathcal{N}}
\newcommand{\B}[0]{\mathcal{B}}
\newcommand{\KL}[0]{D_{\mathrm{KL}}}
\newif\ifsinglecolumn
\setlist{nosep}
\begin{document}

%
\runningtitle{The Value of Labeled and Unlabeled Data in Latent Variable Estimation}

%
\runningauthor{Chen, Cohen-Wang, Mussmann, Sala, R\'e}

\singlecolumnfalse

\twocolumn[

\aistatstitle{Comparing the Value of Labeled and Unlabeled Data \\in Method-of-Moments Latent Variable Estimation}

\aistatsauthor{ Mayee F. Chen* \And \; \; Benjamin Cohen-Wang*\ \And \qquad \qquad  Stephen Mussmann \And \newline \qquad Frederic Sala \And Christopher R\'e }

\aistatsaddress{Stanford University} ]


\begin{abstract}
  Labeling data for modern machine learning is expensive and time-consuming. Latent variable models can be used to infer labels from weaker, easier-to-acquire sources operating on unlabeled data.
Such models can also be trained using labeled data, presenting a key question: should a user invest in few labeled or many unlabeled points? We answer this via a framework centered on model misspecification in method-of-moments latent variable estimation.
Our core result is a bias-variance decomposition of the generalization error, which shows that the unlabeled-only approach incurs additional bias under misspecification. We then introduce a correction that provably removes this bias in certain cases.
We apply our decomposition framework to three scenarios---well-specified, misspecified, and corrected models---to 1) choose between labeled and unlabeled data and 2) learn from their combination. We observe theoretically and with synthetic experiments that for well-specified models, labeled points are worth a constant factor more than unlabeled points. With misspecification, however, their relative value is higher due to the additional bias but can be reduced with correction. We also apply our approach to study real-world weak supervision techniques for dataset construction.

\end{abstract}

 \section{Introduction}
\vspace{-.5em}
A key challenge in data-driven fields is the quality of training data. A fixed data collection budget can
provide a large amount of incomplete training data or a smaller but cleaner dataset. Given a choice between these two options, which should we select and which factors should determine this decision? This fundamental question is especially relevant to modern machine learning, where vast amounts of unlabeled data is available. To exploit this without extensive hand-labeling, powerful techniques relying on \emph{latent variable models}---in particular, \emph{method-of-moments}---have been developed to generate labels.


\begin{figure*}[t]
    \centering
    \includegraphics[width=.75\textwidth]{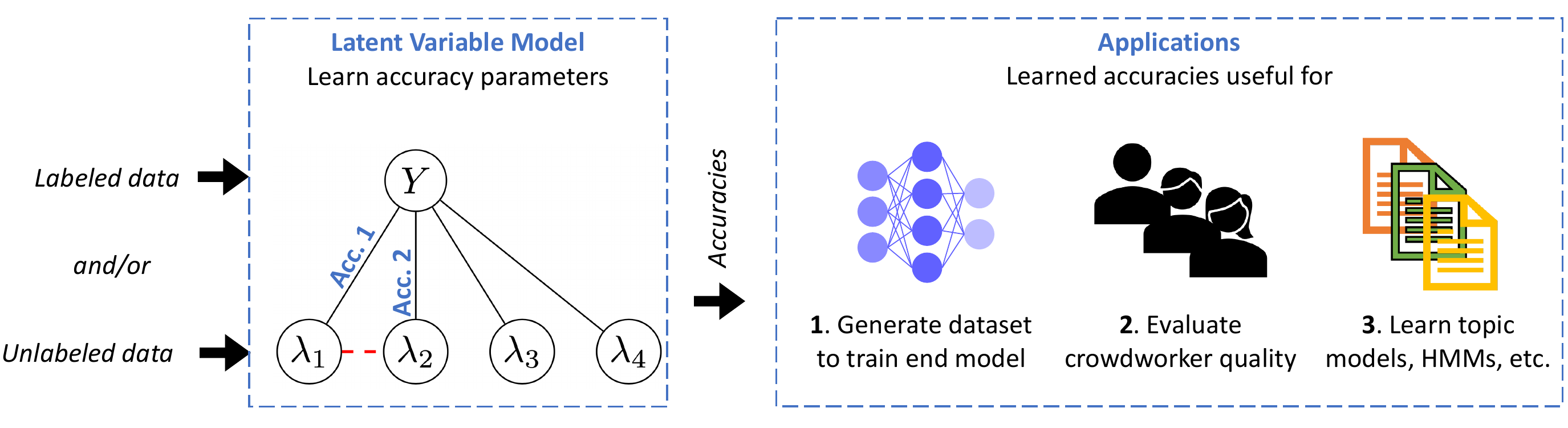}
    \caption{Latent variable methods (e.g., method-of-moments) can infer an unobserved variable ($Y$) by learning the accuracies of correlated sources ($\lf_1, \ldots, \lf_4$). This is done either from unlabeled data or directly from a small amount of labels; we seek a framework to explain the relative value of these choices. A major challenge are unmodeled dependencies between sources (red). Latent variable models have numerous applications.}
    \label{fig:systemdiagram}
\end{figure*}

Latent variable method-of-moments has been used to learn topic models \citep{anandkumar2014tensor} and parse trees \citep{Hsu12}, to evaluate crowdworkers \citep{joglekar2013evaluating}, and to generate training datasets \citep{Ratner19, fu2020fast}.
In these models, the  observable outputs of \textit{sources}
are used to infer the latent variable, the true label. 
The core challenge is to learn correlations (i.e., \emph{accuracies}) between the sources and the latent variable, which parametrize the model used to infer labels. 
To learn the accuracies, the method of moments, which relies on decomposing multiple observable statistics based on independence among sources, is commonly used to produce simple, closed-form estimators (in contrast to the EM algorithm).
When some labeled data is available, this setup also allows for the accuracy parameters to be directly estimated (\autoref{fig:systemdiagram}).
Thus, given a limited budget, a principle for choosing between labeled and unlabeled data is crucial; this motivates a theoretical framework to understand their relative value. \blfootnote{*Equal contribution. Contact: \texttt{mfchen@stanford.edu}.}

However, unmodeled dependencies among sources---a form of model misspecification---are common and yield inconsistent accuracy estimates, which in turn yield poor inferred labels. This affects the value of data produced with latent variable methods, so misspecification must play a role in our framework. While the question of how to analyze misspecification has been studied in classical statistics and semi-supervised learning, the focus is typically on estimator asymptotics \citep{kleijn2006, kleijn2012, yang2011effect}. Our main challenge, however, is to understand and address misspecification for both parameter estimation and label inference in the finite sample setting.

We theoretically analyze the two alternatives in latent variable methods. In both cases, the output is a conditional distribution for the latent variable given observable sources. For the inputs, the choices are either $n_L$ labeled and/or $n_U$ unlabeled points (and the outputs of $m$ sources per point). We examine misspecification in the form of unmodeled pairwise source dependencies, giving a generalization error analysis for method-of-moments latent variable model performance of the two alternatives. We present a bias-variance decomposition of the generalization error in Theorem \ref{thm:decomposition}, which for both the labeled and unlabeled data cases consists of (i) irreducible error, (ii) variance, and (iii) bias due to model misspecification at inference time. An important consequence is that for unlabeled data, we incur an additional (iv) standing bias due to inconsistent accuracy estimation that scales with the extent of misspecification, namely $\mathcal{O}(d/m)$ for $m$ sources and $d$ unmodeled dependencies among them. 

Next, we turn to correcting this standing misspecification bias. In particular, a simple median-based approach is able to produce consistent estimators given $d = o(m^2)$ and sufficient amounts of unlabeled data. Therefore, in certain cases, the bias $\mathcal{O}(d/m)$ from misspecification can be completely eliminated (Proposition \ref{prop:medians}). This creates three scenarios to consider for our framework: well-specified (i.e. no unmodeled dependencies), misspecified, and corrected settings.

We give two applications of our theoretical framework for the three scenarios. First, we develop a criterion, \textit{the data value ratio}, for choosing between labeled and unlabeled data, which is based on the relative minimum amount of labeled points needed to perform as well as a fixed amount of unlabeled points in terms of generalization error. For well-specified models, labeled data is a constant factor more valuable than unlabeled, but for misspecified models the value grows linearly in $d$ and $n_U$. Furthermore, corrected models are able to \emph{improve the value of unlabeled data}. Second, we combine the estimated parameters from the unlabeled approach, which are biased (and potentially inconsistent), with ones from the labeled approach---in certain cases outperforming either individually. We validate our framework with synthetic experiments, verify the scaling of our generalization error, data value ratio, and the performance of combined estimators across the three settings. 

An important real-world application of our results on latent variable methods are weak supervision (WS) frameworks, in particular data programming \citep{Ratner16}, used in a huge range of products and systems across industry and academia. WS frameworks construct datasets without ground-truth annotations by using unlabeled points and distant or weak sources, such as heuristics~\citep{gupta2014improved}, external knowledge bases~\citep{mintz2009distant,craven:ismb99,takamatsu:acl12}, or noisy crowd-sourced labels~\citep{karger2011iterative,dawid1979maximum}. Data programming encompasses many such prior approaches, and has shown excellent results with the method-of-moments approach \citep{fu2020fast}. We perform a real-world WS case study, where ground-truth source dependencies are not known, but sources are likely to be correlated to some extent. We observe that the relative value of labeled data is large, but the value of unlabeled data can be increased via our median correction. With equal amounts of data, the F1-score of the WS model for constructing datasets with a baseline unlabeled approach is 64.81 and the score of a labeled approach is 71.79, but the score of an unlabeled approach with correction is 68.12. This suggests that our theoretical explanation of the effects of misspecification can account for some of the behavior of models on real data.

 \section{Related Work}
 \vspace{-.5em}

\vspace{-0.5em}
\paragraph{Misspecification in Graphical Models}
The asymptotic effect of misspecification on parameter estimation is studied by \cite{kleijn2012}, extending the Bernstein-Von Mises theorem to cases where observed samples are not of the assumed parametric distribution. 
However, their main results do not fully extend to method-of-moments estimators. Other analyses of model misspecification directly examine distribution families, such as \cite{JogL15}'s lower bound on KL-separation of Gaussian graphical models. This bound is important for modeling errors in inference,
but it does not illustrate our additional error in parameter estimation. More generally, works on misspecification either study a particular class of techniques~\citep{Blasi13} or a particular model and propose repairs~\citep{Grunwald17}, while we compare effects on data types.

\vspace{-0.5em}
\paragraph{Structure Learning}
One way to reduce misspecification is to produce a more refined model. Graphical model structure learning aims to do so in both the supervised~\citep{Ravikumar11, Loh13} and unsupervised cases~\citep{Chandrasekaran12,Meng14, bach2017learning, varma2019learning}. However, these works present computational challenges, require (often strong) conditions to hold, and do not analyze the downstream impact of errors. Our approach instead focuses on understanding the impact of errors, but it is also applicable to partial recovery that often results from structure learning.

\paragraph{Semi-Supervised Learning} involves learning from a small set of labeled points and a larger set of unlabeled points~\citep{Chapelle09, zhu2009introduction}. There are several works on the relative value of labeled and unlabeled data in semi-supervised settings, typically requiring assumptions about the data distribution (e.g., cluster, manifold)~\citep{castelli1996relative, singh2008unlabeled, ben2008does}. In contrast, our work explicitly considers violations of model assumptions by quantifying how misspecification influences the relative value of labeled and unlabeled data. This direction has been explored by \cite{yang2011effect}, who study asymptotic performance degradation due to misspecification in semi-supervised maximum likelihood estimation; however, their results only describe the conditions under which degradation occurs.
We further bound the extent of degradation, handle the finite sample case, and propose a way to mitigate misspecification.

\paragraph{Valuation of Data} Several methods have been proposed for measuring the value of individual data points, often based on the Shapley value \citep{ghorbani2019data, jia2019towards}. Such valuations can then be used to inform what additional data should be acquired to improve a model. Our goal in valuing labeled versus unlabeled data is similar, but we do not value individual data points and instead compare performance of classifiers trained on labeled data, unlabeled data, and on both. 



\section{Background and Problem Setting} \label{sec:background}
\vspace{-.5em}
We start with background on latent variable models and introduce the model we analyze. We explain the two stages---learning accuracies and inferring labels---for both the labeled and unlabeled cases, and conclude with how to evaluate the model.

\begin{figure*}[t]
    \centering
    \hspace*{-.5cm}\includegraphics[width=.7\textwidth]{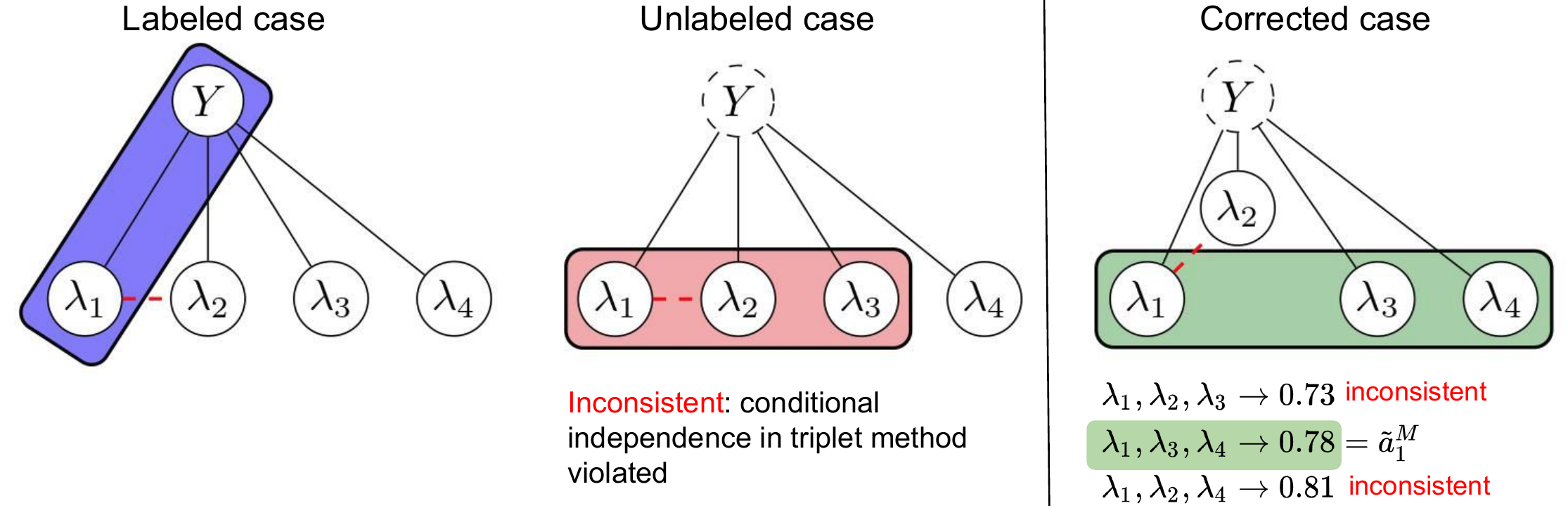}
    %
    \caption{Estimating accuracy parameter of $\lf_1$ with unmodeled dependency (red edge), leading to misspecification. 
    Boxes indicate observable variables used for accuracy estimation. 
    Left: model with access to label data. 
    The accuracy parameter is directly estimated using $Y$ and $\lf_1$ and is not impacted by the unmodeled dependency. 
    Center: latent model with unlabeled data and unobserved $Y$. The boxed triplet includes the unmodeled dependency, leading to inconsistent estimate $\widetilde{a}_1^U$. 
    Right: Corrected model using medians. The boxed triplet, chosen as the median estimate among ${m-1 \choose 2}$ triplets, excludes the dependency, yielding a consistent $\widetilde{a}_1^M$.}
    \label{fig:biases}
\end{figure*}




\paragraph{Setup} In latent variable models, a number of sources are observed and used to infer the latent variable.
The input is usually $n_U$ unlabeled data points, but in our setting we also consider a small \emph{labeled} dataset of $n_L$ samples. The output is a large, labeled dataset.

Let $X \in \X$ and $Y \in \mathcal{Y} = \{-1, 1\}$. We consider an unlabeled dataset $\bm{X}_U = \{x_i^U\}_{i = 1}^{n_U}$ and a labeled dataset $(\bm{X}_L, \bm{Y}_L) = \{(x_i^L, y_i^L) \}_{i = 1}^{n_L}$ drawn from the distribution of $(X, Y)$.
There are $m$ sources, each
outputting a value in $\{-1, +1\}$ via a deterministic function
$\lf_j: \X \rightarrow \mathcal{Y}$ for all
$j \in [m]$. 
Our goal is to use the outputs of $\bm{\lf}$, the vector of sources, to construct a model to infer $Y$.

To infer $Y$, we learn the model $\Pr(Y | \bm{\lf})$ to produce soft labels $\widetilde{y}_i := 2 \Pr(Y = 1 | \bm{\lf} = \bm{\lf}(x_i)) - 1 \in [-1, 1]$ for each $x_i$ by applying
the $m$ sources' functions to the datasets $\bm{X}_U$ or $(\bm{X}_L, \bm{Y}_L)$. The overall approach has two steps: (i) learn the latent variable model (using labeled or unlabeled data), and (ii) infer labels $\widetilde{y}_i$.



\paragraph{Theoretical model}
We pick a simple model that captures many latent variable model settings and still presents all of the challenges for comparing between the types of data. We assume an Ising model for $\Pr(Y, \bm{\lf})$; the only difference between the labeled and unlabeled setting is that $Y$ is latent in the latter. 
The dependency graph is $G = (V, E)$, where $V = Y \cup \bm{\lf}$ and $E$ consists of edges from $Y$ to the sources as well as the $d$ edges among the sources, $E_{\lf}$. 
The lack of an edge in $G$ between a pair of
variables indicates independence conditioned on a separator set~\citep{Lauritzen}, so the true distribution can be modeled as
\begin{align}
    \Pr(Y, \bm{\lf}; \theta) = \frac{1}{Z} \exp \Big(\theta_Y + \sum_{i = 1}^m \theta_i \lf_i Y + \sum_{\mathclap{(i, j) \in E_{\lf}}} \theta_{ij} \lf_i \lf_j \Big),
     \nonumber 
    \vspace{-0.5em}
\end{align}
with cumulant function $Z$ and the set of canonical parameters $\theta = \{\theta_Y, \theta_i \; \forall i, \theta_{ij} \; \forall (i , j) \in E_{\lf} \}$. For cleaner presentation, we assume $\theta \ge 0$ (no sources that disagree with others or $Y$ on average) and $E_{\lf}$ is sparse enough such that $\text{deg}(\lf_i) \le 2$ for all $\lf_i$ (each source is conditionally dependent on at most one other source). 


\paragraph{Inference} The label is computed using a naive Bayes approach that assumes all sources are conditionally independent with $E_{\lf} = \emptyset$:
\ifsinglecolumn
\begin{align}
    \widetilde{\Pr}(Y &= 1 | \bm{\lf} = \bm{\lf}(X)) = \frac{\prod_{i = 1}^m \widetilde{\Pr}(\lf_i = \lf_i(X) | Y = 1) \Pr(Y = 1)}{\hat{\Pr}(\bm{\lf} = \bm{\lf}(X))}, 
    \label{eq:inference}
\end{align}
\else
\begin{align}
    \widetilde{\Pr}(Y &= 1 | \lf = \lf(X)) \nonumber \\
    &= \frac{\prod_{i = 1}^m \widetilde{\Pr}(\lf_i = \lf_i(X) | Y = 1) \Pr(Y = 1)}{\hat{\Pr}(\lf = \lf(X))}, 
    \label{eq:inference}
\end{align}
\fi
where the class balance $\Pr(Y = 1)$ is assumed to be known, $\hat{\Pr}$ is an empirical probability
, and $\widetilde{\Pr}$ indicates an estimated probability resulting from the parameter estimation step described below. 
In practice, the conditional independence assumptions required for (\ref{eq:inference}) may not hold, but dependencies among sources are often unknown
. Therefore, conditional independence is assumed, and we may suffer from misspecification in inferring our probabilistic labels. 


\paragraph{Learning parameters with method-of-moments}
For the labeled dataset, we learn $\widetilde{\Pr}(\lf_i = \lf_i(X) | Y = 1)$ in (\ref{eq:inference}) directly from samples, as $Y$ is observed. 

For the unlabeled dataset, we use the method-of-moments estimator from \cite{fu2020fast} (described in Appendix \ref{sec:alg}), which relies on the property
that if $\lf_i \independent \lf_j | Y$, then $\lf_i Y \independent \lf_j Y$.
%
This implies that $\E{}{\lf_i Y} \cdot \E{}{\lf_j Y} = \E{}{\lf_i \lf_j Y^2} = \E{}{\lf_i \lf_j},$ which is directly estimable.
Define $a_i := \E{}{\lf_i Y}$ as the unknown \textit{accuracy} of $\lf_i$. If we can introduce a third $\lf_k$ that is conditionally independent of $\lf_i$ and $\lf_j$, we have a system of equations that can be solved using observable statistics. We use this \textit{triplet method} to recover these accuracies: we choose two $\lf_j$, $\lf_k$ at random for each $\lf_i$ and solve up to sign:
\begin{align}
    |\widetilde{a}_i^{(j, k)}| := \sqrt{\bigg| \frac{\Ehat{\lf_i \lf_j} \Ehat{\lf_i \lf_k}}{\Ehat{\lf_j \lf_k}} \bigg|},
    \label{eq:triplet}
\end{align}
where $\hat{\mathbb{E}}$ is an empirical estimate of the expectation.

We use the estimated $\widetilde{a}_i^U := \widetilde{a}_i^{(j, k)}$ to directly compute $\widetilde{\Pr}(\lf_i = \pm 1 | Y = 1)$ for \eqref{eq:inference}. However, random $\lf_j$ and $\lf_k$ may not satisfy conditional independence, and thus we incur error in estimating accuracies due to misspecification in a way unique to the unlabeled setting. Figure \ref{fig:biases} (left, center) describes how this misspecification impacts learning the accuracies in the labeled versus unlabeled cases. We aim to capture the role of this misspecification in our evaluation. 

\paragraph{Evaluating the model}
We define the model's generalization error as $R = \mathbb{E}_{(Y, \bm{\lf}), \N, \tau}[l(\widetilde{Y}, Y)]$ where expectation is taken over the distribution of $(Y, \bm{\lf})$, $\N$ (the random dataset used), and $\tau$ (the algorithmic randomness if applicable, i.e. the triplets used in method-of-moments). $l(\cdot, \cdot)$ here is the cross entropy loss,
$l(\widetilde{y}_i, y_i) = -\frac{1 + y_i}{2} \log \widetilde{\Pr}(Y = 1 | \bm{\lf} = \bm{\lf}(x_i))
- \frac{1 - y_i}{2} \log \widetilde{\Pr}(Y = -1 | \bm{\lf} = \bm{\lf}(x_i)).$
Let $R_U$ denote the error for the unlabeled dataset and $R_L$ for labeled.

\section{Theoretical Results} \label{sec:theory}
\vspace{-.5em}
We theoretically analyze the quality of the latent variable model, taking into account the impact of misspecification when using unlabeled versus labeled data. 
In Section \ref{subsec:decomp} we give 
an exact decomposition of the generalization error of the latent variable model, which demonstrates how misspecification is present in both the parameter learning and inference steps of the model when data is unlabeled and only present in the latter when data is labeled. In Section \ref{subsec:scaling}, we bound the generalization error using this framework to show how the unlabeled case has an additional standing bias of $\mathcal{O}(d/m)$. Given this standing bias, in Section \ref{sec:misspec} we introduce a simple method that can in some cases correct for dependency-based misspecification, and we analyze its impact on generalization error. In \ref{sec:synthetic4} we present synthetic experiments that verify our results.


\subsection{Decomposition Framework}
\label{subsec:decomp}
\vspace{-0.5em}

Our first result is a decomposition of the generalization error into four components. The last two components, the inference bias and parameter estimation error, reflect the role of misspecification.
\begin{theorem}
The generalization error has the following decomposition:
\ifsinglecolumn
\begin{align}
    &\E{}{l(\widetilde{Y}, Y)} = \underbrace{H(Y | \bm{\lf})}_{{\mathrm{Irreducible \; error}}} - \underbrace{\E{\N}{\KL(\Pr(\bm{\lf}) || \hat{\Pr}(\bm{\lf}))}}_{\mathrm{Observable \; sampling \; noise}} + \sum_{{(i, j) \in E_{\lf}}} \underbrace{I(\lf_i; \lf_j | Y)}_{\mathrm{Inference \; bias}} +  \sum_{i = 1}^m \underbrace{\E{Y, \N, \tau}{\KL(\mathrm{Pr}_{\lf_i | Y} || \widetilde{\Pr}_{\lf_i | Y })}}_{\mathrm{Parameter \;estimation \;error}} \nonumber,
\end{align}
\else 
\begin{align}
    &\E{}{l(\widetilde{Y}, Y)} = \underbrace{H(Y | \bm{\lf})}_{\mathclap{\mathrm{Irreducible \; error}}} - \underbrace{\E{\N}{\KL(\Pr(\bm{\lf}) || \hat{\Pr}(\bm{\lf}))}}_{\mathrm{Observable \; sampling \; noise}} + \nonumber \\ 
    &\sum_{\mathclap{(i, j) \in E_{\lf}}} \;\;\; \underbrace{I(\lf_i; \lf_j | Y)}_{\mathrm{Inference \; bias}} +  \sum_{i = 1}^m \underbrace{\E{Y, \N, \tau}{\KL(\mathrm{Pr}_{\lf_i | Y} || \widetilde{\Pr}_{\lf_i | Y })}}_{\mathrm{Parameter \;estimation \;error}} \nonumber,
\end{align}
\fi

where $I(\lf_i; \lf_j | Y)$ is the conditional mutual information between sources and $H(Y|\bm{\lf})$ is conditional entropy. $\Pr$ refers to the true data distribution, while $\hat{\mathrm{Pr}}$ and $\widetilde{\mathrm{Pr}}$ refer to the estimated probabilities in \eqref{eq:inference}.

\label{thm:decomposition}
\end{theorem}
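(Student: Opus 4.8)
The plan is to write the generalization error as an expected negative log-likelihood under the \emph{estimated} model and then peel off the four error terms one at a time, comparing against the true conditional $\Pr(Y\mid\bm{\lf})$ and the true naive-Bayes factorization; the whole argument is an exact algebraic identity, not a bound. Since the cross-entropy loss evaluated at the observed label satisfies $l(\widetilde{Y}, Y) = -\log\widetilde{\Pr}(Y\mid\bm{\lf}=\bm{\lf}(X))$, I would first substitute the inference rule \eqref{eq:inference} to obtain
\[
-\log\widetilde{\Pr}(Y\mid\bm{\lf}) = -\sum_{i=1}^m\log\widetilde{\Pr}(\lf_i\mid Y) - \log\Pr(Y) + \log\hat{\Pr}(\bm{\lf}).
\]
Adding and subtracting $\log\Pr(Y\mid\bm{\lf}) = \log\Pr(\bm{\lf}\mid Y) + \log\Pr(Y) - \log\Pr(\bm{\lf})$ cancels the true class-balance term (it is assumed known), and taking the expectation of $-\log\Pr(Y\mid\bm{\lf})$ over the true distribution of $(Y,\bm{\lf})$ produces the irreducible error $H(Y\mid\bm{\lf})$.

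Next I would handle the three residual terms separately. The empirical-marginal term is independent of $Y$ and of the algorithmic randomness, so averaging $\log\hat{\Pr}(\bm{\lf}) - \log\Pr(\bm{\lf})$ over $\bm{\lf}\sim\Pr$ gives $-\KL(\Pr(\bm{\lf})\,\|\,\hat{\Pr}(\bm{\lf}))$, and averaging over the dataset $\N$ yields the observable sampling-noise term with its negative sign. For the conditional-likelihood terms I insert the true per-source factors $\prod_i\Pr(\lf_i\mid Y)$ and split into $\sum_i\log\frac{\Pr(\lf_i\mid Y)}{\widetilde{\Pr}(\lf_i\mid Y)}$ and $\log\frac{\Pr(\bm{\lf}\mid Y)}{\prod_i\Pr(\lf_i\mid Y)}$. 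The first, after marginalizing $\bm{\lf}$ down to $\lf_i$ and using independence of the test point from $(\N,\tau)$, contributes $\sum_i\E{Y,\N,\tau}{\KL(\Pr_{\lf_i\mid Y}\,\|\,\widetilde{\Pr}_{\lf_i\mid Y})}$, the parameter-estimation error.

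The remaining piece, $\E{Y}{\KL\big(\Pr(\bm{\lf}\mid Y)\,\big\|\,\prod_i\Pr(\lf_i\mid Y)\big)}$, is the total correlation of the sources conditioned on $Y$, and the main obstacle is showing that it collapses to the pairwise sum $\sum_{(i,j)\in E_{\lf}} I(\lf_i;\lf_j\mid Y)$. This is false for a general dependency graph, so here I would invoke the structural assumptions: conditioned on $Y$ the source--source edges $E_{\lf}$ are the only dependencies, and the constraint $\deg(\lf_i)\le 2$ forces $E_{\lf}$ to be a matching, so the conditional graphical model over the sources is a disjoint union of isolated vertices and edges. Consequently $\Pr(\bm{\lf}\mid Y)$ factorizes over connected components, the isolated-source factors cancel in the log-ratio, and the expectation reduces exactly to one conditional mutual information per edge. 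Collecting the four pieces gives the claimed identity; the only subtleties to verify carefully are the bookkeeping of the nested expectations over the test draw versus $(\N,\tau)$, and that the factorization step genuinely relies on the matching structure rather than on mere pairwise sparsity.
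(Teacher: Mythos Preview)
Your proposal is correct and follows essentially the same route as the paper's proof: both start from the cross-entropy, extract $H(Y\mid\bm{\lf})$ by comparing to the true conditional, and then peel off the marginal term (yielding the negative KL), the per-source terms (yielding the parameter-estimation KL sum), and the factorization mismatch (yielding the conditional mutual information). The only cosmetic difference is that the paper introduces an explicit intermediate distribution $\bar{\Pr}$ (naive Bayes with the \emph{true} accuracies) and splits $\log(\widetilde{\Pr}/\Pr)$ as $\log(\widetilde{\Pr}/\bar{\Pr})+\log(\bar{\Pr}/\Pr)$, whereas you accomplish the same thing by ``inserting the true per-source factors''; and you are more explicit than the paper that the reduction of the conditional total correlation to $\sum_{(i,j)\in E_\lambda} I(\lf_i;\lf_j\mid Y)$ genuinely requires $E_\lambda$ to be a matching (the paper uses this silently when it writes the factorization difference directly as a sum over edges).
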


We now discuss each term above. The first two terms are independent of misspecification and are present in both the unlabeled and labeled cases:
\begin{itemize}
  \setlength\itemsep{0em}
    \item Irreducible error: an intrinsic property of the distribution of $(Y, \bm{\lf})$, always present in bias-variance decomposition. 
    \item 
    Observable sampling noise: the expected KL divergence between the true marginal distribution of the observable sources and the empirical distribution. Particular to our inference approach, it is a common notion of sampling noise~\citep{domingos2000unified, yang2020rethinking} and approaches $0$ asymptotically.
\end{itemize}

For the last two terms, misspecification plays a different role depending on the data type. 
\begin{itemize}
  \setlength\itemsep{0em}
    \item Inference bias: the conditional mutual information among dependent sources. Particular to our inference approach, it is the approximation error of using marginal singleton probabilities rather than their product distributions. Therefore, it represents the role of misspecification at the inference step \eqref{eq:inference} and is present for both data types. It is independent of parameter estimation method.
    \item Parameter estimation error: the difference between the true and estimated distribution of $\lf_i | Y$. For the labeled approach, this error corresponds to sampling noise and asymptotically approaches $0$. For the unlabeled approach, it directly depends on the estimation error of accuracies in \eqref{eq:triplet}. However, these estimators are biased, as are many method-of-moments approaches. Furthermore, misspecification makes the estimators inconsistent when $\lf_i, \lf_j,$ and $\lf_k$ used to produce $\widetilde{a}_i^{(j, k)}$ are not pairwise conditionally independent.
\end{itemize}

We now discuss in detail the scaling of these last two terms, which highlights the tradeoff between labeled and unlabeled data under misspecification.



\subsection{Scaling of the Generalization Error} 
\label{subsec:scaling}
\vspace{-0.5em}

We bound the terms in Theorem \ref{thm:decomposition} to understand the scaling of error due to misspecification in both the unlabeled and labeled cases. Since the irreducible error is always present, we bound \textit{excess generalization error}, defined as $R^{e}_L = R_L -  H(Y|\bm{\lf})$ for labeled data and similarly $R^{e}_U$ for unlabeled data.  We use $\B_I = \sum I(\lf_i; \lf_j | Y)$ for the inference bias in these bounds since it is independent of our two cases, and while it scales in $d$, it is simply a measurement over the true data distribution. We present upper bounds here and lower asymptotic bounds in 
Appendix \ref{subsec:supp_lowerbound}.

We first bound $R^{e}_L$. 
\begin{theorem}
Suppose that there are $|E_{\lf}| = d$ unmodeled dependencies. When we use the latent variable model described in section \ref{sec:background} with $n_L$ labeled samples,
\begin{align}
   R_L^{e} \le \frac{m}{2n_L} + \B_I + o(1/n_L).
\end{align}
\label{thm:labeled}
\end{theorem}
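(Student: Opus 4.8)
The plan is to start from the exact decomposition in Theorem \ref{thm:decomposition} and control only the term that survives in the labeled regime. Subtracting $H(Y\mid\bm{\lf})$ gives
\[
R_L^e = -\,\E{\N}{\KL(\Pr(\bm{\lf})\,\|\,\hat{\Pr}(\bm{\lf}))} + \B_I + \sum_{i=1}^m \E{Y,\N}{\KL(\mathrm{Pr}_{\lf_i\mid Y}\,\|\,\widetilde{\Pr}_{\lf_i\mid Y})},
\]
where there is no $\tau$ in the labeled case, since the accuracies are read off directly from the labels rather than from random triplets. Because KL divergence is nonnegative and the observable sampling noise enters with a minus sign, I would drop it to obtain $R_L^e \le \B_I + \sum_{i} \E{Y,\N}{\KL(\mathrm{Pr}_{\lf_i\mid Y}\,\|\,\widetilde{\Pr}_{\lf_i\mid Y})}$. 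It then remains to show the parameter-estimation term is at most $m/(2n_L) + o(1/n_L)$.

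Next I would reduce each summand to a single Bernoulli-estimation problem. The key structural observation is that the conditional law $\Pr(\bm{\lf}\mid Y)$ does not depend on the field $\theta_Y$ and is invariant under the global flip $(\bm{\lf},Y)\mapsto(-\bm{\lf},-Y)$, since every interaction $\theta_i\lf_i Y$ and $\theta_{ij}\lf_i\lf_j$ is. Marginalizing gives $\Pr(\lf_i=1\mid Y=1)=\Pr(\lf_i=-1\mid Y=-1)$, so each source's conditional is governed by a single agreement parameter $p_i := \Pr(\lf_i=Y) = (1+a_i)/2$, even in the presence of dependencies. Consequently the labeled estimator pools all $n_L$ points into one empirical agreement rate $\hat{p}_i$, and $\E{Y}{\KL(\mathrm{Pr}_{\lf_i\mid Y}\|\widetilde{\Pr}_{\lf_i\mid Y})} = \KL(p_i\|\hat{p}_i)$ because the divergence is identical for both values of $Y$. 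This pooling---rather than estimating the two conditionals on the roughly $n_L\Pr(Y=y)$ points each---is exactly what produces the factor $1/2$ and distinguishes $m/(2n_L)$ from $m/n_L$.

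Finally I would evaluate the expected single-parameter divergence asymptotically. A second-order Taylor expansion of $q\mapsto\KL(p_i\|q)$ about $q=p_i$ has vanishing zeroth- and first-order terms and second-order coefficient $\tfrac{1}{2}\cdot\frac{1}{p_i(1-p_i)}$; since $\hat{p}_i$ is unbiased with variance $p_i(1-p_i)/n_L$, the leading contribution is $\tfrac12\cdot\frac{1}{p_i(1-p_i)}\cdot\frac{p_i(1-p_i)}{n_L}=\frac{1}{2n_L}$, with cubic and higher terms contributing $o(1/n_L)$. Summing over the $m$ sources and adding back $\B_I$ yields the claimed bound.

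The main obstacle is making the remainder genuinely $o(1/n_L)$ uniformly, because $\KL(p_i\|\hat{p}_i)$ in the direction (true $\|$ estimate) is infinite whenever $\hat{p}_i\in\{0,1\}$, an event of positive probability at finite $n_L$. I would handle this either by working with the smoothed estimator already needed for the product in \eqref{eq:inference} to be well defined, or by truncating on the event that $\hat{p}_i$ lies in a fixed interior neighborhood of $p_i$ and bounding the complementary contribution by a Chernoff estimate that is exponentially small in $n_L$. The assumption $\theta\ge 0$ with bounded parameters keeps each $p_i$ strictly inside $(0,1)$, so the coefficient $1/(p_i(1-p_i))$ is uniformly bounded and the quadratic approximation is valid across all sources; controlling these boundary and remainder terms is the only delicate part of the argument.
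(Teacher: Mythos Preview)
Your proposal is correct and follows essentially the same route as the paper: drop the negative observable-sampling-noise term, use the Ising symmetry (the paper's Lemma~\ref{lemma:fs_symmetry}) to reduce each conditional to a single Bernoulli parameter, Taylor-expand the KL about the true value, and observe that the Fisher-information factor $1/(p_i(1-p_i))$ cancels exactly against the binomial variance to leave $1/(2n_L)$ per source. The paper carries out the same computation in the $a_i$-parametrization via Lemma~\ref{lemma:KL_estimation}, getting $\frac{1}{2(1-a_i^2)}\cdot\frac{1-a_i^2}{n_L}$; your $p_i$-parametrization is just the change of variables $p_i=(1+a_i)/2$. You are in fact more careful than the paper about the boundary event $\hat{p}_i\in\{0,1\}$ where the KL blows up---the paper's remainder lemma simply asserts the labeled third moment is $\mathcal{O}(1/n_L^2)$ without addressing this, so your truncation-plus-Chernoff remark is a genuine (if minor) improvement in rigor.
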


In this bound, $\frac{m}{2n_L}$ is an upper bound on parameter estimation error. It represents the sampling noise of $\widetilde{a}_i^L = \Ehat{\lf_i Y}$, which asymptotically approaches $0$. 
Therefore, the only standing bias is $\B_I$ due to inference approach. When the model is well-specified, the excess error is $\mathcal{O}(1/n_L)$, and thus for large $n_L$ our generated labels eventually follow the true distribution $\Pr(Y | \bm{\lf})$.  

We next present an upper bound on the excess generalization error in the unlabeled case. Define $\varepsilon_{ij} = \E{}{\lf_i \lf_j} - \E{}{\lf_i Y} \E{}{\lf_j Y}$ as the extent of misspecification on a single pair of sources, and let $0 \le \varepsilon_{\min} \le \varepsilon_{ij} \le \varepsilon_{\max}$ for all pairs $(i, j)$ under our model assumptions in section \ref{sec:background}. The exact value of $\varepsilon_{ij}$ in terms of canonical parameters is in Appendix \ref{subsec:supp_unlabeledthm}.

\begin{theorem}
Suppose that there are $|E_{\lf}| = d$ dependencies. When we use the latent variable model described in section \ref{sec:background} using $n_U$ unlabeled samples, 
\vspace{-0.4em}
\ifsinglecolumn
\begin{align}
    R_U^{e} \le  & \varepsilon_{\max} \left(\frac{c_1 d}{m} + \frac{c_2}{\sqrt{n_U}} + \frac{c_3d }{m n_U}\right) + \frac{c_4 m}{n_U} + \B_I + o(1/n_U), 
\end{align}
\else
\begin{align}
    R_U^{e} \le  & \varepsilon_{\max} \left(\frac{c_1 d}{m} + \frac{c_2}{\sqrt{n_U}} + \frac{c_3d }{m n_U}\right) \\
    + &\frac{c_4 m}{n_U} + \B_I + o(1/n_U), \nonumber
\end{align}
\fi
where $c_1, c_2, c_3,$ and $c_4$ are constants depending on the intrinsic quality of the sources (Appendix \ref{subsec:supp_unlabeledthm}). 

\label{thm:unlabeled}
\end{theorem}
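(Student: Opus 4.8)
The plan is to start from the exact decomposition in Theorem~\ref{thm:decomposition}. Writing $R_U^e = R_U - H(Y\mid\bm{\lf})$, that decomposition gives
\begin{align}
R_U^e = {}&-\E{\N}{\KL(\Pr(\bm{\lf})\,\|\,\hat{\Pr}(\bm{\lf}))} + \B_I \nonumber \\
&+ \sum_{i=1}^m \E{Y,\N,\tau}{\KL(\Pr_{\lf_i\mid Y}\,\|\,\widetilde{\Pr}_{\lf_i\mid Y})}. \nonumber
\end{align}
The observable sampling-noise term is a nonnegative KL divergence, so dropping it only weakens the bound, and the inference bias $\B_I$ is carried through unchanged; all of the work is in the parameter-estimation sum, where the triplet estimator \eqref{eq:triplet} enters. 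I would first reduce the per-source KL to a squared accuracy error: since $\Pr_{\lf_i\mid Y}$ and $\widetilde{\Pr}_{\lf_i\mid Y}$ are two-point distributions on $\{-1,+1\}$ determined by $a_i$ and $\widetilde{a}_i^U$ (the class balance being known), a second-order Taylor expansion of the KL at $a_i=\widetilde{a}_i^U$ yields $\KL(\Pr_{\lf_i\mid Y}\,\|\,\widetilde{\Pr}_{\lf_i\mid Y}) \le C\,(a_i-\widetilde{a}_i^U)^2$, with $C$ depending only on how far the accuracies lie from $\pm 1$. It then suffices to bound $\sum_i \E{\N,\tau}{(a_i-\widetilde{a}_i^U)^2}$, which I would split into a misspecification-bias part (over the population moments and the triplet randomness $\tau$) and a sampling-variance part (over $\N$).

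For the bias I would use the population identity $|\widetilde{a}_i^{(j,k)}|^2 = \E{}{\lf_i\lf_j}\,\E{}{\lf_i\lf_k}/\E{}{\lf_j\lf_k}$: when $\{i,j,k\}$ is pairwise conditionally independent every $\varepsilon$ vanishes and $|\widetilde{a}_i^{(j,k)}| = a_i$, whereas a single dependent pair shifts the squared estimate by a term linear in the relevant $\varepsilon$, so the accuracy bias is $O(\varepsilon_{\max})$ and is supported only on ``bad'' triplets. I would then bound the chance that a random $(j,k)$ makes $\{i,j,k\}$ bad: by the degree assumption each source has at most one dependency partner, so this probability is $O(|P_i|/m + d/m^2)$, with $|P_i|\le 1$ counting $i$'s partners. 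Summing over the $m$ sources and using $\sum_i |P_i| = 2d$ gives $\sum_i \Pr(\text{bad triplet for } i) = O(d/m)$, which yields the standing bias $\varepsilon_{\max} c_1 d/m$ that persists as $n_U \to \infty$.

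For the variance and cross terms I would Taylor-expand the nonlinear estimator \eqref{eq:triplet}---a square root of a ratio of empirical second moments---about the population moments. Each $\Ehat{\lf_a\lf_b}$ has its population value as mean and variance $O(1/n_U)$, so the leading term gives $O(1/n_U)$ variance per source and $O(m/n_U)$ after summation: this is the $c_4 m/n_U$ term, mirroring $m/(2n_L)$ in Theorem~\ref{thm:labeled}. The coupling between the $O(\varepsilon_{\max})$ population offset---present on the $O(d/m)$ fraction of bad triplets---and these $O(1/\sqrt{n_U})$ fluctuations generates the cross terms $\varepsilon_{\max} c_2/\sqrt{n_U}$ and $\varepsilon_{\max} c_3 d/(m n_U)$, with the higher-order remainder absorbed into $o(1/n_U)$.

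The main obstacle is this final step: the estimator is nonlinear through both a division and a square root, so a clean delta-method analysis must simultaneously control these operations away from their singularities using the intrinsic-quality bounds on the accuracies, track the interaction between the $\varepsilon$-sized offset and the $1/\sqrt{n_U}$-sized noise to recover exactly the stated cross-term orders, and carry the joint expectation over both the sample $\N$ and the triplet selection $\tau$ while ensuring the neglected remainder is genuinely $o(1/n_U)$ rather than merely bounded.
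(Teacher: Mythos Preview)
Your proposal is correct in outline and would prove the stated upper bound, but it diverges from the paper's argument at the key technical step.

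The paper does \emph{not} reduce the per-source KL to a global quadratic $\KL \le C\,(a_i-\widetilde a_i^U)^2$. Instead it Taylor-expands $\log\frac{1\pm\widetilde a_i}{1\pm a_i}$ around the point $\bar a_i := \E{\tau}{\bar a_i^{(j,k)}}$, the asymptotic estimator averaged over the triplet randomness (Lemma~\ref{lemma:KL_estimation}). This produces three separate pieces: a deterministic term in $a_i-\bar a_i$, a first-order term $\frac{a_i-\bar a_i}{1-\bar a_i^2}\,\E{\N,\tau}{\bar a_i-\widetilde a_i}$, and a second-order term in $\E{\N,\tau}{(\widetilde a_i-\bar a_i)^2}$. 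The standing bias $\varepsilon_{\max}c_1 d/m$ then comes from bounding the first piece \emph{linearly} via $\log(1+x)\le x$, together with a case split on whether $i$ participates in an edge (Lemma~\ref{lemma:accuracy_bias}); the $c_2/\sqrt{n_U}$ term arises because the paper bounds the first-order piece by $|a_i-\bar a_i|\cdot\E{}{|\bar a_i-\widetilde a_i|}$ with $|a_i-\bar a_i|=O(\varepsilon_{\max}/m)$ holding for \emph{every} $i$, not just on bad triplets. The remainder is handled by a separate third-moment lemma (Lemma~\ref{lemma:taylor}).

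Your route---quadratic KL bound followed by the MSE split $(a_i-\bar a_i^{(j,k)})+(\bar a_i^{(j,k)}-\widetilde a_i)$ conditional on the chosen triplet---is cleaner and actually yields a slightly \emph{tighter} leading term, of order $\varepsilon_{\max}^2\, d/m$ rather than $\varepsilon_{\max}\,d/m$, because your squared-bias term is $O(\varepsilon_{\max}^2)$ on bad triplets. Your cross term likewise comes out as $O(\varepsilon_{\max} d/(m\sqrt{n_U}))$ rather than the paper's $\varepsilon_{\max}/\sqrt{n_U}$; the latter is an artifact of bounding $\E{}{\bar a_i-\widetilde a_i}$ by $\E{}{|\bar a_i-\widetilde a_i|}$. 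Since $\varepsilon_{\max}\le 1$ and $d\le m$, your bound implies the theorem, so there is no error---but be aware that the specific $c_2/\sqrt{n_U}$ term you are trying to ``recover'' is not intrinsic to the problem; it appears in the paper because of the particular expansion point and the loose absolute-value step, not because of any genuine $\varepsilon$--noise coupling at that rate. Finally, your quadratic bound $\KL\le C(a_i-\widetilde a_i)^2$ needs $\widetilde a_i$ bounded away from $\pm 1$; the paper secures the analogous control via the $b_{\min},a_{\min},\bar a_{\max}$ assumptions, and you should invoke the same.
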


In this bound, we again have an observable sampling noise $\frac{c_4 m}{n_U}$, where the the constant term comes from estimating $\Ehat{\lf_i \lf_j}$ in \eqref{eq:triplet} rather than $\Ehat{\lf_i Y}$ in the labeled approach. However, here the parameter estimation error has an additional term $\B_{\mathrm{est}} :=\varepsilon_{\max} \left(\frac{c_1 d}{m} + \frac{c_2 }{\sqrt{n_U}} + \frac{c_3 d}{m n_U}\right)$ which depends on misspecification. Therefore, asymptotically the unlabeled approach has a standing bias bounded by $\frac{c_1 d \varepsilon_{\max}}{m} +\B_I$ in comparison to the labeled case's $\B_I$, and the finite-sample regime contributes additional sampling noise for the unlabeled approach that scales in $\varepsilon_{\max}$. In the case the model is well-specified $(d = 0, \varepsilon_{\max} = 0)$, the only term present is $\frac{c_4 m}{n_U}$, so our latent variable model would also approach the true distribution of $\Pr(Y | \bm{\lf})$ but at a different rate than the labeled case. 

\vspace{-0.5em}
\paragraph{Partial Recovery} Our results hold almost exactly for the partial recovery case, where $d'$ out of $d$ dependencies are recovered (e.g. via structure learning) and our method in \eqref{eq:triplet} avoids choosing known pairs of dependent sources. In particular, the additional estimation error now scales at rate $\frac{(d - d') \varepsilon_{\max}  }{m - 2d'}$.

\vspace{-0.5em}
\subsection{Correcting for misspecification}\label{sec:misspec}
\vspace{-0.5em}
How can we reduce the penalty for dealing with such unrecovered dependencies? 
We examine how to reduce misspecification for our estimator described in \eqref{eq:triplet}. Our correction can be applied to other method-of-moments approaches \citep{anandkumar12, chaganty2014estimating}, discussed in Appendix \ref{subsec:supp_mom}.


In our estimation approach, if there exists an $\lf_i$ such that there are no $\lf_j, \lf_k$ where all three sources are pairwise conditionally independent given $Y$, then it is not possible to learn  $a_i$. In less demanding cases, 
we suggest an alternative approach based on \emph{medians}. Recall that misspecification impacts accuracy estimation error because random triplets that violate pairwise conditional independence are selected to compute our $\widetilde{a}_i^U$. To reduce this impact, we estimate each $a_i$ by computing the median accuracy over all pairs $\lf_j, \lf_k$ using \eqref{eq:triplet} a total of ${m - 1 \choose 2}$ times, as shown in Figure \ref{fig:biases} (right). The intuition behind this approach is that inconsistent estimates produced by dependent sources have more extreme values and thus may not impact the median.
\begin{proposition}
 Let $\widetilde{a}_i^M = \mathrm{median}(\{\widetilde{a}_i^{(j, k)} \; \forall \; j, k \neq i \})$. Then $\widetilde{a}_i^M$ is not affected by misspecification and is thus a consistent estimator if $m > 5$, $d < \frac{(m - 1)(m - 2)}{4}$, and $n_U \ge n_0$, where $n_0$ is $\omega(1/\varepsilon_{\min}^2)$.

Refer to $\rho_{n_U} = \max_i \E{}{(\widetilde{a}_i^M - a_i)^2}$ as the maximum MSE for $\widetilde{a}^M$. Under these conditions, the excess generalization error $R_M^e$ from using $n_U$ unlabeled samples and a corrected model is, for constant $c_\rho$,
\begin{align}
    R_M^{e} \le c_\rho m  \rho_{n_U} + \B_I + o(1/n_U).
\end{align}
\label{prop:medians}
\end{proposition}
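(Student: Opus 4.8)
The plan is to establish the two claims separately: first that the median estimator $\widetilde{a}_i^M$ is consistent, so the standing $\mathcal{O}(d/m)$ misspecification bias of Theorem~\ref{thm:unlabeled} vanishes, and then that the stated generalization bound follows by inserting its mean-squared error $\rho_{n_U}$ into the decomposition of Theorem~\ref{thm:decomposition}. For consistency I would first characterize which triplet estimates are corrupted: $\widetilde{a}_i^{(j,k)}$ from \eqref{eq:triplet} is consistent for $a_i$ if and only if the triple $\{\lf_i,\lf_j,\lf_k\}$ contains no edge of $E_\lf$, since only then do all three pairwise conditional independences behind $a_i^2 = \E{}{\lf_i\lf_j}\E{}{\lf_i\lf_k}/\E{}{\lf_j\lf_k}$ hold. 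I would then count the ``good'' (edge-free) triples among the $\binom{m-1}{2}$ candidates. Using the degree assumption (so $E_\lf$ is a matching), the worst case is a source $\lf_i$ lying in a dependency edge $(i,i')$: the $m-2$ triples through $\lf_{i'}$, together with the at most $d-1$ triples whose \emph{other} pair is an edge, are spoiled, giving at most $m+d-3$ bad triples, whereas an unmatched source has at most $d$. The hypotheses $m>5$ and $d < (m-1)(m-2)/4 = \tfrac12\binom{m-1}{2}$ are precisely what force the good triples to be a strict majority; checking this inequality against the matched-source count is the delicate bookkeeping step.

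A strict majority is not enough in finite samples, so next I would establish a quantitative \emph{separation}. Substituting $\E{}{\lf_i\lf_j}=a_i a_j + \varepsilon_{ij}$ into \eqref{eq:triplet} shows that a corrupted estimate converges to $a_i\sqrt{1+\varepsilon_{ij}/(a_i a_j)} = a_i + \Omega(\varepsilon_{ij}) \ge a_i + \Omega(\varepsilon_{\min})$, i.e. every bad estimate is bounded away from $a_i$ by $\Omega(\varepsilon_{\min})$ in the limit. Meanwhile a standard concentration argument (Chebyshev or Bernstein on the empirical second moments $\Ehat{\lf_i\lf_j}$, propagated through the smooth map in \eqref{eq:triplet}) gives that each good estimate lies within $\mathcal{O}(1/\sqrt{n_U})$ of $a_i$ with high probability. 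Choosing $n_U \ge n_0 = \omega(1/\varepsilon_{\min}^2)$ makes $\mathcal{O}(1/\sqrt{n_U}) = o(\varepsilon_{\min})$, so the good estimates cluster inside an interval around $a_i$ that excludes all bad estimates; since a strict majority of the values land in this interval, the median does too, and letting the interval shrink as $n_U\to\infty$ yields consistency and an MSE $\rho_{n_U}=\mathcal{O}(1/n_U)$ with no misspecification term.

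With $\rho_{n_U}$ in hand the generalization bound is comparatively routine. I would start from Theorem~\ref{thm:decomposition} applied to $R_M^e$: the irreducible error $H(Y|\bm{\lf})$ is subtracted off, the observable sampling noise equals $-\E{\N}{\KL(\Pr(\bm{\lf}) || \hat{\Pr}(\bm{\lf}))}\le 0$ and is dropped for an upper bound, and the inference bias is unchanged at $\B_I$ because it is a property of the true distribution and of the naive-Bayes rule \eqref{eq:inference}, independent of how the accuracies are estimated. The only term to reshape is the parameter-estimation error $\sum_i \E{Y,\N,\tau}{\KL(\mathrm{Pr}_{\lf_i|Y} || \widetilde{\Pr}_{\lf_i|Y})}$. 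Writing both conditionals as Bernoullis parametrized by $a_i$ and $\widetilde{a}_i^M$ (the class balance is known), a second-order Taylor expansion of the KL bounds each summand by $c\,\E{}{(\widetilde{a}_i^M-a_i)^2}+o(1/n_U) \le c\,\rho_{n_U}+o(1/n_U)$, where $c$ is a bounded source-quality (Fisher-information) constant and the remainder is controlled since $\rho_{n_U}=\mathcal{O}(1/n_U)$. Summing over the $m$ sources and folding constants into $c_\rho$ gives $R_M^e \le c_\rho m\,\rho_{n_U} + \B_I + o(1/n_U)$.

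I expect the main obstacle to be the first claim: coupling the combinatorial majority count with the $\Omega(\varepsilon_{\min})$ separation so that the $\omega(1/\varepsilon_{\min}^2)$ sample threshold provably places the median among the good estimates \emph{uniformly over all sources}, where the matched-source fan of $m-2$ corrupted triples is the tight configuration that pins down the conditions $m>5$ and $d < (m-1)(m-2)/4$. Once consistency and the $\rho_{n_U}$ bound are secured, the error decomposition of Theorem~\ref{thm:decomposition} delivers the rest essentially mechanically.
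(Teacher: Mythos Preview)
Your overall strategy matches the paper's: count corrupted versus uncorrupted triplets to show the median lands on a consistent estimate, invoke an $\Omega(\varepsilon_{\min})$ separation together with $\mathcal{O}(1/\sqrt{n_U})$ sampling noise to handle finite samples, and then feed $\rho_{n_U}$ into the decomposition of Theorem~\ref{thm:decomposition} via a second-order Taylor expansion of the per-source KL. The excess-risk half of your argument is essentially identical to the paper's and is fine.

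There is, however, a real gap in your consistency argument. You claim that the hypotheses $m>5$ and $d<\tfrac{(m-1)(m-2)}{4}$ force the good triples to be a \emph{strict majority}, and then conclude that the median must lie in the good cluster. But this is false: for a matched source the bad count is $(m-2)+(d-1)$, and e.g.\ with $m=6$, $d=3$ (which satisfies both hypotheses, and the matching constraint) one gets $6$ bad triples out of $\binom{5}{2}=10$, so the good triples are a \emph{minority}. Your interval argument (``a strict majority land in the interval, hence so does the median'') therefore does not go through under the stated conditions.

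The fix, which is what the paper actually does, is to exploit the \emph{sign structure} of the corruption rather than just its presence. You already computed one direction: an edge incident to $i$ makes $\widetilde{a}_i^{(j,k)}$ an \emph{overestimate}. The case you omitted, an edge $(j,k)$ not touching $i$, inflates the denominator in \eqref{eq:triplet} and yields an \emph{underestimate}. Asymptotically the ordered values are therefore (underestimates) $<$ (good $=a_i$) $<$ (overestimates). For the median to fall among the good values one only needs each one-sided bad count to be below half: $m-2<\tfrac12\binom{m-1}{2}$ (giving $m>5$) and $d-1<\tfrac12\binom{m-1}{2}$ (giving $d<\tfrac{(m-1)(m-2)}{4}$), with the unmatched-source case needing only $d<\tfrac{(m-1)(m-2)}{4}$. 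This two-sided bookkeeping, not a majority count, is what pins down the stated conditions. Your finite-sample separation argument then transfers directly once you note the $\Omega(\varepsilon_{\min})$ gap holds on \emph{both} sides.
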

\vspace{-2em}
While $\rho$ can be analyzed in detail as a variant of a medians-of-means estimator, we stress that $\lim_{n_U \rightarrow \infty} \rho_{n_U} = 0$. Thus the standing bias of order $\mathcal{O}(d/m)$ due to misspecification can be eliminated. This reduction has many implications for the value of labeled vs. unlabeled data in \textit{corrected settings}. 



\vspace{-0.5em}
\subsection{Synthetic Experiments} \label{sec:synthetic4}
\vspace{-0.5em}
We validate the fundamental principles of our theoretical framework using synthetic data. We measure the excess generalization error vs. $\log(n)$ in the well-specified, misspecified and corrected settings on synthetic data with $m=10$ sources, accuracies drawn uniformly from $[.55, .75]$ and extent of misspecification fixed at $\varepsilon=0.1$. To approximate expected excess generalization error for each $n$, we average results over $1000$ samples. A more detailed protocol for synthetic experiments is available in Appendix \ref{subsec:supp_synthetics}.

Our results are in \autoref{fig:gen_err}. With no misspecification ($d=0$) the labeled and unlabeled estimators both tend towards zero in the two graphs. Under misspecification ($d=5$), we see that learning from unlabeled data results in an additional standing bias that parallels $\B_{\mathrm{est}}$. Median aggregation reduces this bias and results in error converging to roughly similar values, paralleling $\B_I$, in both the unlabeled and labeled cases in the two graphs. These observations are consistent with our theoretical findings.

\begin{figure}
    \centering
    \ifsinglecolumn
    \includegraphics[width=.75\textwidth]{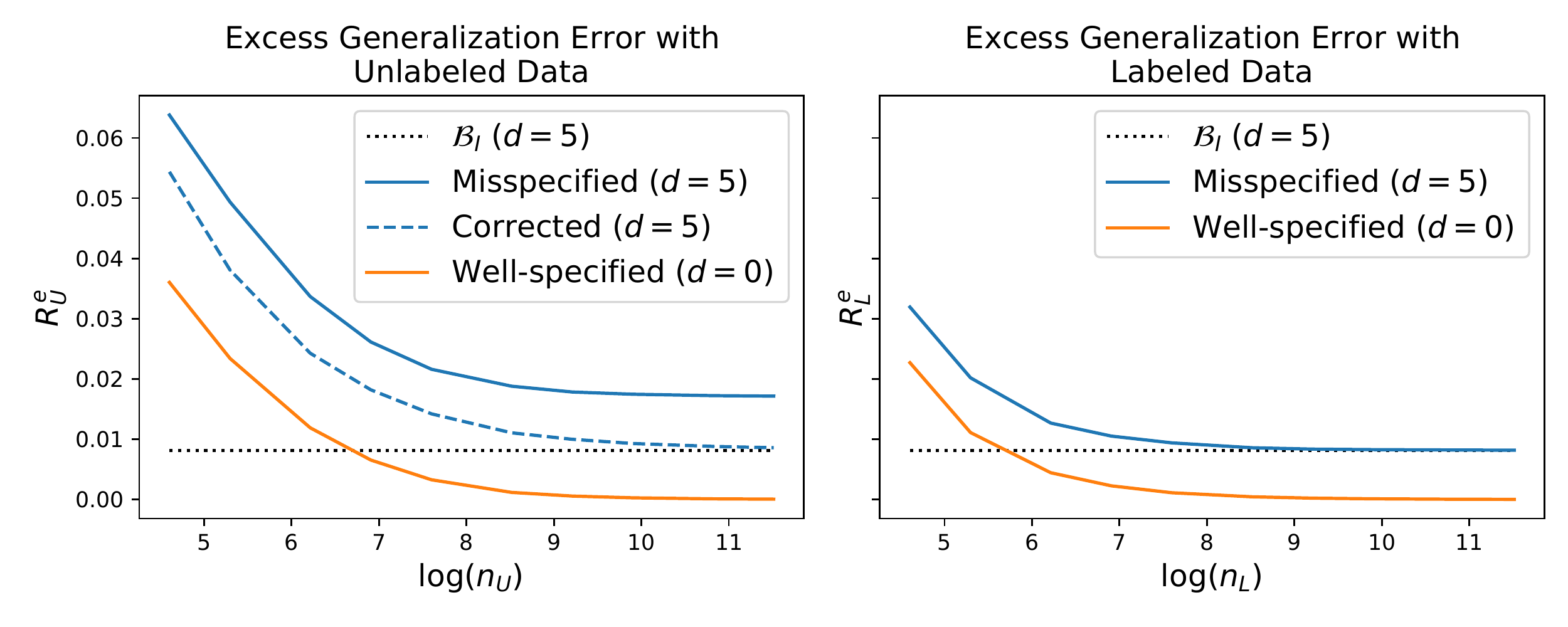}
    \else
    \includegraphics[width=.48\textwidth]{pdf_figures/generalization_error.pdf}
    \fi
    \caption{Excess generalization error vs. $\log(n)$ with different estimators for synthetic data. Left: comparison of unlabeled data performance under the three discussed settings. Right: comparison of labeled data performance for well-specified and misspecified models. A dashed line repesenting an empirical ``$\B_I$'' suggests how inference bias is present in both data cases. 
    }
    \label{fig:gen_err}
\end{figure}
\section{Applications}\label{sec:applications}
\vspace{-.5em}
Based on our generalization error framework, we now have a rigorous way to analyze misspecification in latent variable models. We examine two practical applications of our theoretical results in three settings---well-specified, misspecified, and corrected:
\begin{itemize}
    \item \textbf{Understanding the value of labeled data:} we address our motivating question about the value of labeled data–--is a few labeled samples or many unlabeled samples better? This decision varies per setting, depending on the misspecification parameters ($d$, $\varepsilon_{\max}$), and $n_U$ versus $n_L$.
    \item \textbf{Combining labeled and unlabeled data:} we show how simple linear combinations of the estimators can improve generalization error bounds over using one or the other. We also suggest a James-Stein type estimator from~\cite{GreenStrawderman2001}, which combines an unbiased estimator with biased information, to easily determine the weights of the linear combination. 
\end{itemize}

We extend our upper bounds on the decomposition in Theorem \ref{thm:decomposition} to these two applications of our framework, presenting theoretical results first and then verifying our results on synthetic data. In Appendix \ref{subsec:supp_lowerbound}, we comment on how lower bounds can be obtained and used for similar analysis as an avenue for future work.

\subsection{Understanding the value of labeled data}
\vspace{-0.5em}
We use our analysis from Section \ref{subsec:scaling} to develop a criterion for deciding between labeled and unlabeled points. Compute
\[f(n_U) = \min_{n_L \in \mathbb{N}} \text{ s.t. } R_L^{e}(n_L) \le R_U^{e}(n_U),\]
and define
$V(n_U) = {n_U}/f(n_U)$
to be the \textit{data value ratio}. The intuitive idea here is to compare, for some amount of unlabeled data $n_U$, what factor less labeled data we would require to produce an equivalent error bound. We consider an approximation of the data value ratio $\widetilde{V}(n_U)$ based on our upper bounds for excess generalization error. We examine the differences in $\widetilde{V}(n_U)$ for our three aforementioned settings:
\begin{itemize}
    \item Well-specified setting: comparing excess risk when $d = 0$ and $\varepsilon_{\max} = 0$ reduces to examining $\frac{m}{2n_L}$ and $\frac{c_4 m}{n_U}$. Thus $\widetilde{V}(n_U) = 2c_4$ and our framework suggests that labeled data is only a \textit{constant factor} more beneficial than unlabeled data.
    \item Misspecified setting: $\widetilde{V}(n_U)$ will capture the tradeoff between $\frac{m}{2n_L}$ and $\B_{\mathrm{est}} + \frac{c_4 m }{n_U}$. We find that $\widetilde{V}(n_U) = 2 \varepsilon_{\max} \Big(\frac{c_1 dn_U}{m} +$ $\frac{c_2 \sqrt{n_U}}{m} + \frac{c_3 d}{m^2} \Big) + 2 c_4$. That is, the value of labeled data \textit{increases linearly in the amount of unlabeled data and misspecification} due to the standing bias in the generalization error for the unlabeled approach.  
    \item Corrected setting: under our conditions from Proposition \ref{prop:medians}, we examine the difference between $\frac{m}{2n_L}$ and $ c_\rho m \rho_{n_
    U}$, and thus $\widetilde{V}(n_U) = 2n_U c_{\rho} \rho_{n_U}$. Since $\rho_{n_U}$ converges to $0$, $\widetilde{V}(n_U)$ is sublinear in $n_U$, showing that the \textit{corrected model increases the relative value of unlabeled data.}
\end{itemize}

\vspace{-0.5em}
\paragraph{Synthetic Experiments} We measure $V(n_U)$ in well-specified, misspecified and corrected settings on synthetic data with the same setup as discussed in \ref{sec:synthetic4}. Our detailed protocol for approximating $V(n_U)$ is in Appendix \ref{subsec:supp_synthetics}.

We present the results in \autoref{fig:data_value_ratio}. In the well-specified case ($d = 0$), $V(n)$ is small (less than $5$) and roughly constant across $n$. Under misspecification however, the data value ratio grows with both $d$ and $n$ albeit much more slowly for the corrected setting, aligning with our theoretical findings.

\begin{figure}
    \centering
    \ifsinglecolumn
    \includegraphics[width=.5\textwidth]{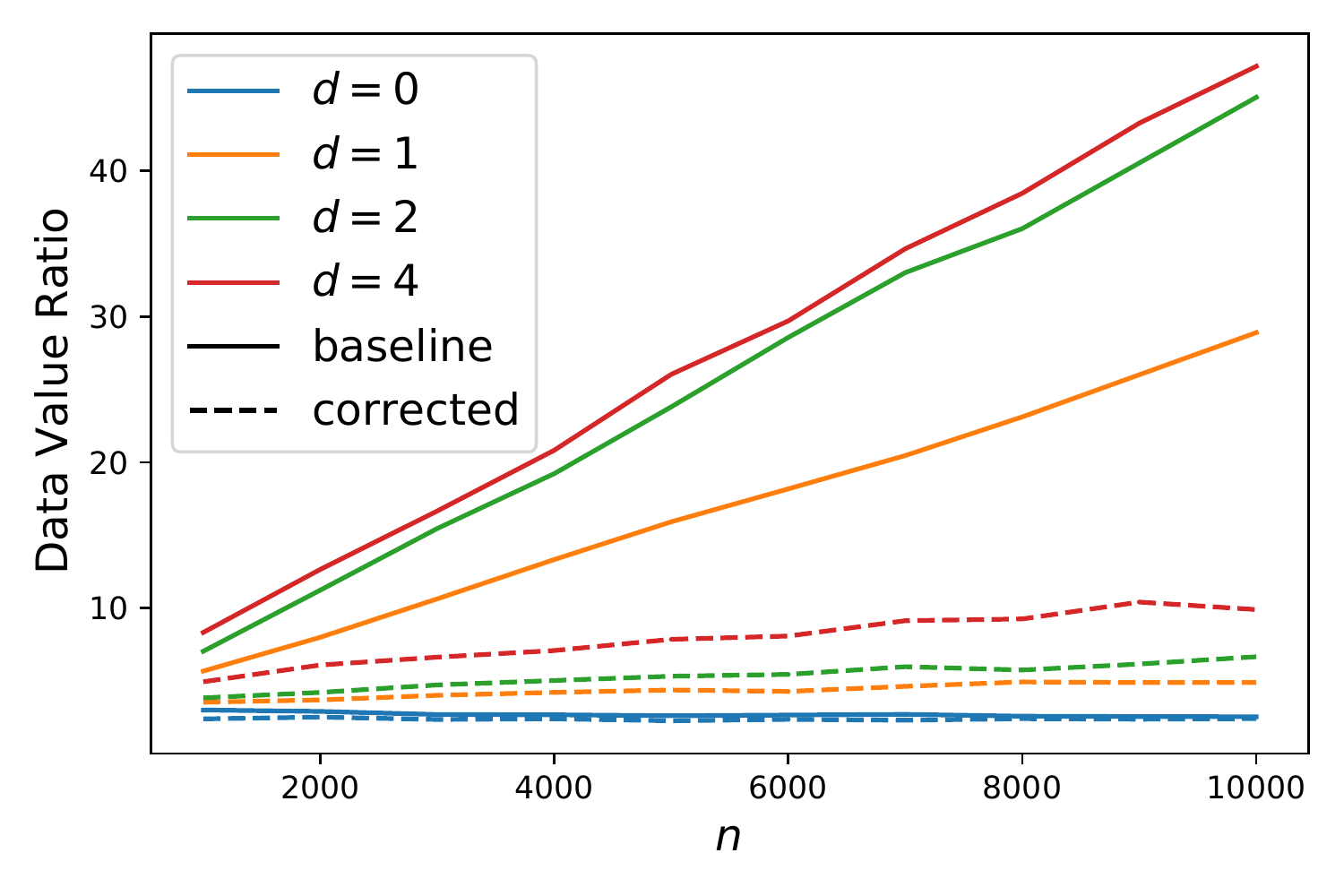}
    \else
    \includegraphics[width=.3\textwidth]{pdf_figures/data_value_ratios.pdf}
    \fi
    \caption{Data value ratio vs. $n$, using both the standard method-of-moments approach and the corrected approach, which aggregates results over triplets using medians. Note that $d=0$ represents the well-specified setting. 
    }
    \label{fig:data_value_ratio}
\end{figure}

\subsection{Combining labeled and unlabeled data}
\vspace{-0.5em}
While we now have a criterion to choose between datasets, how do we combine information from both? We examine ways to combine the accuracy parameters, namely $\widetilde{a}^U$ as defined in \eqref{eq:triplet} for unlabeled data and an equivalent $\widetilde{a}^L := \Ehat{\bm{\lf} Y}$ for labeled data. Recall that $\widetilde{a}^L$ is unbiased, while $\widetilde{a}^U$ is both biased and inconsistent if not corrected.

First, we consider a simple linear combination, $a^{\mathrm{lin}}(\alpha) = \alpha \widetilde{a}^U + (1 - \alpha) \widetilde{a}^L$ for some weight $\alpha \in [0, 1]$. Using our framework in Theorem \ref{thm:decomposition}, we can derive similar upper bounds on excess generalization error when the estimator is $a^{\mathrm{lin}}(\alpha)$. We summarize our findings across the three settings below, where we consider $\alpha \widetilde{a}^M + (1 - \alpha) \widetilde{a}^L$ for the corrected setting. 
\begin{itemize}
    \item Well-specified setting: the upper bound on excess generalization error using $a^{\mathrm{lin}}$, ignoring $\B_I$ and lower order terms, is $\alpha^2 \frac{c_4 m}{n_U} + (1 - \alpha)^2 \frac{m}{2n_L}$. One can easily verify that there exists an $\alpha \in (0, 1)$ that minimizes this upper bound. Since $n_U$ is usually much larger than $n_L$, plugging in this optimal $\alpha$ shows that this new upper bound is roughly of the same order as the unlabeled case.
    \item Misspecified setting: the upper bound is a cubic polynomial in $\alpha$. We find that the standing bias results in the optimal $\alpha$ weighting the labeled data's estimator more. This suggests that a combined estimator can yield an upper bound much smaller than that for the unlabeled case. 
    \item Corrected setting: the upper bound now consists of $\alpha^2 c_{\rho} m \rho_{n_U} + (1 - \alpha)^2 \frac{m}{2n_L}$. As a function of $\alpha$, this differs from the well-specified setting's expression only in constant coefficients, so this again suggests an optimal $\alpha \in (0, 1)$ and performance roughly similar to the unlabeled case.
\end{itemize}

In practice, we do not know the exact $\alpha$ that optimizes generalization error. However, there is vast literature on combined estimators that dominate the MLE estimator $\widetilde{a}^L$. In particular, we suggest using an approach from~\cite{GreenStrawderman2001}, who propose a way of setting $\alpha$ given knowledge of an unbiased estimator with biased information. 

\vspace{-0.5em}
\paragraph{Synthetic Experiments} We investigate the empirical performance of estimators which combine labeled and unlabeled data in well-specified, misspecified and corrected settings. We measure both the error when using the fine-tuned $\alpha$ and the more practical approach of \cite{GreenStrawderman2001}. We fix $n_U=1000$ and vary $n_L$ across a range of smaller values, aligning with the assumption that many more unlabeled than labeled points are typically available. Our results are in Figure \ref{fig:combined}. In the well-specified setting, the combined estimators perform roughly the same as just $\widetilde{a}^U$, matching up with our theoretical observations for large $n_U$. In the misspecified setting, both combined estimators result in lower excess risk than either estimator individually, and as $n_L$ increases, the labeled estimator curve approaches those of the combined estimators, suggesting that the weight on $\widetilde{a}^L$ increases as more labeled data becomes available. Lastly, in the corrected setting both combined estimators perform better than $\widetilde{a}^U$, but not by much.
The weights $\alpha$ are reported in Appendix \ref{subsec:supp_synthetics}.
The optimal weights for the well-specified and corrected settings are higher (i.e. more weight on the unlabeled estimator) than the misspecified setting, and these weights decrease with $n_L$.



\begin{figure}
    \centering
    \ifsinglecolumn
    \includegraphics[width=.75\textwidth]{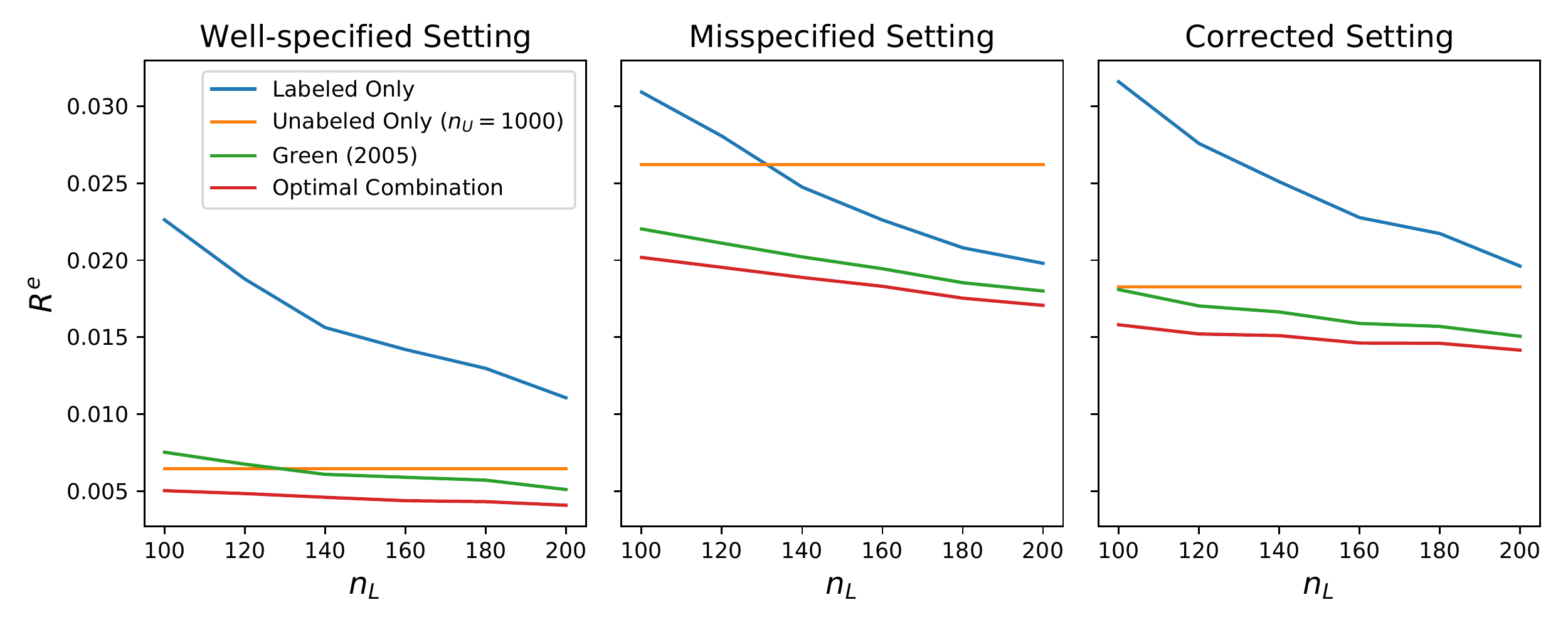}
    \else
    \includegraphics[width=.48\textwidth]{pdf_figures/combined.pdf}
    \fi
    \caption{Excess generalization error for an optimally weighted combination of labeled and unlabeled estimators, and a combination weighted according to \cite{GreenStrawderman2001} across the well-specified (left), misspecified (center), and corrected (right) settings. The number of unlabeled points is fixed at $n_U=1000$. 
    }
    \label{fig:combined}
\end{figure}
\section{Real-World Case Study: Weak Supervision} \label{sec:exp}
\vspace{-.5em}





We validate our findings on real-world weak supervision dataset. Unlike our theoretical setting where we limit the number of dependencies $d$ for simplicity, with real-world data we anticipate many small dependencies which cannot be completely corrected by the medians approach. We seek to answer the following key questions.

\begin{itemize}
    \item What is the standing parameter estimation bias due to misspecification? To what extent does the corrected estimator, which only addresses unmodeled source dependencies, mitigate this bias?
    
    \item What is the data value ratio for misspecified and corrected settings? 
    
    \item Can a combined estimator with access to a small amount of labeled data provide substantial benefits over using only unlabeled data?
\end{itemize}
\vspace{-0.5em}

\paragraph{Protocol} Our real-world task is the sentiment analysis task of determining whether IMDB movie reviews are positive or negative \citep{maas2011learning}. The dataset contains 50K movie reviews, which we split into a training set of 40K reviews and a test set of 10K reviews. Our weak supervision sources are simple heuristics that vote ``yes'' when positive words appear and ``no'' when negative words appear. We provide further details in Appendix \ref{subsec:supp_realdata}.


Unlike our theoretical model, where we assume that each source has a single accuracy parameter, we find that real-world sources have complex dependencies and can be better modeled with \textit{class conditional} accuracies. The method-of-moments approach in this setting results in a quadratic version of the triplet method \citep{fu2020fast}, the details of which we discuss in Appendix \ref{subsec:supp_mom}.
We use this version for our real-world case study, for which the same principles from our theoretical framework apply.

\vspace{-0.5em}
\paragraph{Standing bias and correction} For our first real-world experiment, we measure the standing parameter estimation bias when learning from unlabeled data (paralleling $\mathcal{B}_\text{est}$), and measure the decrease in  bias when using a corrected estimator. 
We compute the test cross entropy loss for a labeled model, a baseline unlabeled and an unlabeled model with correction while varying $n$ and report results in \autoref{fig:real_biases_data_value_ratio} (left, bottom). Losses appear to converge, with a large gap between the labeled and unlabeled models and a smaller gap between the labeled model and the unlabeled model with correction. These gaps in loss are reflected by gaps in F1-scores, computed using a threshold of $.5$.

\begin{figure}
    \centering
    \ifsinglecolumn
    \begin{subfigure}{.75\textwidth}
      \centering
      \includegraphics[width=\linewidth]{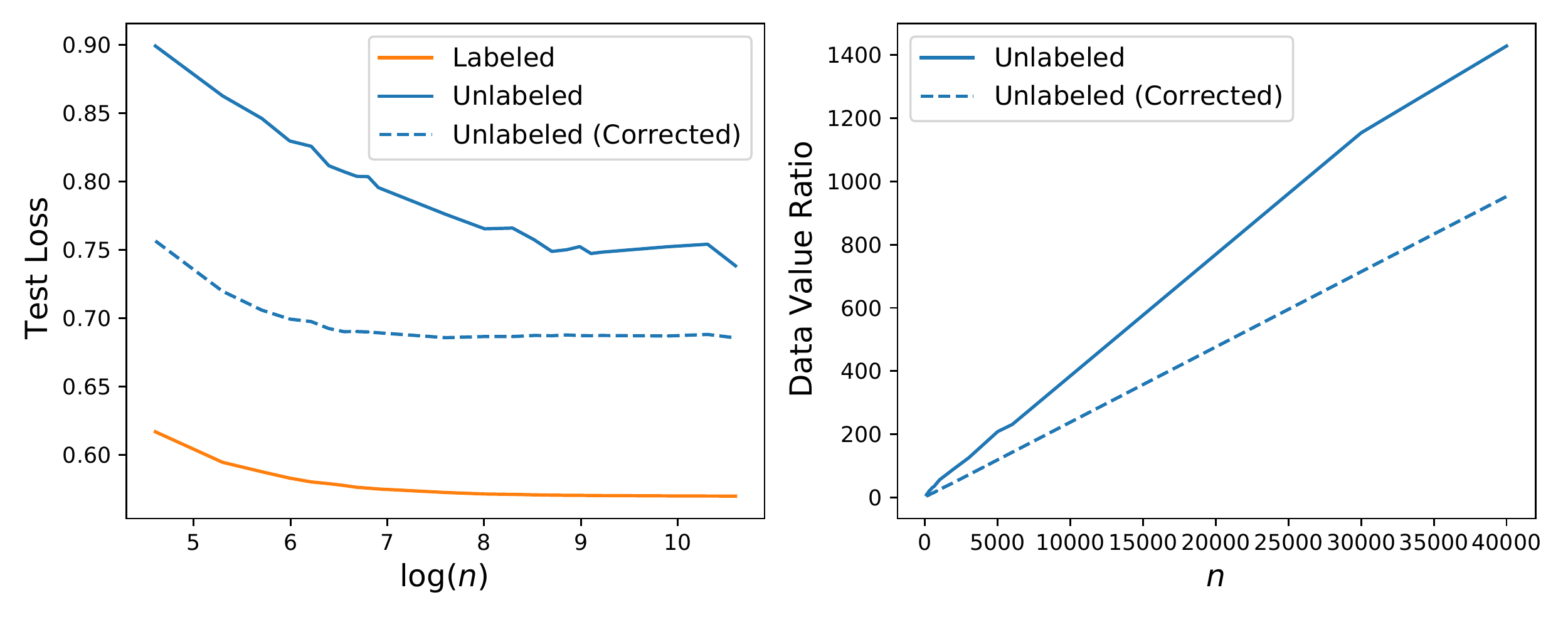}
    \end{subfigure}
    \else
    \begin{subfigure}{.48\textwidth}
      \centering
      \includegraphics[width=\linewidth]{pdf_figures/real_biases_data_value_ratio.pdf}
    \end{subfigure}
    \fi
   \ifsinglecolumn
    \begin{subfigure}{.6\textwidth}
      \small
      \centering
      {\renewcommand{\arraystretch}{1.2}
      \begin{tabular}{lccr}
      \hline
      Model & Loss $(n=40\text{K})$ & F1 $(n=40\text{K})$ \\
      \hline
      Labeled & .570 & 71.79 \\
      Unlabeled & .740 & 64.81 \\
      Corrected & .686 & 68.12 \\
      \hline
      \end{tabular}
      }
    \end{subfigure}   
   \else
    \begin{subfigure}{.48\textwidth}
      \small
      \centering
      {\renewcommand{\arraystretch}{1.2}
      \begin{tabular}{lccr}
      \hline
      Model & Loss $(n=40\text{K})$ & F1 $(n=40\text{K})$ \\
      \hline
      Labeled & .570 & 71.79 \\
      Unlabeled & .740 & 64.81 \\
      Corrected & .686 & 68.12 \\
      \hline
      \end{tabular}
      }
    \end{subfigure}
    \fi
    \caption{We measure test losses and F1-scores for labeled, unlabeled and corrected models on the IMDB dataset. Top Left: losses vs. $n$; each model appears to flatten out by $n=40,000$.  Bottom: losses and F1-scores at $n=40,000$, showing standing gaps in performance. Top Right: data value ratios for the two unlabeled models.}
    \label{fig:real_biases_data_value_ratio}
\end{figure}

\vspace{-0.5em}
\paragraph{Measuring the value of labeled data} Next, we measure the data value ratio in the real-world setting. Since both the unlabeled model and the unlabeled model with correction have a standing bias compared to the labeled model, we anticipate that the data value ratio for both unlabeled approaches grows with $n$, with the data value ratio for the baseline unlabeled model being higher. We report these results in \autoref{fig:real_biases_data_value_ratio} (right).


\vspace{-0.5em}
\paragraph{Combining labeled and unlabeled data} We finally measure the performance of the combined estimator from \cite{GreenStrawderman2001} in the setting where a small number of labeled points and many unlabeled points are available. We let $n_U=40,000$ be the entire training set and vary $n_L$ between $40$ and $400$. We use the corrected estimator for learning from unlabeled data. We report the F1-score using a threshold of $.5$. Results are in \autoref{tab:real_combo}. We observe that the combined estimator outperforms either approach individually for $n_L>40$.



\begin{table}[t]
\vskip 0.15in
\renewcommand{\arraystretch}{1.25} 
\begin{center}
\begin{small}
\begin{tabular}{lccccr}
\hline
$n_U$ & $n_L$ & $\text{F1}_\text{Unlabeled}$ & $\text{F1}_\text{Labeled}$ &  $\text{F1}_\text{Combined}$ \\
\hline
40,000 & 40 & 68.12 & 64.70 & 67.06 \\
40,000 & 80 & 68.12 & 67.65 & 68.81 \\
40,000 & 120 & 68.12 & 68.92 & 69.64 \\
40,000 & 200 & 68.12 & 69.97 & 70.41 \\
40,000 & 400 & 68.12 & 70.81 & 71.04 \\
\hline
\end{tabular}
\end{small}
\end{center}
\vskip -0.1in
\caption{F1-scores for unlabeled, labeled and combined approaches on the IMDB dataset. We find that the combination generally outperforms either approach individually, and in particular both in cases where unlabeled only performs better and where labeled only performs better.}
\label{tab:real_combo}
\end{table}

\section{Conclusion}

Motivated by the practical tradeoff between acquiring large unlabeled datasets and small labeled datasets, we introduce a framework that aims to provide theoretically-grounded reasoning for using labeled versus unlabeled data in latent variable graphical models. We present three main technical contributions in this paper: a) a finite-sample decomposition for generalization error with labeled vs unlabeled input, focused on model misspecification; b) a correction approach for method-of-moments to reduce the impact of model misspecification; c) applications of this decomposition framework and correction, namely how to choose and combine the two data types. We show theoretically and validate empirically that labeled data is more valuable when models are misspecified, since learning from unlabeled data relies more heavily on structural assumptions that may be violated. Simple algorithmic corrections, however, can significantly improve the relative value of unlabeled data.

\subsubsection*{Acknowledgments}

We gratefully acknowledge the support of NIH under No. U54EB020405 (Mobilize), NSF under Nos. CCF1763315 (Beyond Sparsity), CCF1563078 (Volume to Velocity), and 1937301 (RTML); ONR under No. N000141712266 (Unifying Weak Supervision); the Moore Foundation, NXP, Xilinx, LETI-CEA, Intel, IBM, Microsoft, NEC, Toshiba, TSMC, ARM, Hitachi, BASF, Accenture, Ericsson, Qualcomm, Analog Devices, the Okawa Foundation, American Family Insurance, Google Cloud, Swiss Re, Total, the HAI-AWS Cloud Credits for Research program, the Stanford Data Science Initiative (SDSI), and members of the Stanford DAWN project: Facebook, Google, and VMWare. The Mobilize Center is a Biomedical Technology Resource Center, funded by the NIH National Institute of Biomedical Imaging and Bioengineering through Grant P41EB027060. The U.S. Government is authorized to reproduce and distribute reprints for Governmental purposes notwithstanding any copyright notation thereon. Any opinions, findings, and conclusions or recommendations expressed in this material are those of the authors and do not necessarily reflect the views, policies, or endorsements, either expressed or implied, of NIH, ONR, or the U.S. Government.


\bibliography{bibliography}

\begin{thebibliography}{}

\bibitem[Anandkumar et~al., 2014]{anandkumar2014tensor}
Anandkumar, A., Ge, R., Hsu, D., Kakade, S.~M., and Telgarsky, M. (2014).
\newblock Tensor decompositions for learning latent variable models.
\newblock {\em Journal of Machine Learning Research}, 15:2773--2832.

\bibitem[Anandkumar et~al., 2012]{anandkumar12}
Anandkumar, A., Hsu, D., and Kakade, S.~M. (2012).
\newblock A method of moments for mixture models and hidden markov models.
\newblock volume~23 of {\em Proceedings of Machine Learning Research}, pages
  33.1--33.34, Edinburgh, Scotland. JMLR Workshop and Conference Proceedings.

\bibitem[Bach et~al., 2017]{bach2017learning}
Bach, S.~H., He, B., Ratner, A., and R{\'e}, C. (2017).
\newblock Learning the structure of generative models without labeled data.
\newblock In {\em Proceedings of the 34th International Conference on Machine
  Learning-Volume 70}, pages 273--282. JMLR. org.

\bibitem[Ben-David et~al., 2008]{ben2008does}
Ben-David, S., Lu, T., and P{\'a}l, D. (2008).
\newblock Does unlabeled data provably help? worst-case analysis of the sample
  complexity of semi-supervised learning.
\newblock In {\em COLT}, pages 33--44.

\bibitem[Castelli and Cover, 1996]{castelli1996relative}
Castelli, V. and Cover, T.~M. (1996).
\newblock The relative value of labeled and unlabeled samples in pattern
  recognition with an unknown mixing parameter.
\newblock {\em IEEE Transactions on information theory}, 42(6):2102--2117.

\bibitem[Chaganty and Liang, 2014]{chaganty2014estimating}
Chaganty, A.~T. and Liang, P. (2014).
\newblock Estimating latent-variable graphical models using moments and
  likelihoods.
\newblock In {\em International Conference on Machine Learning}, pages
  1872--1880.

\bibitem[Chandrasekaran et~al., 2012]{Chandrasekaran12}
Chandrasekaran, V., Parrilo, P.~A., and Willsky, A.~S. (2012).
\newblock Latent variable graphical model selection via convex optimization.
\newblock {\em Annals of Statistics}, 40(4):1935--1967.

\bibitem[Chapelle and Scholkopf, 2006]{Chapelle09}
Chapelle, O.and~Zien, A. and Scholkopf, B. (2006).
\newblock {\em Semi-supervised learning}.
\newblock MIT press.

\bibitem[Craven and Kumlien, 1999]{craven:ismb99}
Craven, M. and Kumlien, J. (1999).
\newblock Constructing biological knowledge bases by extracting information
  from text sources.
\newblock In {\em International Conference on Intelligent Systems for Molecular
  Biology (ISMB)}.

\bibitem[Dawid and Skene, 1979]{dawid1979maximum}
Dawid, A.~P. and Skene, A.~M. (1979).
\newblock Maximum likelihood estimation of observer error-rates using the em
  algorithm.
\newblock {\em Applied statistics}, pages 20--28.

\bibitem[De~Blasi and Walker, 2013]{Blasi13}
De~Blasi, P. and Walker, S.~G. (2013).
\newblock Bayesian asymptotics with misspecified models.
\newblock {\em Statistica Sinica}, pages 169--187.

\bibitem[Domingos, 2000]{domingos2000unified}
Domingos, P. (2000).
\newblock A unified bias-variance decomposition.
\newblock In {\em Proceedings of 17th International Conference on Machine
  Learning}, pages 231--238.

\bibitem[Fu et~al., 2020]{fu2020fast}
Fu, D.~Y., Chen, M.~F., Sala, F., Hooper, S.~M., Fatahalian, K., and R{\'e}, C.
  (2020).
\newblock Fast and three-rious: Speeding up weak supervision with triplet
  methods.
\newblock In {\em Proceedings of the 37st International Conference on Machine
  Learning (ICML 2020)}.

\bibitem[Ghorbani and Zou, 2019]{ghorbani2019data}
Ghorbani, A. and Zou, J. (2019).
\newblock Data shapley: Equitable valuation of data for machine learning.

\bibitem[Green et~al., 2005]{GreenStrawderman2001}
Green, E.~J., Strawderman, W.~E., Amateis, R.~L., and Reams, G.~A. (2005).
\newblock {Improved Estimation for Multiple Means with Heterogeneous
  Variances}.
\newblock {\em Forest Science}, 51(1):1--6.

\bibitem[Gr{\"u}nwald et~al., 2017]{Grunwald17}
Gr{\"u}nwald, P., Van~Ommen, T., et~al. (2017).
\newblock Inconsistency of bayesian inference for misspecified linear models,
  and a proposal for repairing it.
\newblock {\em Bayesian Analysis}, 12(4):1069--1103.

\bibitem[Gupta and Manning, 2014]{gupta2014improved}
Gupta, S. and Manning, C.~D. (2014).
\newblock Improved pattern learning for bootstrapped entity extraction.
\newblock In {\em Proceedings of the Eighteenth Conference on Computational
  Natural Language Learning}, pages 98--108.

\bibitem[Hsu et~al., 2012]{Hsu12}
Hsu, D., Kakade, S.~M., and Liang, P. (2012).
\newblock Identifiability and unmixing of latent parse trees.
\newblock In {\em Advances in Neural Information Processing Systems, (NIPS
  2012)}.

\bibitem[Jia et~al., 2019]{jia2019towards}
Jia, R., Dao, D., Wang, B., Hubis, F.~A., Hynes, N., G{\"u}rel, N.~M., Li, B.,
  Zhang, C., Song, D., and Spanos, C.~J. (2019).
\newblock Towards efficient data valuation based on the shapley value.
\newblock In {\em The 22nd International Conference on Artificial Intelligence
  and Statistics}, pages 1167--1176. PMLR.

\bibitem[Jog and Loh, 2015]{JogL15}
Jog, V. and Loh, P. (2015).
\newblock On model misspecification and {KL} separation for gaussian graphical
  models.
\newblock {\em CoRR}, abs/1501.02320.

\bibitem[Joglekar et~al., 2013]{joglekar2013evaluating}
Joglekar, M., Garcia-Molina, H., and Parameswaran, A. (2013).
\newblock Evaluating the crowd with confidence.
\newblock In {\em Proceedings of the 19th ACM SIGKDD international conference
  on Knowledge discovery and data mining}, pages 686--694.

\bibitem[Karger et~al., 2011]{karger2011iterative}
Karger, D.~R., Oh, S., and Shah, D. (2011).
\newblock Iterative learning for reliable crowdsourcing systems.
\newblock In {\em Advances in neural information processing systems}, pages
  1953--1961.

\bibitem[Kleijn and van~der Vaart, 2012]{kleijn2012}
Kleijn, B. and van~der Vaart, A. (2012).
\newblock The bernstein-von-mises theorem under misspecification.
\newblock {\em Electron. J. Statist.}, 6:354--381.

\bibitem[Kleijn and van~der Vaart, 2006]{kleijn2006}
Kleijn, B. J.~K. and van~der Vaart, A.~W. (2006).
\newblock Misspecification in infinite-dimensional bayesian statistics.
\newblock {\em Ann. Statist.}, 34(2):837--877.

\bibitem[Lauritzen, 1996]{Lauritzen}
Lauritzen, S. (1996).
\newblock {\em Graphical Models}.
\newblock Clarendon Press.

\bibitem[Loh and Wainwright, 2013]{Loh13}
Loh, P.-L. and Wainwright, M.~J. (2013).
\newblock Structure estimation for discrete graphical models: Generalized
  covariance matrices and their inverses.
\newblock {\em Annals of Statistics}, 41(6):3022--3049.

\bibitem[Maas et~al., 2011]{maas2011learning}
Maas, A., Daly, R.~E., Pham, P.~T., Huang, D., Ng, A.~Y., and Potts, C. (2011).
\newblock Learning word vectors for sentiment analysis.
\newblock In {\em Proceedings of the 49th annual meeting of the association for
  computational linguistics: Human language technologies}, pages 142--150.

\bibitem[Meng et~al., 2014]{Meng14}
Meng, Z., Eriksson, B., and {III}, A. O.~H. (2014).
\newblock Learning latent variable gaussian graphical models.
\newblock In {\em Proceedings of the 31st International Conference on Machine
  Learning (ICML 2014)}, Beijing, China.

\bibitem[Mintz et~al., 2009]{mintz2009distant}
Mintz, M., Bills, S., Snow, R., and Jurafsky, D. (2009).
\newblock Distant supervision for relation extraction without labeled data.
\newblock In {\em Proceedings of the Joint Conference of the 47th Annual
  Meeting of the ACL and the 4th International Joint Conference on Natural
  Language Processing of the AFNLP: Volume 2-Volume 2}, pages 1003--1011.
  Association for Computational Linguistics.

\bibitem[Ratner et~al., 2019]{Ratner19}
Ratner, A., Hancock, B., Dunnmon, J., Sala, F., Pandey, S., and R{\'e}, C.
  (2019).
\newblock Training complex models with multi-task weak supervision.
\newblock In {\em Proceedings of the AAAI Conference on Artificial
  Intelligence}, volume~33, pages 4763--4771.

\bibitem[Ratner et~al., 2016]{Ratner16}
Ratner, A.~J., De~Sa, C.~M., Wu, S., Selsam, D., and R{\'e}, C. (2016).
\newblock Data programming: Creating large training sets, quickly.
\newblock In {\em Advances in neural information processing systems}, pages
  3567--3575.

\bibitem[Ravikumar et~al., 2011]{Ravikumar11}
Ravikumar, P., Wainwright, M.~J., Raskutti, G., and Yu, B. (2011).
\newblock High-dimensional covariance estimation by minimizing
  $\ell_1$-penalized log-determinant divergence.
\newblock {\em Electronic Journal of Statistics}, 5:935--980.

\bibitem[Singh et~al., 2008]{singh2008unlabeled}
Singh, A., Nowak, R., and Zhu, J. (2008).
\newblock Unlabeled data: Now it helps, now it doesn't.
\newblock {\em Advances in neural information processing systems},
  21:1513--1520.

\bibitem[Takamatsu et~al., 2012]{takamatsu:acl12}
Takamatsu, S., Sato, I., and Nakagawa, H. (2012).
\newblock Reducing wrong labels in distant supervision for relation extraction.
\newblock In {\em Meeting of the Association for Computational Linguistics
  (ACL)}.

\bibitem[Varma et~al., 2019]{varma2019learning}
Varma, P., Sala, F., He, A., Ratner, A., and R{\'e}, C. (2019).
\newblock Learning dependency structures for weak supervision models.
\newblock {\em arXiv preprint arXiv:1903.05844}.

\bibitem[Yang and Priebe, 2011]{yang2011effect}
Yang, T. and Priebe, C.~E. (2011).
\newblock The effect of model misspecification on semi-supervised
  classification.
\newblock {\em IEEE transactions on pattern analysis and machine intelligence},
  33(10):2093--2103.

\bibitem[Yang et~al., 2020]{yang2020rethinking}
Yang, Z., Yu, Y., You, C., Steinhardt, J., and Ma, Y. (2020).
\newblock Rethinking bias-variance trade-off for generalization of neural
  networks.

\bibitem[Zhu and Goldberg, 2009]{zhu2009introduction}
Zhu, X. and Goldberg, A.~B. (2009).
\newblock Introduction to semi-supervised learning.
\newblock {\em Synthesis lectures on artificial intelligence and machine
  learning}, 3(1):1--130.

\end{thebibliography}

\clearpage
\appendix

\onecolumn

\aistatstitle{Supplementary Materials}
\section{Glossary}
\label{sec:gloss}

The glossary is given in Table~\ref{table:glossary} below.
\begin{table*}[h]
\centering
\small
\begin{tabular}{l l}
\toprule
Symbol & Used for \\
\midrule
$X$ & An input vector $X\in\mathcal{X}$.\\
$Y$ & A latent ground-truth label $Y\in\mathcal{Y}=\{-1,1\}$.\\
$m$ & Number of sources.\\
$\lf_j$ & $j$th source output $\lf_j: \mathcal{X} \rightarrow \mathcal{Y}$; all $m$ labels make up vector $\bm{\lf}$. \\
$\widetilde{Y}$ & Soft label in $[-1, 1]$ output by the latent variable model.\\
$n_U$ & Number of unlabeled samples.\\
$n_L$ & Number of labeled samples.\\
$\theta$ & Canonical parameters of the Ising model for $\Pr(Y,\bm{\lambda})$.\\
$G$ & Dependency graph $G=(V,E)$ over sources and the latent ground-truth label.\\
$E_\lambda$ & Edges among sources in $G$.\\
$d$ & Number of dependencies among sources, $d=|E_\lambda|$.\\
$a_i$ & True accuracy of the $i$th source $\mathbb{E}[\lambda_iY]$.\\
$\widetilde{a}_i^U$ & Estimated accuracy of the $i$th source using unlabeled data via the triplet method.\\
$\widetilde{a}_i^L$ & Estimated accuracy of the $i$th source using labeled data, i.e. $\Ehat{\lf_i Y}$.\\
$\widetilde{a}_i^M$ & Estimated accuracy of the $i$th source using unlabeled data via the\\
& triplet method and median aggregation.\\
$\mathcal{N}$ & Random variable representing dataset used.\\
$\tau$ & Algorithmic randomness for estimating accuracies via triplet method.\\
$R,R_U,R_L,R_M$ & Generalization error $R = \mathbb{E}_{(Y, \bm{\lf}), \N, \tau}[l(\widetilde{Y}, Y)]$. $R_U,R_L,R_M$ are for $\widetilde{a}_i^U,\widetilde{a}_i^L,\widetilde{a}_i^M$, respectively, \\
& and $l(\cdot, \cdot)$ is the cross-entropy loss.\\
$R^e,R^e_U,R^e_L,R^e_M$ & Excess generalization error $R^e=R-H(Y|\bm{\lambda})$.\\
$\mathcal{B}_I$ & Inference bias $\mathcal{B}_I=\sum_{(i, j) \in E_\lf}I(\lf_i; \lf_j | Y)$.\\
$\mathcal{B}_\text{est}$ & Parameter estimation error.\\
$\varepsilon_{ij}$ & Extent of misspecification on a single pair of sources $\varepsilon_{ij} = \E{}{\lf_i \lf_j} - \E{}{\lf_i Y} \E{}{\lf_j Y}$.\\
$\varepsilon_{\min},\varepsilon_{\max}$ & Smallest and largest $\varepsilon_{ij}$ for $(i,j)\in E_\lf$.\\
$\rho_{n_U}$ & Mean squared error for $\tilde{a}_i^M$, $\rho_{n_U} = \max_i \E{}{(\widetilde{a}_i^M - a_i)^2}$.\\
$f(n_U)$ & Minimum labeled points needed for lower generalization error than $n_U$ unlabeled points.\\
$V(n_U)$ & Data value ratio at $n_U$ unlabeled points.\\
$\widetilde{V}(n_U)$ & Approximation of data value ratio using upper bounds at $n_U$ unlabeled points.\\
$\alpha$ & Weight for unlabeled estimator to combine unlabeled and labeled estimators.\\
$a^{\text{lin}}(\alpha)$ & Linear combination of unlabeled and labeled estimators using weight $\alpha$. \\
\toprule
\end{tabular}
\caption{
	Glossary of variables and symbols used in this paper.
}
\label{table:glossary}
\end{table*}

\vfill

\section{Additional Algorithmic Details} \label{sec:alg}

We provide more details on our algorithm for latent variable estimation. The input is either a labeled dataset $(\bm{X}_L, \bm{Y}_L)$ or unlabeled dataset $\bm{X}_U$ with $m$ sources $\bm{\lf}$. The output is an estimate of the distribution $\Pr(Y | \bm{\lf}(X))$, which we construct using the factorization in \eqref{eq:inference}. For both data types, this requires plugging in the values of $\Pr(Y = 1)$ and the empirical distribution of the sources, $\Phat(\bm{\lf} = \bm{\lf}(X))$.

The approach to estimating $\Ptilde(\lf_i = \lf_i(X) | Y = 1)$ is the only part of the method that differs between the labeled and unlabeled settings. For both, we can focus on estimating $\E{}{\lf_i Y}$ since $\Ptilde(\lf_i = \pm 1 | Y = 1) = \frac{1 \pm \widetilde{a}_i}{2}$ by Lemma \ref{lemma:fs_symmetry}. 
In the labeled setting, the expectation can be estimated directly, i.e. $\Ehat{\lf_i Y} = \frac{1}{n_L}\sum_{j = 1}^n \lf_i(x_j) y_j$. On the other hand, for unlabeled data we use the triplet method from \cite{fu2020fast}, described in Algorithm \ref{alg:triplet}, to estimate $\E{}{\lf_i Y}$. This algorithm takes as input the pairwise rates of agreement between sources $\Ehat{\lf_i \lf_j}$ for all $i, j$, and returns an estimate of each $\E{}{\lf_i Y}$.

The \textsc{Aggregate} subroutine in Algorithm \ref{alg:triplet} distinguishes between the unlabeled case with and without correction. For unlabeled data, we theoretically analyze the approach where we choose $\widetilde{a}_i \sim \mathrm{Unif}(A)$; that is, we randomly select two $\lf_j, \lf_k$ to compute $\widetilde{a}_i^U$, which is similarly done in other method-of-moments approaches. An alternate to this approach is to take the \textit{mean} over all possible pairs $\lf_j, \lf_k$; note that this reduces the estimation error compared to the population-level estimate by a factor of ${m-1 \choose 2}$, but does not mitigate bias from misspecification. We use this approach in our synthetic and real-world experiments for the baseline unlabeled case without correction. Lastly, having \textsc{Aggregate}$(A)$ be the median of the set $A$ is our proposed method of correcting for misspecification.


\begin{algorithm}[t]
\caption{Method-of-Moments Latent Variable Estimation~\citep{fu2020fast}}
\begin{algorithmic}
\STATE{\textbf{Input:} Empirical expectation estimates $\Ehat{\lf_i \lf_j}$}
\FOR{$i=1$ \TO $m$}
    \STATE $A=\emptyset$
    \FOR{$j,k \in \{1, \dots, m\} \backslash \{i\}$ }
        \STATE{$\widetilde{a}_i^{(j,k)} \gets \sqrt{| \Ehat{\lf_i \lf_j} \cdot \Ehat{\lf_i \lf_k} / \Ehat{\lf_j \lf_k}|}$}
        \STATE $A \gets A \cup \widetilde{a}_i^{(j,k)}$
    \ENDFOR
    \STATE $\widetilde{a}_i \gets \textsc{Aggregate}(A)$
\ENDFOR
\RETURN $\widetilde{a}$, estimates of $\E{}{\lf_i Y}$ for all $\lf_i$
\end{algorithmic}
\label{alg:triplet}
\end{algorithm}

\section{Additional Theoretical Results}

In Section \ref{subsec:supp_mom}, we discuss how our generalization error bounds, namely the standing $\mathcal{O}(d/m)$ bias for unlabeled data, and our results for the corrected medians estimator can still apply to other method-of-moments estimators that exploit conditionally independent views of hidden variables. Next, in Section \ref{subsec:combined} we give more details about the combined estimators and the generalization bounds from using them. Finally, in Section \ref{subsec:supp_lowerbound} we present a lower asymptotic bound on the generalization error for labeled versus unlabeled data and combining both.

\subsection{Other Method-of-Moments Estimators}\label{subsec:supp_mom}

We present two other method-of-moments estimators and sketch out arguments for how using them (under misspecification) results in the same scaling of generalization error, and for how the median approach is able to help correct standing bias. We then provide an abstracted argument. 

\paragraph{``Quadratic'' Triplets} 
This alternative latent variable model relies on class-conditional probability terms instead of mean parameters \citep{fu2020fast}, which assume some symmetries in the distribution (see Lemma \ref{lemma:fs_symmetry}). For the $i$th source, we can write the parameters to be estimated as
\[\mu_i =
  \begin{bmatrix}
    \Pr(\lf_i = 1|Y = 1) & \Pr(\lf_i = 1|Y = -1)  \\
    \Pr(\lf_i = -1|Y = 1) & \Pr(\lf_i = -1|Y = -1)
  \end{bmatrix}.\] 

Let
\[O_{ij} =
  \begin{bmatrix}
    \Pr(\lf_i = 1, \lf_j = 1) & \Pr(\lf_i = 1,\lf_j = -1)  \\
    \Pr(\lf_i = -1,\lf_j = 1) & \Pr(\lf_i = -1,\lf_j = -1)
  \end{bmatrix}
  \text{    and    }
  P =
  \begin{bmatrix}
    \Pr(Y=1) & 0  \\
    0 & \Pr(Y=-1)
  \end{bmatrix}.
  \]

Then, we obtain that 
\begin{align}
O_{ij} = \mu_i P \mu_j^\top.
\label{eq:newparam}
\end{align}
The left-hand side is observable, and we can form triplets again to solve for each $\mu_i$. 
Set $\alpha = P(\lf_i =1 |Y=1)$, $c_i = \frac{P(\lf_i = 1)}{P(Y=-1)} $ and $d_i = \frac{P(Y=1)}{P(Y=-1)}$. The top row of $\mu_i$ is then $[\alpha \quad c_i - d_i \alpha]$ with $c_i$ and $d_i$ known. For a triplet $i,j,k$, and the appropriate $\mu$'s, using the $\alpha, \beta, \gamma$ notation above and corresponding $c_i, c_j, c_k$ and $d_i, d_j, d_k$ terms, we obtain the system (see \cite{fu2020fast} for more details)
\begin{align*}
(1+d_i d_j) \alpha \beta + c_i c_j - c_i d_j \beta - c_j d_i \alpha &= O_{ij}/\Pr(Y=1), \\
(1+d_i d_k)\alpha \gamma + c_i c_k - c_i d_k \gamma - c_k d_i\alpha &= O_{ik}/\Pr(Y=1), \\
(1+d_j d_k)\beta \gamma + c_j c_k - c_j d_k \gamma - c_k d_j \beta &= O_{jk}/\Pr(Y=1). \\
\end{align*}
To solve, $\alpha$ and $\gamma$ are expressed with $\beta$ for the first and third equations and this is plugged into the second---yielding a quadratic equation to be solved.

This approach incurs standing bias under misspecification. Quadratic triplets rely on conditional independence by assuming that $\Pr(\lf_i = 1, \lf_j = 1)$ and $\Pr(\lf_i = 1 | Y = 1) \Pr(\lf_j = 1 | Y = 1) \Pr(Y = 1) + \Pr(\lf_i = 1 | Y = -1) \Pr(\lf_j = 1 | Y = -1) \Pr(Y = -1)$ are equal. Suppose, however, that $(i, j) \in E_{\lf}$. Then, $\mu_i P \mu_j^\top$ is no longer equal to $O_{ij}$, but $O_{ij} + \delta_{ij}$, where $\delta_{ij} = \Pr(Y = 1)[\Pr(\lf_i | Y = 1) \Pr(\lf_j | Y = 1) - \Pr(\lf_i, \lf_j | Y = 1)] + \Pr(Y = -1)[\Pr(\lf_i | Y = -1) \Pr(\lf_j | Y = -1) - \Pr(\lf_i, \lf_j | Y = -1)]$. This $\delta_{ij}$ can be written exactly in terms of the canonical parameters $\theta$ and results in an inconsistent estimator of $\Pr(\lf_i | Y)$. We note that the probability of selecting a bad triplet that leads to this is the same for this method and our main triplet method, so the standing bias still scales $\mathcal{O}(\frac{d \delta}{m})$.

This approach can also be corrected using medians and the same conditions from Proposition \ref{prop:medians}, which we prove in Section \ref{subsec:medians}, hold for the estimates to be consistent. 

\paragraph{Method-of-moments for topic exchange} \cite{anandkumar2014tensor} describes tensor method-of-moments estimators for a variety of applications, including topic models. In the topic model case, $h$ is the topic latent variable, $x_1, \ldots, x_{\ell}$ are the words in the document, all assumed to be conditionally independent given $h$ and drawn from an unknown conditional probability distribution $\mu_h$ parametrized by the latent topic variable. Here, $x_t = e_i$, the standard basis vector if the $t$th word is $i$. \cite{anandkumar2014tensor} uses the fact that 
\[\mathbb{E}[x_1 \otimes x_2 \otimes x_3] = \sum_{i=1}^k w_i \mu_i \otimes \mu_i \otimes \mu_i,\]
where $w_i$ is the probability of $h$ being topic $i$, to perform a tensor decomposition of the observable  $\mathbb{E}[x_1 \otimes x_2 \otimes x_3]$ and learn $\mu_h$. Note the similarity to our setting, where $Y$ is used in place of $h$ and where there are two (i.e., a matrix) instead of three views (giving a tensor). Conditional independence (of words given the topic) is required to for this expression to hold. Therefore, when conditional independence is violated, $\sum_{i = 1}^k w_i \mu_i \otimes \mu_i \otimes \mu_i$ is equal to $\E{}{x_1 \otimes x_2 \otimes x_3}$ plus some additional perturbation that is a function of the probability distribution. This error is propagated into the estimate of $\mu_h$, assuming Lipschitzness of this estimator. Furthermore, assuming random triples are selected to learn the accuracy of each word, using this approach to estimate accuracy parameters will again yield a standing bias.

Furthermore, the medians approach can again correct for this standing bias---there are ${m - 1 \choose 2} - m - d - 3$ good triplets out of ${m - 1 \choose 2}$, so we require the same conditions to yield consistent estimators as those for the quadratic triplets case.

\paragraph{Abstraction} Consider in general some observable quantities $o_1, \ldots, o_v$, some unobservable quantities $u_1, \ldots, u_v$ that depend on the value of some latent variable $h$, and a relationship that holds when some set of dependencies $\Omega$ is taken into account, 
\[ f(o_1, \ldots, o_v) = g_{\Omega}(u_1, \ldots, u_v),\]
Next, we call $s(f(o_1, \ldots, o_v))$ an estimator that produces estimates of $u_1, \ldots, u_v$. 

Our approach is simply to account for errors due to accessing an incorrect $\Omega'$, where $|\Omega \setminus \Omega'| = d$. Then, 
\[ f(o_1, \ldots, o_v) = g_{\Omega'}(u_1, \ldots, u_v) + d \times \Delta(u_1, \ldots, u_v),\]
where $\Delta$ is some error term. Given this setup, we then propagate the error term $\Delta$ in the estimator $s$, computing
$s(f(o_1, \ldots, o_v)) - s(f(o_1, \ldots, o_v) - d \Delta(u_1, \ldots, u_v)$. This can be done either via perturbation analysis or Taylor approximation or other methods---the only requirement we place is Lipschitzness on the estimator $s$. Then, by randomly selecting subsets of $(o_1, \ldots, o_v)$ to estimate $u_1, \ldots, u_v$, the probability of picking a subset with error scales in $d$, showing that there exists a standing bias that is a function of the number of unmodeled dependencies. Moreover, there are some subsets of $(o_1, \ldots, o_v)$ that yield consistent estimators $s$; if this quantity is greater than half of all the subsets, then a medians approach can be beneficial when there is enough data.

\subsection{Combined estimator analysis} \label{subsec:combined}

The general form of the combined estimator we consider is $a^{\mathrm{lin}}(\alpha) = \alpha \widetilde{a}^U + (1 - \alpha) \widetilde{a}^L$ for some weight $\alpha \in [0, 1]$. The James-Stein type estimator from \cite{GreenStrawderman2001}, which we evaluate empirically, uses the following:
\begin{align}
    \widetilde{a}^G := \widetilde{a}^U + \bigg(1 - \frac{r}{\|\widetilde{a}^L - \widetilde{a}^U\|_{\Sigma^{-1}}} \bigg)_{\mathclap{+}} (\widetilde{a}^L - \widetilde{a}^U),
\end{align}
where $\Sigma = \Cov{}{\widetilde{a}^L}$ and $r \in [0, 2(m - 2)]$. \cite{GreenStrawderman2001} show that this estimator dominates $\widetilde{a}^L$ when the unbiased estimator is Gaussian and its covariance is known, but since we can only estimate the covariance matrix, we replace $\Sigma$ with an empirical estimate $\hat{\Sigma}$ in practice. This estimator is equivalent to $a^{\mathrm{lin}}\Big(\min \Big\{\frac{r}{\|\widetilde{a}^L - \widetilde{a}^U\|_{\hat{\Sigma}^{-1}}}, 1 \Big\} \Big)$.  We thus focus on analyzing the performance of the general combined estimator $a^{\mathrm{lin}}(\alpha)$. 

The change in estimator only impacts the generalization bound via the parameter estimation error,  $\sum_{i = 1}^m \E{\N, \tau, Y}{\KL (\mathrm{Pr}_{\lf_i | Y} || \Ptilde_{\lf_i | Y} )}$. We simplify this using Lemma \ref{lemma:KL_estimation}, doing a Taylor approximation on a combined asymptotic estimate $\bar{a}_i^C := \alpha \bar{a}_i + (1 - \alpha) a_i$ rather than $\bar{a}_i$. This gives us
\begin{align}
     &\sum_{i = 1}^m \E{\N, \tau, Y}{\KL (\mathrm{Pr}_{\lf_i | Y} || \Ptilde_{\lf_i | Y} )} = \sum_{i = 1}^m \frac{1 + a_i}{2} \log \Big(1 + \frac{\alpha(a_i - \bar{a}_i)}{1 + \bar{a}_i^C} \Big) + \frac{1 - a_i}{2} \log \Big(1 + \frac{\alpha(\bar{a}_i - a_i)}{1 - \bar{a}_i^C} \Big) \label{eq:combined_param_err} \\
    +& \sum_{i = 1}^m \frac{a_i - \bar{a}_i}{1 - (\bar{a}_i^C)^2} \alpha^2 \E{}{\bar{a}_i - \widetilde{a}_i^U}  + \sum_{i = 1}^m \frac{1}{2} \Big( \frac{1}{1 - (\bar{a}_i^C)^2} + \frac{2\alpha (\bar{a}_i - a_i)}{(1 - (\bar{a}_i^C)^2)^2}\Big) \Big(\alpha^2 \E{}{(\widetilde{a}_i^U - \bar{a}_i)^2} + (1 - \alpha)^2 \E{}{(\widetilde{a}_i^L - a_i)^2} \Big) \nonumber 
\end{align}

We present bounds for the three settings discussed in the paper.

\paragraph{Well-specified setting} In the well-specified setting, the unlabeled data accuracy estimator is consistent, so $\bar{a}_i = a_i$, and therefore
\begin{align}
     &\sum_{i = 1}^m \E{\N, \tau, Y}{\KL (\mathrm{Pr}_{\lf_i | Y} || \Ptilde_{\lf_i | Y} )} =  \sum_{i = 1}^m \frac{1}{2} \Big( \frac{1}{1 - a_i^2} \Big) \Big(\alpha^2 \E{}{(\widetilde{a}_i^U - \bar{a}_i)^2} + (1 - \alpha)^2 \E{}{(\widetilde{a}_i^L - a_i)^2} \Big)    
\end{align}

Using the results of the proof of Theorem \ref{thm:labeled} and the bound on $\E{}{(\widetilde{a}_i^U - \bar{a}_i)^2}$ in Lemma \ref{lemma:sampling_error}, we get that this is at most $\alpha^2 \frac{c_4 m }{n_U} + (1 - \alpha)^2 \frac{m}{2n_L}$.

\paragraph{Misspecified Setting} The constant terms for the bound on accuracy parameter estimation error will change due to $\bar{a}_i^C$ in the denominator rather than $\bar{a}_i$, but the derivation follows our proof for Theorem \ref{thm:unlabeled}. Therefore, for some $c'$,
\begin{align}
    \sum_{i = 1}^m \E{\N, \tau, Y}{\KL (\mathrm{Pr}_{\lf_i | Y} || \Ptilde_{\lf_i | Y} )} \le & \varepsilon_{\max}\Big(\frac{c'_1 \alpha d }{m} + \frac{c'_2 \alpha^2 }{\sqrt{n_U}} + \frac{c'_3 \alpha^3 d}{m n_U} + \frac{\alpha (1 - \alpha)^2 c'_5 d}{m n_L} \Big) + \frac{c'_4 \alpha^2 m}{n_U} + \frac{(1 - \alpha)^2 m}{2n_L}. \nonumber
\end{align}

\paragraph{Corrected Setting} Here we consider the combined estimator $\alpha \widetilde{a}^M + (1 - \alpha)\widetilde{a}^L$. Under certain conditions, we know that $\widetilde{a}^M$ asymptotically converges to $a$. Therefore, the accuracy parameter estimation error is
\begin{align}
     &\sum_{i = 1}^m \E{\N, \tau, Y}{\KL (\mathrm{Pr}_{\lf_i | Y} || \Ptilde_{\lf_i | Y} )} =  \sum_{i = 1}^m \frac{1}{2} \Big( \frac{1}{1 - a_i^2} \Big) \Big(\alpha^2 \E{}{(\widetilde{a}_i^M - \bar{a}_i)^2} + (1 - \alpha)^2 \E{}{(\widetilde{a}_i^L - a_i)^2} \Big)
\end{align}

$\E{}{(\widetilde{a}_i^M - \bar{a}_i)^2}$ is just the variance of the median estimator. Therefore, this summation is bounded by $\alpha^2 c_{\rho} m \rho_{n_U} + (1 - \alpha)^2 \frac{m}{2n_L}$ under the conditions in Proposition \ref{prop:medians}.

\subsection{Lower bounds on generalization error} \label{subsec:supp_lowerbound}

While Theorems \ref{thm:labeled} and \ref{thm:unlabeled} provide upper bounds on the excess generalization error, it is also important to consider lower bounds---is the standing bias from misspecification in the unlabeled approach inevitable? We analyze the asymptotic excess risk in the case of labeled data, unlabeled data, and both, and discuss how a lower bound approach to the data value ratio and analyzing combined estimators is possible.

\paragraph{Unlabeled data lower bound} Looking at the decomposition in Theorem \ref{thm:decomposition}, $\E{\N}{\KL(\Pr(\bm{\lf}) || \Phat(\bm{\lf}))}$ approaches $0$ asymptotically and the inference bias $\sum_{(i, j) \in E_{\lf}} I(\lf_i; \lf_j | Y)$ is independent of the amount of data. We thus seek to asymptotically lower bound $\sum_{i = 1}^m \E{\N, \tau, Y}{\KL( \Pr_{\lf_i | Y} || \Ptilde_{\lf_i | Y})}$. Note that in the labeled data case and when using the medians estimator $\widetilde{a}^M$ with unlabeled data, parameter estimation error approaches $0$ as $n$ grows large since the estimated accuracy parameters are consistent. In the unlabeled data case, we show that standing bias persists.

\begin{theorem}
Suppose that there are $|E_{\lf}| = d$ unmodeled dependencies. When we use the latent variable model described in section \ref{sec:background}, the lower bound of the excess generalization error is asymptotically bounded by
\begin{align}
    \lim_{n_U \rightarrow \infty} R_u^e \ge \frac{(m - 2d) d^2 \varepsilon_{\min}^2 b_{\min}^4}{2(m - 1)^2 (m - 2)^2} + \B_I.
\end{align}

When $d$ is $o(m)$, the asymptotic parameter estimation error is $\Omega \Big(\frac{d^2 \varepsilon_{\min}^2}{m^3} \Big)$.
\label{thm:lower_bound}
\end{theorem}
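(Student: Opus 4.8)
The plan is to isolate the parameter-estimation term of Theorem~\ref{thm:decomposition} and show it does not vanish as $n_U \to \infty$. Taking $n_U \to \infty$, the observable sampling-noise term $\E{\N}{\KL(\Pr(\bm\lf)\,||\,\hat\Pr(\bm\lf))}$ tends to $0$ and the empirical moments $\Ehat{\lf_i\lf_j}$ converge to their population values, so each random estimate $\widetilde a_i^{(j,k)}$ converges to the population triplet value $\bar a_i^{(j,k)} := \sqrt{|\E{}{\lf_i\lf_j}\E{}{\lf_i\lf_k}/\E{}{\lf_j\lf_k}|}$ and the estimator variance vanishes. Hence $\lim_{n_U\to\infty} R_U^e = \B_I + \sum_{i=1}^m \E{\tau}{\KL(\Pr_{\lf_i|Y}\,||\,\bar\Pr_{\lf_i|Y})}$, where $\bar\Pr_{\lf_i|Y}$ is the conditional law induced by $\bar a_i^{(j,k)}$ through $\bar\Pr(\lf_i=\pm1|Y=1)=\frac{1\pm \bar a_i^{(j,k)}}{2}$ (Lemma~\ref{lemma:fs_symmetry}). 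It therefore suffices to lower bound this residual sum of conditional KL divergences, with $\B_I$ carried through unchanged.

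Next I would restrict the outer sum to the $m-2d$ \emph{clean} sources, i.e. those incident to no dependency edge; dropping the remaining nonnegative KL terms only weakens the bound. Since $\sum_i \text{deg}_{E_\lf}(\lf_i)=2d$ and each source-source degree is at most $1$ under our assumptions, exactly $m-2d$ such sources exist. For a clean source $i$, a triplet contributes exactly $\bar a_i^{(j,k)}=a_i$ whenever $(j,k)$ is a good pair, whereas for a bad pair $(j,k)\in E_\lf$ one computes $\bar a_i^{(j,k)}=|a_i|\sqrt{a_ja_k/(a_ja_k+\varepsilon_{jk})}$, which is \emph{strictly smaller} than $a_i$ because $\varepsilon_{jk}\ge \varepsilon_{\min}\ge 0$ and $\theta\ge 0$ forces $a_j,a_k>0$. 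The crucial structural point is that every bad triplet pushes $\bar a_i^{(j,k)}$ in the \emph{same} (downward) direction, so the individual deficits add coherently rather than cancel.

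The core estimate then proceeds in two inequalities. First, convexity of $\KL(P||\cdot)$ in its second argument, together with the linearity of the map $\bar a_i^{(j,k)} \mapsto \bar\Pr_{\lf_i|Y}$, gives $\E{\tau}{\KL(\Pr_{\lf_i|Y}\,||\,\bar\Pr_{\lf_i|Y})} \ge \KL\big(\Pr_{\lf_i|Y}\,||\,\bar\Pr^{\mathrm{avg}}_{\lf_i|Y}\big)$, where $\bar\Pr^{\mathrm{avg}}$ is induced by the averaged accuracy $\bar a_i^{\mathrm{avg}}=\E{\tau}{\bar a_i^{(j,k)}}$. Second, Pinsker's inequality lower bounds this single divergence by $\tfrac12(a_i-\bar a_i^{\mathrm{avg}})^2$. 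It then remains to lower bound the averaged deficit: since only $d$ of the $\binom{m-1}{2}$ equally likely pairs are bad and each contributes a deficit of magnitude at least $c\,\varepsilon_{\min}$ (with $c$ expressed through the worst-case source quality $b_{\min}$, obtained by bounding $a_i|1-\sqrt{a_ja_k/(a_ja_k+\varepsilon_{jk})}|$ from below), coherence of signs yields $|a_i-\bar a_i^{\mathrm{avg}}|\ge \frac{2 d\, c\,\varepsilon_{\min}}{(m-1)(m-2)}$. Squaring, applying the $\tfrac12$ Pinsker factor, and summing over the $m-2d$ clean sources produces the claimed $\frac{(m-2d)d^2\varepsilon_{\min}^2 b_{\min}^4}{2(m-1)^2(m-2)^2}$; the $\Omega(d^2\varepsilon_{\min}^2/m^3)$ corollary follows since $(m-2d)/\big((m-1)^2(m-2)^2\big)=\Theta(1/m^3)$ when $d=o(m)$.

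I expect the main obstacle to be the final deficit step: producing a clean, uniform lower bound $c\,\varepsilon_{\min}$ on each per-triplet deficit $a_i-\bar a_i^{(j,k)}$ expressed through $b_{\min}$, and rigorously certifying that all such deficits share a sign so that the average does not collapse — this coherence is precisely what converts the convexity-plus-Pinsker chain into a $d^2$ (rather than $d$) dependence. A secondary technical point is justifying the interchange of the $n_U\to\infty$ limit with the expectation over $\tau$ and the vanishing of the estimator variance, which requires consistency and uniform integrability of the empirical triplet moments.
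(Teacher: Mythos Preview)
Your proposal is correct and follows essentially the same route as the paper: isolate the asymptotic parameter-estimation term, drop the (nonnegative) contributions from sources incident to a dependency edge, and lower-bound each surviving term by $\tfrac12(a_i-\bar a_i)^2$ before inserting the per-triplet deficit bound $a_i-\bar a_i^{(j,k)}\ge \tfrac{\varepsilon_{\min}b_{\min}^2}{2}$ (this is exactly Lemma~\ref{lemma:accuracy_bias} in the paper, which also confirms the sign coherence you flag as the main obstacle). The only cosmetic difference is that you obtain the quadratic lower bound via Jensen (convexity of $\KL(P\|\cdot)$) followed by Pinsker, whereas the paper proves the equivalent inequality $-f(x)\ge \tfrac12 x^2$ directly by elementary calculus on the explicit Bernoulli KL; both yield the identical constant $\tfrac12$.
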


\begin{proof}
We compute an asymptotic lower bound for $\sum_{i = 1}^m \E{\N, \tau, Y}{\KL( \Pr_{\lf_i | Y} || \Ptilde_{\lf_i | Y})}$. Applying Lemma \ref{lemma:KL_estimation}, we see that 
\begin{align}
    \lim_{n_U \rightarrow \infty} \sum_{i = 1}^m \E{\N, \tau, Y}{\KL( \mathrm{Pr}_{\lf_i | Y} || \Ptilde_{\lf_i | Y})} = \sum_{i = 1}^m \frac{1 + a_i}{2} \log \left(1 + \frac{a_i - \bar{a}_i}{1 + \bar{a}_i}\right) + \frac{1 - a_i}{2} \log \left(1 + \frac{\bar{a}_i - a_i}{1 - \bar{a}_i}\right). \label{eq:lower_bound_est_err}
\end{align}

We focus on the lower bound of any one element of this sum. For ease of notation, let $a := a_i$ and $x = a_i - \bar{a}_i$. Then this expression for an arbitrary $i$ becomes
\begin{align}
    \frac{1 + a_i}{2} \log \left(1 + \frac{a_i - \bar{a}_i}{1 + \bar{a}_i}\right) + \frac{1 - a_i}{2} \log \left(1 + \frac{\bar{a}_i - a_i}{1 - \bar{a}_i}\right) = - \frac{1 + a}{2} \log \left(1 - \frac{x}{1 + a}\right) - \frac{1 - a}{2} \log \left(1 + \frac{x}{1 - a}\right). \label{eq:per_i}
\end{align}

Take the negative of this expression and define it as a function $f(x)$ to upper bound:
\begin{align}
    f(x) = \frac{1 + a}{2} \log \left(1 - \frac{x}{1 + a}\right) + \frac{1 - a}{2} \log \left(1 + \frac{x}{1 - a}\right).
\end{align}

We show that $f(x) \le - \frac{1}{2}x^2$. Note that for $x = 0$, $f(x) = 0$ and $\frac{1}{2}x^2 = 0$. Then, we must show that for $x \ge 0$, $f'(x) \le -x$ and for $x < 0$, $f'(x) > -x$. Taking the derivative of $f(x)$ gives us $f'(x) = \frac{-x}{1 - (a - x)^2}$, and it is clear that the previous inequalities are satisfied.

Using this fact in \eqref{eq:lower_bound_est_err}, we have that $
     \lim_{n_U \rightarrow \infty} \sum_{i = 1}^m \E{\N, \tau, Y}{\KL( \mathrm{Pr}_{\lf_i | Y} || \Ptilde_{\lf_i | Y})} = \sum_{i = 1}^m \frac{1}{2} (a_i - \bar{a}_i)^2
$. For $i \in E_{\lf}$, note that by Lemma \ref{lemma:accuracy_bias} it is possible to construct a graphical model such that $a_i - \bar{a}_i = 0$. For $i \notin E_{\lf}$, we know that $|a_i - \bar{a}_i|$ is at least $\frac{d \varepsilon_{\min} b_{\min}^2}{(m - 1)(m - 2)}$. Therefore,
\begin{align}
    \frac{1}{2} \sum_{i = 1}^m (a_i - \bar{a}_i)^2 \ge \frac{1}{2} \sum_{i \notin E_{\lf}} (a_i - \bar{a}_i)^2 \ge \frac{(m - 2d) d^2 \varepsilon_{\min}^2 b_{\min}^4}{2(m-1)^2 (m-2)^2}.
\end{align}
\end{proof}

\paragraph{Combined estimator lower bound} Next, we analyze the excess risk when we use the combined estimator $a^{\mathrm{lin}}(\alpha)$. Note that when we are in the well-specified and corrected settings, the asymptotic excess risk is $0$. Therefore, we only consider the misspecified setting.
\begin{corollary}
Denote $R_{\mathrm{lin}}^e(\alpha)$ as the excess risk of our latent variable model when we use accuracy parameter $a^{\mathrm{lin}}(\alpha)$. The lower bound of the excess generalization error when we combine labeled and unlabeled data (without correction) using weight $\alpha$ is asymptotically bounded by
\begin{align}
    \lim_{n_U, n_L \rightarrow \infty} R_{\mathrm{lin}}^e(\alpha) \ge \frac{\alpha^2 (m - 2d) d^2 \varepsilon_{\min}^2 b_{\min}^4}{2(m - 1)^2 (m - 2)^2} + \B_I.
\end{align}
\label{cor:combined}
\end{corollary}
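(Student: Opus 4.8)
The plan is to follow the proof of Theorem \ref{thm:lower_bound} almost verbatim, the only change being that the combined estimator $a^{\mathrm{lin}}(\alpha)$ has a different asymptotic limit than $\widetilde{a}^U$. First I would invoke the decomposition in Theorem \ref{thm:decomposition}: as $n_U, n_L \rightarrow \infty$ the observable sampling noise $\E{\N}{\KL(\Pr(\bm{\lf}) \| \Phat(\bm{\lf}))}$ vanishes and the inference bias $\B_I$ is independent of the amount of data, so it suffices to lower bound the parameter estimation error $\sum_{i = 1}^m \E{\N, \tau, Y}{\KL(\Pr_{\lf_i | Y} \| \Ptilde_{\lf_i | Y})}$ when the plugged-in accuracy is $a^{\mathrm{lin}}(\alpha)$.

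The key observation is the asymptotic limit of the estimator. Since $\widetilde{a}^L$ is unbiased and consistent, $\widetilde{a}_i^L \rightarrow a_i$, whereas the triplet estimator $\widetilde{a}_i^U$ converges to its biased limit $\bar{a}_i$; hence $a^{\mathrm{lin}}(\alpha)$ converges to $\bar{a}_i^C = \alpha \bar{a}_i + (1 - \alpha) a_i$. Taking the limit in the combined parameter estimation error \eqref{eq:combined_param_err}, the variance terms $\E{}{(\widetilde{a}_i^U - \bar{a}_i)^2}$ and $\E{}{(\widetilde{a}_i^L - a_i)^2}$, as well as the centered term $\E{}{\bar{a}_i - \widetilde{a}_i^U}$, all vanish, leaving only the logarithmic bias terms, which have exactly the same form as \eqref{eq:per_i} but with $\bar{a}_i$ replaced by $\bar{a}_i^C$.

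The crucial algebraic step is the identity $a_i - \bar{a}_i^C = \alpha(a_i - \bar{a}_i)$. Setting $x = a_i - \bar{a}_i^C$ and $a = a_i$, each per-source term is $-f(x)$ for the same $f$ as in Theorem \ref{thm:lower_bound}; because $\bar{a}_i^C$ is a convex combination of $a_i, \bar{a}_i \in (-1, 1)$, we have $a - x = \bar{a}_i^C \in (-1, 1)$, so $f'(x) = -x/(1 - (\bar{a}_i^C)^2)$ and the bound $f(x) \le -\tfrac{1}{2} x^2$ carries over unchanged. This yields an asymptotic parameter estimation error equal to $\tfrac{1}{2} \sum_i (a_i - \bar{a}_i^C)^2 = \tfrac{\alpha^2}{2} \sum_i (a_i - \bar{a}_i)^2$, which is precisely the Theorem \ref{thm:lower_bound} quantity scaled by $\alpha^2$.

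Finally I would reuse the source-by-source lower bound from Theorem \ref{thm:lower_bound}: by Lemma \ref{lemma:accuracy_bias} the bias can be made to vanish for $i \in E_{\lf}$, while for $i \notin E_{\lf}$ we have $|a_i - \bar{a}_i| \ge \frac{d \varepsilon_{\min} b_{\min}^2}{(m - 1)(m - 2)}$. Restricting the sum to the at least $m - 2d$ indices outside $E_{\lf}$ and carrying the factor $\alpha^2$ through gives $\frac{\alpha^2 (m - 2d) d^2 \varepsilon_{\min}^2 b_{\min}^4}{2(m - 1)^2 (m - 2)^2}$, and adding back $\B_I$ completes the bound. The only place requiring genuine care is justifying that in the double limit the finite-sample cross terms in \eqref{eq:combined_param_err} vanish cleanly enough to isolate the bias terms; I expect this to be the main (though modest) obstacle, and it follows from the consistency of both component estimators together with the boundedness of the accuracies away from $\pm 1$.
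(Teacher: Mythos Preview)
Your proposal is correct and follows essentially the same approach as the paper: take the limit in \eqref{eq:combined_param_err} so that only the logarithmic bias terms survive, observe that $a_i - \bar{a}_i^C = \alpha(a_i - \bar{a}_i)$ so each term has the form \eqref{eq:per_i} with $x = \alpha(a_i - \bar{a}_i)$, apply the inequality $-f(x) \ge \tfrac{1}{2}x^2$ from Theorem~\ref{thm:lower_bound}, and then invoke Lemma~\ref{lemma:accuracy_bias} on the $m-2d$ indices $i \notin E_{\lf}$. If anything, your write-up is slightly more careful than the paper's, since you explicitly note that $\bar{a}_i^C \in (-1,1)$ to justify reusing the bound on $f$ and flag the double-limit step.
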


\begin{proof}

Based on \eqref{eq:combined_param_err}, the asymptotic parameter estimation error is 
\begin{align}
    \lim_{n_U, n_L \rightarrow \infty} \sum_{i = 1}^m \E{\N, \tau, Y}{\KL (\text{Pr}_{\lf_i | Y} || \Ptilde_{\lf_i | Y})} = \sum_{i = 1}^m \frac{1 + a_i}{2} \log \Big(1 + \frac{\alpha(a_i - \bar{a}_i)}{1 + \bar{a}_i^C} \Big) + \frac{1 - a_i}{2} \log \Big(1 + \frac{\alpha(\bar{a}_i - a_i)}{1 - \bar{a}_i^C} \Big),
\end{align}

where $\bar{a}_i^C = \alpha \bar{a}_i + (1 - \alpha) a_i$. If we define $a := a_i$ and $x := \alpha(a_i - \bar{a}_i)$, then the $i$th element of this sum has the form in \eqref{eq:per_i} and is thus at least $\frac{1}{2} x^2$. Therefore, using results from Lemma \ref{lemma:accuracy_bias} the parameter estimation error is
\begin{align}
    \lim_{n_U, n_L \rightarrow \infty} \sum_{i = 1}^m \E{\N, \tau, Y}{\KL (\text{Pr}_{\lf_i | Y} || \Ptilde_{\lf_i | Y})} \ge \sum_{i = 1}^m \frac{\alpha^2}{2} (a_i - \bar{a}_i)^2 \ge \frac{\alpha^2 (m - 2d) d^2 \varepsilon_{\min}^2 b_{\min}^4}{2(m - 1)^2 (m - 2)^2}.
\end{align}

\end{proof}

\paragraph{Applications to data value ratio and combined estimator analysis} Finally, it is possible to define the data value ratio and analyze combined estimators based on lower bounds on the excess risk of labeled vs unlabeled data. To do this, we would use the expressions from Theorem \ref{thm:lower_bound} and Corollary \ref{cor:combined} with standard finite-sample lower bounds on the estimates from observable data. For bounding the variance of accuracy parameters estimated via the triplet method on unlabeled data, we can use the lower bound from Theorem 2 of~\citet{fu2020fast}.

\section{Proofs}

First, we formally state our assumptions on the graphical model that are needed for our results.

\begin{assumption}
Suppose that the distribution of $\Pr(Y, \bm{\lf})$ takes on the form
\begin{align}
    \Pr(Y, \bm{\lf}; \theta) = \frac{1}{Z} \exp \Big(\theta_Y + \sum_{i = 1}^m \theta_i \lf_i Y + \sum_{(i, j) \in E_{\lf}} \theta_{ij} \lf_i \lf_j \Big),
    \label{eq:pgm}
\end{align}

where $Z$ is the cumulant function, and the set of all canonical parameters $\theta$ are positive. This assumption also means that $\E{}{\lf_i \lf_j}, \E{}{\lf_i Y} > 0$ for all $i$ and $j$. Define $a_{\min} = \min_i a_i$ as the minimum true accuracy. Define $b_{\min} = \min_{i,j} \{\E{}{\lf_i \lf_j}, \Ehat{\lf_i \lf_j} \}$. Lastly, define $\bar{a}_{\max} = \max_{i} \bar{a}_i = \max_{i,j,k} \E{\tau}{\sqrt{\frac{\E{}{\lf_i \lf_j} \E{}{\lf_i \lf_k}}{\E{}{\lf_j \lf_k}}}}$.
\label{assumptions}
\end{assumption}

\subsection{Proof of Theorem 1}

Our goal is to evaluate $\E{(Y, \bm{\lf}), \N, \tau}{l(\widetilde{Y}, Y)}$, where $\N$ is the randomness over a sample of $n$ points (either $n_U$ or $n_L$). This expected cross entropy loss can be written as
\begin{align}
    \E{(Y, \bm{\lf}), \N, \tau}{l(\widetilde{Y}, Y)} &= - \E{(Y, \bm{\lf}), \N, \tau}{ \log \frac{\Ptilde(Y' = Y | \bm{\lf}' = \bm{\lf})}{\Pr(Y' = Y | \bm{\lf}' = \bm{\lf})}} + H(Y | \bm{\lf}), \label{eq:cross-entropy}
\end{align}

where $Y', Y$ and $\bm{\lf}', \bm{\lf}$ are independent copies, and the conditional entropy $H(Y | \bm{\lf})$ is by definition
\begin{align}
    H(Y | \bm{\lf}) &= \E{\bm{\lf}}{- \Pr(Y = 1 | \bm{\lf}' = \bm{\lf}) \log \Pr(Y = 1 | \bm{\lf}' = \bm{\lf}) - \Pr(Y = -1 | \bm{\lf}' = \bm{\lf}) \log \Pr(Y = 1 | \bm{\lf}' = \bm{\lf})}.
\end{align}

Next, we evaluate $ \log \frac{\Ptilde(Y' = Y | \bm{\lf}' = \bm{\lf})}{\Pr(Y = 1 | \bm{\lf}' = \bm{\lf})}$. Define $\Pbar$ to be the conditionally independent label model parametrized by the true accuracies $a = \E{}{\bm{\lf} Y}$ in the asymptotic regime; similar to $\Ptilde$'s definition in \eqref{eq:inference}, 
\begin{align}
    \Pbar(Y' = Y | \bm{\lf} =  \bm{\lf}(X)) &= \frac{\Pbar(\bm{\lf} = \bm{\lf}(X) | Y' = Y) \Pr(Y' = Y)}{\Pr(\lf = \lf(X))} = \frac{\prod_{i = 1}^m \Pr(\lf_i = \lf_i(X) | Y' = Y) \Pr(Y' = Y)}{\Pr(\lf = \lf(X))}.
\end{align}

Then, 
\begin{align*}
 \log \frac{\Ptilde(Y' = Y | \bm{\lf}' = \bm{\lf})}{\Pr(Y' = Y | \bm{\lf}' = \bm{\lf})} &= \log \frac{\Ptilde(Y' = Y | \bm{\lf}' = \bm{\lf})}{\Pbar(Y' = Y | \bm{\lf}' = \bm{\lf})} + \log \frac{\Pbar(Y' = Y | \bm{\lf}' = \bm{\lf})}{\Pr(Y' = Y | \bm{\lf}' = \bm{\lf})} \\
&= \sum_{i = 1}^m \log \frac{\Ptilde(\lf'_i = \lf_i | Y' = Y)}{\Pr( \lf'_i = \lf_i | Y' = Y)} + \log \frac{\Pr(\bm{\lf}' = \bm{\lf})}{\Phat(\bm{\lf}' = \bm{\lf})} + \log \frac{\Pbar(\bm{\lf}' = \bm{\lf} | Y' = Y)}{\Pr(\bm{\lf}' = \bm{\lf} | Y' = Y)}.
\end{align*}

We have used the fact that the class balance $\Pr(Y' = Y)$ is the same value across the true distribution, $\Ptilde$, and $\Pbar$. Plugging back into \eqref{eq:cross-entropy}, we get 
\begin{align}
    -\sum_{i = 1}^m \E{(Y, \bm{\lf}), \N, \tau}{\log \frac{\Ptilde(\lf_i' = \lf_i | Y' = Y)}{\Pr(\lf_i' = \lf_i | Y' = Y)}} - \E{(Y, \bm{\lf})}{\log \frac{\Pbar(\bm{\lf}' = \bm{\lf} | Y' = Y)}{\Pr(\bm{\lf}' = \bm{\lf} | Y' = Y)}} - \E{\bm{\lf}, \N}{\log \frac{\Pr(\bm{\lf}' = \bm{\lf})}{\Phat(\bm{\lf}' = \bm{\lf})}} + H(Y | \bm{\lf}). \label{eq:cross-entropy-2}
\end{align}

We simplify each expectation now. 

\begin{enumerate}
    \item $ -\sum_{i = 1}^m \E{(Y, \bm{\lf}), \N, \tau}{\log \frac{\Ptilde(\lf_i' = \lf_i | Y' = Y)}{\Pr(\lf_i' = \lf_i | Y' = Y)}}$:
    
    By definition of conditional KL divergence,
    \begin{align}
        &-\sum_{i = 1}^m \E{(Y, \bm{\lf}), \N, \tau}{\log \frac{\Ptilde(\lf_i' = \lf_i | Y' = Y)}{\Pr(\lf_i' = \lf_i | Y' = Y)}} = \sum_{i = 1}^m \E{(Y, \bm{\lf}), \N, \tau}{ \log \frac{\Pr(\lf_i' = \lf_i | Y' = Y )}{\Ptilde(\lf_i' = \lf_i | Y' = Y)}} \nonumber \\
        &=\sum_{i = 1}^m \E{\N, \tau}{\E{Y}{\KL(\mathrm{Pr}_{\lf_i | Y} || \Ptilde_{\lf_i | Y})}}. \nonumber 
    \end{align}
    
    \item $- \E{(Y, \bm{\lf})}{\log \frac{\Pbar(\bm{\lf}' = \bm{\lf} | Y' = Y)}{\Pr(\bm{\lf}' = \bm{\lf} | Y' = Y)}}$:

    The key difference between $\Pbar$ and $\Pr$ is how the models factorize. The above expression can be written as
    \begin{align*}
    &-\sum_{(i, j) \in E_{\lf}} \E{\lf_i \lf_j, Y}{\log \frac{\Pr(\lf_i' = \lf_i | Y' = Y) \Pr(\lf_j' = \lf_j | Y' = Y)}{\Pr(\lf_i', \lf_j' = \lf_i, \lf_j | Y' = Y)}} \\
    =& \sum_{(i, j) \in E_{\lf}} \E{\lf_i, \lf_j}{ \log \frac{\Pr(\lf_i', \lf_j' = \lf_i, \lf_j | Y = 1)}{\Pr(\lf_i' = \lf_i | Y = 1) \Pr(\lf_j' = \lf_j | Y = 1)} \bigg| \; Y = 1} \Pr(Y = 1)  \\
    &+ \E{\lf_i, \lf_j}{ \log \frac{\Pr(\lf_i', \lf_j' = \lf_i, \lf_j | Y = -1)}{\Pr(\lf_i' = \lf_i | Y = -1) \Pr(\lf_j' = \lf_j | Y = -1)} \bigg| \; Y = -1} \Pr(Y = -1).
    \end{align*}
    
    Note that these expectations are equal to the mutual information between $\lf_i$ and $\lf_j$ conditional on $Y = 1$ or $Y = -1$. Then by definition, the expression is equal to
    \begin{align*}
    \sum_{(i, j) \in E_{\lf}} I(\lf_i; \lf_j | Y = 1) \Pr(Y = 1) + I(\lf_i; \lf_j | Y = -1) \Pr(Y = -1) = \sum_{(i, j) \in E_{\lf}} I(\lf_i; \lf_j | Y).
    \end{align*}

    \item $-\E{\bm{\lf}, \N}{\log \frac{\Pr(\bm{\lf}' = \bm{\lf})}{\Phat(\bm{\lf}' = \bm{\lf})}}$: 
    
    This term is the expected negative KL divergence between the true and estimated distributions of $\bm{\lf}$, $\E{\N}{\KL(\Pr(\bm{\lf}) || \Phat(\bm{\lf}))}$. While there are many ways to estimate this distribution, we stick with simply the MLE estimate so that this expression will converge to $0$ asymptotically. 
\end{enumerate}

Therefore, \eqref{eq:cross-entropy-2} becomes
\begin{align}
   H(Y | \bm{\lf}) - \E{\N}{\KL (\Pr(\bm{\lf}) || \Phat(\bm{\lf}))} + \sum_{(i, j) \in E_{\lf}} I(\lf_i; \lf_j | Y) + \sum_{i = 1}^m \E{\N, \tau, Y}{\KL (\mathrm{Pr}_{\lf_i | Y} || \Ptilde_{\lf_i | Y} )}. \nonumber 
\end{align}

\subsection{Proof of Theorem 2} \label{app:pf_labeled}

Our goal is to evaluate $\sum_{i = 1}^m \E{\N, \tau, Y}{\KL (\mathrm{Pr}_{\lf_i | Y} || \Ptilde_{\lf_i | Y} )}$ on a labeled dataset. Using Lemma \ref{lemma:KL_estimation}, note that $\E{}{\widetilde{a}_i^L} = \bar{a}_i = a_i$. Therefore,
\begin{align*}
   \E{\N, \tau, Y}{\KL (\mathrm{Pr}_{\lf_i | Y} || \Ptilde_{\lf_i | Y} )} &= \frac{1 + a_i}{2} \cdot \frac{1}{2(1 + a_i)^2} \E{}{(\widetilde{a}_i^L - a_i)^2} + \frac{1 - a_i}{2} \cdot \frac{1}{2(1 - a_i)^2} \E{}{(\widetilde{a}_i^L - a_i)^2} + o(1/n) \\
   &=  \frac{1}{2(1 - a_i^2)} \Var{}{\widetilde{a}_i^L} + o(1/n).
\end{align*}

It can be shown that this is exactly $\frac{1}{2n_L}$. To see this, formally define $\widetilde{a}_i^L = \frac{1}{n_L} \sum_{j = 1}^{n_L} \lf_i^j Y^j$, where $\lf_i^j, Y^j$ belong the $j$th sample of the dataset. Then $
    \Var{}{\widetilde{a}_i^L} = \frac{1}{n_L^2} \sum_{j = 1}^{n_L} \Var{}{\lf_i^j Y^j} = \frac{1}{n_L^2} \sum_{j = 1}^{n_L} \E{}{\lf_i^{j2} Y^{j2}} - \E{}{\lf_i Y}^2 = \frac{1 -a_i^2}{n_L}$.
Therefore, $\sum_{i = 1}^m \E{\N, \tau, Y}{\KL (\mathrm{Pr}_{\lf_i | Y} || \Ptilde_{\lf_i | Y} )} = \frac{m}{2n_L} + o(1/n_L)$, and our proof is complete.

\subsection{Proof of Theorem 3} \label{subsec:supp_unlabeledthm}

We restate the full theorem with the value of the constants. Under assumption \ref{assumptions}, using $n_U$ weakly labeled samples and a misspecified model yields excess generalization error
\begin{align}
    R_U^{e} \le  & \varepsilon_{\max} \left(\frac{c_1 d}{m} + \frac{c_2}{\sqrt{n_U}} + \frac{c_3 d}{m n_U}\right) +\frac{c_4 m}{n_U} + \sum_{(i, j) \in E_{\lf}}I(\lf_i; \lf_j | Y) + o(1/n_U), \nonumber
\end{align}
where 
\begin{align*}
    c_1 &=  \frac{2}{b_{\min}^2 a_{\min}^2} \left(1 + \frac{1}{(1 - \bar{a}_{\max}^2) b_{\min}^2 a_{\min}^2} \right) \\
    c_2 &= \frac{1}{(1 - \bar{a}_{\max}^2) b_{\min}^2 a_{\min}^2} \sqrt{\frac{3(1 - b_{\min}^2)}{b_{\min}^2} \left(\frac{1}{b_{\min}^4} + \frac{2}{b_{\min}^2} \right)} \\
    c_3 &= \frac{3(1 - b_{\min}^2)}{(1 - \bar{a}_{\max}^2)^2 b_{\min}^4 a_{\min}^2} \left(\frac{1}{b_{\min}^4} + \frac{2}{b_{\min}^2} \right) \\
    c_4 &= \frac{3(1 - b_{\min}^2)}{8b_{\min}^2 (1 - \bar{a}_{\max}^2)} \left(\frac{1}{b_{\min}^4} + \frac{2}{b_{\min}^2} \right),
\end{align*}

and $\varepsilon_{\max}$ is an upper bound on $\varepsilon_{ij}$ defined in Lemma \ref{lemma:varepsilon}.

Define $\bar{a}_i = \E{\tau}{\sqrt{\frac{\E{}{\lf_i \lf_j} \E{}{\lf_i \lf_k}}{\E{}{\lf_j \lf_k}}}}$ to be the asymptotic estimator with expectation over triplets. We apply Lemma \ref{lemma:KL_estimation} and simplify it to get
\begin{align}
    \sum_{i = 1}^m \E{\N, \tau, Y}{\KL (\mathrm{Pr}_{\lf_i | Y} || \Ptilde_{\lf_i | Y} )} &= \sum_{i = 1}^m \Big(\frac{1 + a_i}{2} \log \Big(1 + \frac{a_i - \bar{a}_i}{1 + \bar{a}_i}\Big) + \frac{1 - a_i}{2} \log \Big(1 + \frac{\bar{a}_i - a_i}{1 - \bar{a}_i} \Big)\Big)  \label{eq:acc_decomposition}\\
    &+ \sum_{i = 1}^m \frac{a_i - \bar{a}_i}{1 - \bar{a}_i^2} \E{\N, \tau}{\bar{a}_i - \widetilde{a}_i } 
    + \sum_{i = 1}^m \frac{1}{2}\Big( \frac{1}{1 - \bar{a}_i^2} + \frac{2\bar{a}_i(\bar{a}_i - a_i)}{(1 - \bar{a}_i^2)^2}\Big) \E{\N, \tau}{(\widetilde{a}_i - \bar{a}_i)^2} \nonumber \\
    &+ o(1/n).\nonumber 
\end{align}

This shows that there are three quantities to bound: $a_i - \bar{a}_i$, $\E{\N, \tau}{\bar{a}_i - \widetilde{a}_i}$, and $\E{\N, \tau}{(\widetilde{a}_i - \bar{a}_i)^2}$. Recall that for the unlabeled data case, $\widetilde{a}_i = \sqrt{\frac{\Ehat{\lf_i \lf_j} \Ehat{\lf_i \lf_k}}{\Ehat{\lf_j \lf_k}}}$ for random $\lf_j, \lf_k$, and $\bar{a}_i = \E{\tau}{\sqrt{\frac{\E{}{\lf_i \lf_j} \E{}{\lf_i \lf_k}}{\E{}{\lf_j \lf_k}}}}$. The bounds for $\E{\N, \tau}{\bar{a}_i - \widetilde{a}_i}$, and $\E{\N, \tau}{(\widetilde{a}_i - \bar{a}_i)^2}$ are stated in Lemma \ref{lemma:sampling_error}; we focus on bounding the expected asymptotic gap $a_i - \bar{a}_i$ here. 

\begin{lemma}
For $i \in E_{\lf}$, we have that
\begin{align}
    \bar{a}_i - a_i \in \bigg[\frac{\varepsilon_{\min} b_{\min}}{m - 1} - \frac{(d - 1)\varepsilon_{\max}}{(m - 1)(m - 2) b_{\min}^2 a_{\min}^2}, \; \frac{\varepsilon_{\max}}{(m - 1) b_{\min}a_{\min}} \bigg].
\end{align}

For $i \notin E_{\lf}$, we have that
\begin{align}
    \bar{a}_i - a_i \in \bigg[\frac{-d \varepsilon_{\max}}{(m - 1)(m - 2) b_{\min}^2 a_{\min}^2}, \; \frac{-d \varepsilon_{\min} b_{\min}^2}{(m - 1)(m - 2)} \bigg].
\end{align}

\label{lemma:accuracy_bias}
\end{lemma}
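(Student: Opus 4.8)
The plan is to write $\bar a_i=\E{\tau}{\sqrt{\E{}{\lf_i\lf_j}\E{}{\lf_i\lf_k}/\E{}{\lf_j\lf_k}}}$ explicitly as a uniform average over the $\binom{m-1}{2}$ unordered pairs $\{j,k\}\subseteq[m]\setminus\{i\}$, and to classify each triplet $\{i,j,k\}$ by which of its three pairs carries an unmodeled dependency. The crucial fact is that whenever $\lf_a\perp\lf_b\mid Y$ we have $\E{}{\lf_a\lf_b}=a_a a_b$, while for a dependent pair $\E{}{\lf_a\lf_b}=a_a a_b+\varepsilon_{ab}$ with $\varepsilon_{ab}\in[\varepsilon_{\min},\varepsilon_{\max}]$ strictly positive. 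Hence any \emph{good} triplet (all three pairs conditionally independent) contributes exactly $\sqrt{a_i^2 a_j a_k/(a_j a_k)}=a_i$, using positivity of the accuracies from Assumption \ref{assumptions}, so only the \emph{bad} triplets create bias. Because $E_{\lf}$ has maximum degree one among the sources (each source is conditionally dependent on at most one other), I can enumerate the bad triplets exactly and guarantee that each carries exactly one active dependency.

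I would then split into the two cases. In the case $i\notin E_{\lf}$, no pair $(i,\cdot)$ is an edge, so a triplet is bad iff $(j,k)\in E_{\lf}$; there are exactly $d$ such pairs, and each yields the per-triplet value $a_i\sqrt{a_j a_k/(a_j a_k+\varepsilon_{jk})}<a_i$, an underestimate. In the case $i\in E_{\lf}$ with unique dependency neighbor $i'$, the bad triplets split into $m-2$ of ``type A'' that contain the edge $(i,i')$, each giving the overestimate $a_i\sqrt{1+\varepsilon_{ii'}/(a_i a_{i'})}>a_i$, and $d-1$ of ``type B'' in which $(j,k)\in E_{\lf}$ and $i,i'\notin\{j,k\}$, each giving an underestimate exactly as in the first case; the matching structure of $E_{\lf}$ guarantees these two families are disjoint.

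Next I would subtract $a_i$ termwise and multiply by the averaging weight $1/\binom{m-1}{2}=2/((m-1)(m-2))$, so that $\bar a_i-a_i$ becomes $2/((m-1)(m-2))$ times a signed sum over the bad triplets. For the per-triplet deviations I would use elementary scalar inequalities: $\sqrt{1+x}-1\le x/2$ and $1-\sqrt{u}\le 1-u$ for the outer (larger-magnitude) ends, and $\sqrt{1+x}-1=x/(1+\sqrt{1+x})$, $1-\sqrt{u}=(1-u)/(1+\sqrt{u})$ for the inner ends, after which every instance-specific quantity is replaced by the uniform constants $a_{\min}$ (lower bound on accuracies) and $b_{\min}$ (lower bound on $\E{}{\lf_a\lf_b}$), together with $\varepsilon_{\min}\le\varepsilon_{ab}\le\varepsilon_{\max}$; the substitutions are valid because $b_{\min}\le a_{\min}\le 1$. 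For $i\notin E_{\lf}$ this directly yields the stated two-sided interval. For $i\in E_{\lf}$ I would bound the positive type-A and negative type-B contributions separately: for the upper endpoint I drop the negative type-B sum and bound the largest type-A term, and for the lower endpoint I keep the smallest type-A term while subtracting the largest-magnitude type-B term, producing the displayed $\varepsilon_{\min}b_{\min}/(m-1)$ and $-(d-1)\varepsilon_{\max}/((m-1)(m-2)b_{\min}^2 a_{\min}^2)$ endpoints.

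I expect the main obstacle to be the opposite-sign bookkeeping in the case $i\in E_{\lf}$: the $m-2$ type-A triplets push $\bar a_i$ above $a_i$ while the $d-1$ type-B triplets push it below, so the interval cannot come from a single magnitude estimate but requires extremal reasoning on a signed sum—minimizing the positive part while simultaneously maximizing the negative part for the lower endpoint, and the reverse for the upper endpoint. A secondary point of care is choosing the scalar inequalities loosely enough (tolerating the extra $1/b_{\min}^2$ and $1/a_{\min}^2$ factors) that each appearance of $a_j$, $a_k$, and $\E{}{\lf_j\lf_k}$ can be replaced uniformly by $a_{\min}$ or $b_{\min}$, keeping the final constants clean while remaining valid upper and lower bounds.
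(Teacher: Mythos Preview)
Your proposal is correct and follows essentially the same approach as the paper: both arguments classify triplets by which (if any) of the three pairs carries an unmodeled edge, bound the per-triplet deviation $\bar a_i^{(j,k)}-a_i$ in each case, and then average with weight $2/((m-1)(m-2))$. The only cosmetic difference is that you factor out $a_i$ and work with $\sqrt{1+x}$ and $\sqrt{u}$, whereas the paper bounds the difference of squares $\bar a_i^{(j,k)2}-a_i^2$ directly in terms of the observable moments $\E{}{\lf_a\lf_b}$ and then divides by $\bar a_i^{(j,k)}+a_i$; these are algebraically equivalent and yield the stated constants once the uniform replacements by $a_{\min}$ and $b_{\min}$ are made.
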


And for all $i$, it is thus true that
\begin{align}
    |\bar{a}_i - a_i| \le \frac{\varepsilon_{\max}}{(m - 1)b_{\min}^2 a_{\min}^2}.
\end{align}

\begin{proof}
We define $\varepsilon_{ij} = \E{}{\lf_i \lf_j} - \E{}{\lf_i Y} \E{}{\lf_j Y}$ for $(i, j) \in E_{\lf}$, i.e. the error we get from assuming conditional independence between $\lf_i$ and $\lf_j$. We define the exact value of $\varepsilon_{ij}$ in Lemma \ref{lemma:varepsilon}, and since all canonical parameters are assumed to be positive, we know that there exist $\varepsilon_{\min}, \varepsilon_{\max}$ that satisfy $0 < \varepsilon_{\min} \le \varepsilon_{ij} \le \varepsilon_{\max}$ over the entire edgeset $E_{\lf}$. We now propagate this error to $\bar{a}_i$. Define $\bar{a}_i^{(j, k)}$ before we take the expectation over triplets as
\begin{align*}
\bar{a}_i^{(j, k)} := \sqrt{\frac{\E{}{\lf_i \lf_j} \E{}{\lf_i \lf_k}}{\E{}{\lf_j \lf_k}}}.
\end{align*}

Note that this means $\bar{a}_i \ge b_{\min}$. When each $\E{}{\lf_i \lf_j}$ can be written as $\E{}{\lf_i Y} \E{}{\lf_j Y}$, we get that $\bar{a}_i^{(j, k)} = a_i$. However, by our assumptions on the edgeset, at most one of the above pairwise expectations has nonzero $\varepsilon_{ij}$, in which case the true $a_i$ is computed using $\E{}{\lf_i \lf_j} - \varepsilon_{ij}$, which is equal to $\E{}{\lf_i Y} \E{}{\lf_j Y}$, rather than $\E{}{\lf_i \lf_j}$.

If $(i, j) \in E_{\lf}$ (but not $(j, k)$ or $(i, k)$) then
\begin{align*}
a_i = \sqrt{\frac{(\E{}{\lf_i \lf_j} - \varepsilon_{ij}) \E{}{\lf_i \lf_k}}{\E{}{\lf_j \lf_k}}}.
\end{align*}

This means that $\bar{a}_i \ge a_i$ and we asymptotically overestimate the accuracy. Then the difference  between $\bar{a}_i^{(j,k) 2}$ and $a_i^2$ is
$\bar{a}_i^{(j, k) 2} - a_i^2 = \frac{\varepsilon_{ij} \E{}{\lf_i \lf_k}}{\E{}{\lf_j \lf_k}} \in \big[\varepsilon_{\min} b_{\min}, \frac{\varepsilon_{\max}}{b_{\min}}\big]$. Moreover, $\bar{a}_i^{(j, k)} - a_i = \frac{\bar{a}_i^{(j, k) 2} - a_i^2}{\bar{a}_i^{(j, k)} + a_i}$. Since $\bar{a}_i \ge a_i$ in this case, we have that $\bar{a}_i^{(j, k)} + a_i \in [2a_{\min}, 2]$; as a result,
\begin{align}
\bar{a}_i^{(j, k)} - a_i \in \big[ \frac{\varepsilon_{\min} b_{\min}}{2}, \frac{\varepsilon_{\max}}{2 b_{\min} a_{\min}}\big]. \label{eq:acc_diff_ij}
\end{align}

Similarly, if $(i,k) \in E_{\lf}$, we have the same bounds: $\bar{a}_i^{(j, k)2} - a_i^2 = \frac{\varepsilon_{ik} \E{}{\lf_i \lf_j}}{\E{}{\lf_j \lf_k}} \in \big[\varepsilon_{\min} b_{\min}, \frac{\varepsilon_{\max}}{b_{\min}}\big]$, and thus $\bar{a}_i^{(j, k)} - a_i \in \big[ \frac{\varepsilon_{\min} b_{\min}}{2}, \frac{\varepsilon_{\max}}{2 b_{\min} a_{\min}}\big]$. On the other hand, if $(j, k) \in E_{\lf}$, the true accuracy is written as
\begin{align*}
a_i = \sqrt{\frac{\E{}{\lf_i \lf_j} \E{}{\lf_i \lf_k}}{(\E{}{\lf_j \lf_k}- \varepsilon_{jk})} }.
\end{align*}

This means that $\bar{a}_i^{(j, k)} \le a_i$ and we asymptotically underestimate the accuracy. The difference between $\bar{a}_i^{(j, k) 2}$ and $a_i^2$ is $a_i^2 - \bar{a}_i^{(j, k) 2}  = \frac{\varepsilon_{jk} \E{}{\lf_i \lf_j} \E{}{\lf_i \lf_k}}{\E{}{\lf_j \lf_k}(\E{}{\lf_j \lf_k} - \varepsilon_{jk})} \in \big[\varepsilon_{\min} b_{\min}^2, \frac{\varepsilon_{\max}}{b_{\min}a_{\min}^2} \big]$. In this case, $a_i + \bar{a}_i^{(j, k)} \in [2b_{\min}, 2]$, so 
\begin{align}
a_i - \bar{a}_i^{(j, k)} \in \Big[\frac{\varepsilon_{\min} b_{\min}^2}{2}, \frac{\varepsilon_{\max}}{2b_{\min}^2 a_{\min}^2} \Big].
\label{eq:acc_diff_jk}
\end{align}

Lastly, if none of $i, j, k$ share edges, $\bar{a}_i = a_i$. In our algorithm, we estimate each $a_i$ using $\lf_j$ and $\lf_k$ chosen uniformly at random from the other $m - 1$ sources. We thus need to compute the probabilities that $(i, j), (i, k)$ and $(j, k)$ are in $E_{\lf}$. Note that these probabilities depend on if $i \in E_{\lf}$, which is true for $2d$ sources. 
\begin{align}
    &\Pr((i, j) \cup (i, k) \in E_{\lf} \; | \; i \notin E_{\lf}) = 0 \qquad \qquad \; \Pr((i, j) \cup (i, k) \in E_{\lf} \; | \; i \in E_{\lf}) = \frac{1(m - 2)}{{m - 1 \choose 2}} =  \frac{2}{m - 1} \nonumber \\
    &\Pr((j, k) \in E_{\lf} \;| \; i \notin E_{\lf}) = \frac{2d}{(m - 1)(m - 2)}  \quad \Pr((j, k) \in E_{\lf} \;| \; i \in E_{\lf}) = \frac{2(d - 1)}{(m - 1)(m - 2)} \nonumber
\end{align} 

Therefore, if $i \in E_{\lf}$, we use \eqref{eq:acc_diff_ij} and \eqref{eq:acc_diff_jk} to bound the expected error as
\begin{align}
\bar{a}_i - a_i &\le \frac{2}{m - 1} \cdot \frac{\varepsilon_{\max}}{2b_{\min} a_{\min}} + \frac{2(d - 1)}{(m - 1)(m - 2)} \cdot \frac{-\varepsilon_{\min} b_{\min}^2}{2} \le \frac{\varepsilon_{\max}}{(m - 1) b_{\min} a_{\min}}, \\
\bar{a}_i - a_i &\ge \frac{2}{m - 1} \cdot \frac{\varepsilon_{\min} b_{\min}}{2} + \frac{2(d - 1)}{(m - 1)(m - 2)} \cdot \frac{-\varepsilon_{\max}}{2 b_{\min}^2 a_{\min}^2} = \frac{\varepsilon_{\min} b_{\min}}{m - 1} - \frac{(d - 1)\varepsilon_{\max}}{(m - 1)(m - 2)b^2_{\min} a^2_{\min}}.
\end{align}

Note that this lower bound can be negative in this case, so it is not clear if $\bar{a}_i$ or $a_i$ is bigger in expectation. 

If $i \notin E_{\lf}$, using \eqref{eq:acc_diff_jk} then the expected error is bounded as
\begin{align}
\bar{a}_i - a_i &\le \frac{2d}{(m - 1)(m - 2)} \cdot \frac{-\varepsilon_{\min} b_{\min}^2}{2} = \frac{-d\varepsilon_{\min} b_{\min}^2 }{(m - 1)(m - 2)},  \\
\bar{a}_i - a_i &\ge \frac{2d}{(m - 1)(m - 2)}  \cdot \frac{-\varepsilon_{\max}}{2 b_{\min}^2 a_{\min}^2} = \frac{-d \varepsilon_{\max}}{(m - 1)(m - 2) b_{\min}^2 a_{\min}^2}.
\end{align}

In this case, $\bar{a}_i \le a_i$. Finally, observe that regardless of if $i \in E_{\lf}$ or not, the absolute value of the bias is bounded by  
\begin{align}
    |\bar{a}_i - a_i| \le \frac{\varepsilon_{\max}}{(m - 1)b_{\min}^2 a_{\min}^2}.
\end{align}
\end{proof}

We return to \eqref{eq:acc_decomposition}. Since $a_i \ge \bar{a}_i$ when $i \notin E_{\lf}$, we have that $\frac{1 + a_i}{2}\log(1 + \frac{a_i - \bar{a}_i}{1 + \bar{a}_i}) + \frac{1 - a_i}{2} \log (1 + \frac{\bar{a}_i - a_i}{1 - \bar{a}_i}) \le \frac{1 + a_i}{2} \log (1 + \max \frac{a_i - \bar{a}_i}{1 + \bar{a}_i})$ for $i \notin E_{\lf}$. On the other hand when $i \in E_{\lf}$, this expression can be upper bounded as $\frac{1+a_i}{2} \cdot \frac{a_i - \bar{a}_i}{1 + \bar{a}_i} + \frac{1 - a_i}{2} \frac{\bar{a}_i - a_i}{1 - \bar{a}_i} = \frac{(\bar{a}_i - a_i)^2}{1 - \bar{a}_i^2}$ using the inequality $\log(1 + x) \le x$ for $x > -1$ (it can be easily verified that $\frac{a_i - \bar{a}_i}{1 + \bar{a}_i}$ and $\frac{\bar{a}_i - a_i}{1 - \bar{a}_i}$ are at least $-1$). Since $|E_{\lf}| = 2d$ and $\varepsilon_{\max} \le 1$, the first summation of \eqref{eq:acc_decomposition} is bounded by 
\begin{align}
    &(m - 2d) \log \left(1 + \frac{d\varepsilon_{\max}}{(m - 1)(m - 2) b_{\min}^2 a_{\min}^2 (1 + b_{\min})} \right) + 2d \frac{\varepsilon_{\max}^2}{(1 - \bar{a}_{\max}^2) (m - 1)^2 b_{\min}^4 a_{\min}^4} \\
    \le &\frac{(m - 2d) d\varepsilon_{\max}}{(m - 1)(m - 2) b_{\min}^2 a_{\min}^2 (1 + b_{\min})} + \frac{2d \varepsilon_{\max}}{(1 - \bar{a}_{\max}^2) (m - 1)^2 b_{\min}^4 a_{\min}^4} \nonumber \\
    = & \frac{d\varepsilon_{\max}}{(m - 1)b_{\min}^2 a_{\min}^2} \left( \frac{m - 2d}{(m - 2)(1 + b_{\min})} + \frac{2}{(1 - \bar{a}_{\max}^2)(m - 1) b_{\min}^2 a_{\min}^2}\right) \nonumber \\
    \le & \frac{d\varepsilon_{\max}}{(m - 1)b_{\min}^2 a_{\min}^2} \left(1 + \frac{1}{(1 - \bar{a}_{\max}^2) b_{\min}^2 a_{\min}^2}\right) \le \frac{c_1 d \varepsilon_{\max}}{m}, \nonumber
\end{align}

where $c_1 = \frac{2}{b_{\min}^2 a_{\min}^2} \left(1 + \frac{1}{(1 - \bar{a}_{\max}^2) b_{\min}^2 a_{\min}^2} \right)$. Next, we bound $\sum_{i = 1}^m \frac{a_i - \bar{a}_i}{1 - \bar{a}_i^2} \E{\N, \tau}{\bar{a}_i - \widetilde{a}_i}$:
\begin{align}
    \sum_{i = 1}^m \frac{a_i - \bar{a}_i}{1 - \bar{a}_i^2} \E{\N, \tau}{\bar{a}_i - \widetilde{a}_i} \le &\sum_{i = 1}^m \frac{|\bar{a}_i - a_i|}{1 - \bar{a}_i^2} \E{\N, \tau}{|\bar{a}_i - \widetilde{a}_i|} \\
    \le &\frac{\sqrt{3}}{2\sqrt{n_U}} \cdot \sqrt{\frac{1 - b_{\min}^2}{b_{\min}^2} \left(\frac{1}{b_{\min}^4} + \frac{2}{b_{\min}^2} \right)} \frac{1}{1 - \bar{a}_{\max}^2} \left(\frac{m \varepsilon_{\max}}{(m - 1) b_{\min}^2 a_{\min}^2}\right) \le \frac{c_2 \varepsilon_{\max}}{ \sqrt{n_U}}, \nonumber
\end{align}

where $c_2 = \frac{1}{(1 - \bar{a}_{\max}^2) b_{\min}^2 a_{\min}^2} \sqrt{\frac{3(1 - b_{\min}^2)}{b_{\min}^2} \left(\frac{1}{b_{\min}^4} + \frac{2}{b_{\min}^2} \right)}$. We bound $\sum_{i = 1}^m \frac{1}{2}\Big( \frac{1}{1 - \bar{a}_i^2} + \frac{2\bar{a}_i(\bar{a}_i - a_i)}{(1 - \bar{a}_i^2)^2}\Big) \E{\N, \tau}{(\widetilde{a}_i - \bar{a}_i)^2}$, which can be split into an expression independent of misspecification and one dependent on it:
\begin{align}
    \sum_{i = 1}^m \frac{1}{2}\Big( \frac{1}{1 - \bar{a}_i^2} + \frac{2\bar{a}_i(\bar{a}_i - a_i)}{(1 - \bar{a}_i^2)^2}\Big) \E{\N, \tau}{(\widetilde{a}_i - \bar{a}_i)^2} \le \frac{c_4 m}{n_U} + \sum_{i = 1}^m \frac{\bar{a}_i - a_i}{(1 - \bar{a}_i^2)^2} \E{\N, \tau}{(\widetilde{a}_i - \bar{a}_i)^2}, \label{eq:variance_of_estimator}
\end{align}

where $c_4 = \frac{3(1 - b_{\min}^2)}{8b_{\min}^2 (1 - \bar{a}_{\max}^2)} \left(\frac{1}{b_{\min}^4} + \frac{2}{b_{\min}^2} \right)$. The summation in \eqref{eq:variance_of_estimator} is bounded as follows, using the fact that $\bar{a}_i \le a_i$ for $i \notin E_{\lf}$:
\begin{align}
    \sum_{i = 1}^m \frac{\bar{a}_i - a_i}{(1 - \bar{a}_i^2)^2} \E{\N, \tau}{(\widetilde{a}_i - \bar{a}_i)^2} &\le \frac{3}{4 n_U} \cdot \frac{1 - b_{\min}^2}{b_{\min}^2 (1 - \bar{a}_{\max}^2)^2} \left(\frac{1}{b_{\min}^4} + \frac{2}{b_{\min}^2} \right) \sum_{i \in E_{\lf}} |\bar{a}_i - a_i | \\
    &\le \frac{3}{4 n_U} \cdot \frac{1 - b_{\min}^2}{b_{\min}^2 (1 - \bar{a}_{\max}^2)^2} \left(\frac{1}{b_{\min}^4} + \frac{2}{b_{\min}^2} \right) \left(\frac{2d \varepsilon_{\max}}{(m - 1) b_{\min}^2 a_{\min}^2}\right) \le \frac{c_3 d \varepsilon_{\max}}{m n_U}, \nonumber 
\end{align}

where $c_3 = \frac{3(1 - b_{\min}^2)}{(1 - \bar{a}_{\max}^2)^2 b_{\min}^4 a_{\min}^2} \left(\frac{1}{b_{\min}^4} + \frac{2}{b_{\min}^2} \right)$. This concludes our proof.

\subsection{Proof of Proposition 1} \label{subsec:medians}

To prove the ability of using the median of the accuracies to correct for misspecification, we first examine the asymptotic case. For $i \in E_{\lf}$, note that out of a total of ${m - 1 \choose 2}$ triplets, $m - 2$ of them will involve the edge $(i, j)\in E_{\lf}$, resulting in a higher inconsistent estimate of the accuracy. $d - 1$ of them will involve an edge $(j, k) \in E_{\lf}$, resulting in a lower estimate of the accuracy. Therefore, $\frac{(m - 1)(m - 2)}{2} - m - d - 3$ triplets are consistent. As long as the ${m - 1 \choose 2} - (m - 2)$th largest triplet is greater than half of all the triplets, and the $d - 1$th largest triplet is less than the half of all the triplets, then the median will be a consistent triplet. This gives us the conditions $m > 5$ and $d < \frac{(m - 1)(m - 2)}{4}$.

Next, for $i \notin E_{\lf}$, $d$ triplets will involve an edge $(j, k) \in E_{\lf}$, resulting in lower estimated accuracy, while the other ${m - 1 \choose 2} - d$ triplets are consistent. Therefore, as long as $d < \frac{(m - 1)(m - 2)}{4}$, the median triplet is consistent. 

Lastly, we must consider the finite-sample regime when the ordering of the accuracy estimates are perturbed by sampling noise. When each accuracy's expected sampling noise is less than half of the minimum standing bias of a triplet, the order of the accuracies will not change on average. This translates into the inequality $\E{}{|\widetilde{a}_i - \bar{a}_i|} \le \frac{1}{2} \min_{(j, k)}|a_i - \bar{a}_i^{(j, k)}|$. The minimum standing bias is $\frac{\varepsilon_{\min} b_{\min}^2}{2}$, and $\E{}{|\widetilde{a}_i - \bar{a}_i|} \sim \mathcal{O}(1/\sqrt{n})$ so this means that $n_U \ge n_0 \sim \Omega(1/\varepsilon_{\min}^2)$.

Lastly, we compute the excess risk when using the corrected estimator. From Lemma \ref{lemma:accuracy_bias}, since the asymptotic expectation $\bar{a}$ of the estimator is equal to the true accuracy $a$, we have
\begin{align}
    \E{\N, \tau, Y}{\KL(\text{Pr}_{\lf_i | Y} || \Ptilde_{\lf_i | Y})} &= \frac{1 + a_i}{2} \bigg(\frac{\mathbb{E}[a_i - \widetilde{a}_i^M]}{1 + a_i} + \frac{1}{2(1 + a_i)^2} \mathbb{E}[(\widetilde{a}_i^M - a_i)^2]\bigg) \\
    &+ \frac{1 - a_i}{2} \bigg(\frac{\mathbb{E}[\widetilde{a}_i^M - a_i]}{1 - a_i} + \frac{1}{2(1 - a_i)^2} \mathbb{E}[(\widetilde{a}_i^M - a_i)^2] \bigg). \nonumber 
\end{align}

Note that $\frac{1 + a_i}{2} \cdot \frac{\mathbb{E}[a_i - \widetilde{a}_i^M]}{1 + a_i} + \frac{1 - a_i}{2} \cdot \frac{\mathbb{E}[\widetilde{a}_i^M - a_i]}{1 - a_i} = 0$. Then the parameter estimation error is
\begin{align}
    \sum_{i = 1}^m \bigg(\frac{1}{4(1 + a_i)} + \frac{1}{4(1 - a_i)}\bigg) \E{}{(\widetilde{a}_i^M - a_i)^2} = \sum_{i = 1}^m \frac{1}{2(1 - a_i^2)} \E{}{(\widetilde{a}_i^M - a_i)^2} \le \frac{1}{2(1 - \max_i a_i^2)} \cdot m \rho_{n_U}.
\end{align}

This completes our proof, where $c_\rho = \frac{1}{2(1 - \max_i a_i^2)}$ in Proposition \ref{prop:medians}.

\section{Auxiliary Lemmas}

\begin{lemma} (Symmetry of the distribution).
For any source $\lf_i$ with accuracy $a_i = \E{}{\lf_i Y}$,
\begin{align*}
\Pr(\lf_i = 1 | Y = 1) &= \Pr(\lf_i = -1 | Y = -1) = \frac{1 + a_i}{2}  \\
\Pr(\lf_i = -1 | Y = 1) &= \Pr(\lf_i = 1 | Y = -1) = \frac{1 - a_i}{2}.
\end{align*}
\label{lemma:fs_symmetry}
\end{lemma}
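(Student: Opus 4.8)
The plan is to exploit the invariance of the Ising model under the global sign flip $(\bm{\lf}, Y) \mapsto (-\bm{\lf}, -Y)$, which delivers the two cross-equalities for free, and then to compute $a_i$ directly to pin down the numerical value $\frac{1+a_i}{2}$. The whole argument is self-contained from Assumption~\ref{assumptions}; no other result is needed.

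First I would pass to the conditional law of the sources given the label. Conditioning on $Y = y$, the field term $\theta_Y$ acting on $Y$ and the cumulant $Z$ are constants that get absorbed into the conditional normalizer, so
\[
\Pr(\bm{\lf} \mid Y = y) \;\propto\; \exp\Big(\sum_{i=1}^m \theta_i \lf_i\, y + \sum_{(i,j)\in E_\lf}\theta_{ij}\lf_i\lf_j\Big),
\]
with normalizer $Z_y$ summed over all source configurations. The coupling terms $\theta_{ij}\lf_i\lf_j$ are even in $\bm{\lf}$, while the field terms $\theta_i\lf_i y$ change sign when $y$ does. Hence substituting $\bm{\lf}\mapsto -\bm{\lf}$ in the $Y=-1$ conditional reproduces exactly the $Y=1$ conditional, and the same substitution inside the sum defining $Z_{-1}$ shows $Z_{-1}=Z_1$. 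This gives $\Pr(\bm{\lf}\mid Y=1) = \Pr(-\bm{\lf}\mid Y=-1)$ for every configuration, and marginalizing onto a single coordinate yields $\Pr(\lf_i = s \mid Y=1) = \Pr(\lf_i = -s \mid Y = -1)$ for $s \in \{-1,+1\}$ — precisely the left-hand pair of equalities in the statement.

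It then remains to identify the common value. Writing $p := \Pr(\lf_i = 1 \mid Y=1)$, the symmetry just derived gives $\Pr(\lf_i = -1\mid Y=-1) = p$ and $\Pr(\lf_i = 1\mid Y=-1) = 1-p$. Thus $\E{}{\lf_i \mid Y=1} = 2p-1$ and $\E{}{\lf_i \mid Y=-1} = 1-2p$, so that $\E{}{\lf_i Y \mid Y=y} = 2p-1$ for \emph{both} $y=\pm1$. Averaging over $Y$ by the law of total expectation collapses the class-balance weights and gives $a_i = \E{}{\lf_i Y} = 2p-1$, i.e. $p = \frac{1+a_i}{2}$, with the complementary probabilities equal to $\frac{1-a_i}{2}$.

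I do not anticipate a serious obstacle; the one point that must be handled with care is the reduction to the conditional distribution, namely verifying that the $\theta_Y$ field and the partition function genuinely drop out so that the conditional is sign-symmetric \emph{even when} the class balance $\Pr(Y=1)$ differs from $\frac12$. Once that is in place, the key observation that $\E{}{\lf_i Y \mid Y=y}$ is the same for both labels makes the final computation of $a_i$ immediate, and everything else is routine bookkeeping.
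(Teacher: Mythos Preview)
Your proof is correct. The paper's own argument is briefer but less self-contained: it cites Proposition~2 of \cite{fu2020fast} for the fact that $\lf_i Y \independent Y$ in this Ising model, and then reads off $\Pr(\lf_i Y = 1 \mid Y = y) = \Pr(\lf_i Y = 1) = \tfrac{1+a_i}{2}$ directly. Your approach makes that cited independence explicit: the global sign-flip symmetry $(\bm{\lf},Y)\mapsto(-\bm{\lf},-Y)$ of the conditional law is exactly what forces $\Pr(\lf_i = s \mid Y=1)=\Pr(\lf_i=-s\mid Y=-1)$, which is equivalent to $\lf_i Y \independent Y$, and you then do the same arithmetic to extract $p=\tfrac{1+a_i}{2}$. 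So the two proofs rest on the same mechanism; yours is more elementary in that it derives the key symmetry from the Ising form rather than importing it, and it handles the class-imbalance point ($\theta_Y\neq 0$) transparently, whereas the paper's version is terser but relies on the reader trusting the external proposition.
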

\begin{proof}
By Proposition $2$ of \cite{fu2020fast}, we know that $\lf_i Y \independent Y$ for the binary Ising model we use, defined in section \ref{sec:background}. Intuitively, this means that the accuracy of a source is independent of the value of $Y$, and therefore $\Pr(\lf_i Y = 1 | Y = 1) = \Pr(\lf_i Y = 1) = \frac{1 + a_i}{2}$, since $\E{}{\lf_i Y} = 2 \Pr(\lf_i Y = 1) - 1$. Repeating this calculation with remaining configurations of $\Pr(\lf_i Y = \pm 1 | Y = \pm 1)$ concludes our proof.
\end{proof}

\begin{lemma}
Define $a_i = \E{}{\lf_i Y},$ and let $\widetilde{a}_i$ be our estimated accuracy on $n$ points. Furthermore, let $\bar{a}_i$ be the expected asymptotic value of $\widetilde{a}_i$ over $\tau$.   
Then, the estimation error is
\begin{align*}
\E{Y, \N, \tau}{\KL (\mathrm{Pr}_{\lf_i | Y} || \Ptilde_{\lf_i | Y} )} = &\frac{1 + a_i}{2} \Big(\log\Big(1 + \frac{a_i - \bar{a}_i }{1 + \bar{a}_i}\Big) + \frac{\E{\N, \tau}{\bar{a}_i - \widetilde{a}_i}}{1 + \bar{a}_i} + \frac{1}{2(1 + \bar{a}_i)^2} \E{\N, \tau}{(\widetilde{a}_i - \bar{a}_i)^2}\Big) \\
&+  \frac{1 - a_i}{2}\Big(\log\Big(1 + \frac{\bar{a}_i - a_i}{1 - \bar{a}_i} \Big) +   \frac{\E{\N, \tau}{\widetilde{a}_i - \bar{a}_i}}{1 - \bar{a}_i} + \frac{1}{2(1 - \bar{a}_i)^2} \E{\N, \tau}{(\widetilde{a}_i - \bar{a}_i)^2}\Big) \\
&+ o(1/n).
\end{align*}
\label{lemma:KL_estimation}
\end{lemma}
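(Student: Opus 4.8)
The plan is to reduce the expected conditional KL divergence to a one-dimensional function of the estimated accuracy $\widetilde a_i$ and then Taylor expand it around its asymptotic mean $\bar a_i$. First I would invoke Lemma \ref{lemma:fs_symmetry} to parametrize both conditional laws by accuracies: $\Pr(\lf_i = 1 \mid Y = 1) = \frac{1+a_i}{2}$, $\Pr(\lf_i = -1 \mid Y = 1) = \frac{1-a_i}{2}$, with the roles of $\pm 1$ swapped when $Y = -1$, and the estimated law $\Ptilde_{\lf_i \mid Y}$ obeying identical formulas with $\widetilde a_i$ in place of $a_i$. A direct computation shows that $\KL(\mathrm{Pr}_{\lf_i \mid Y=1} \| \Ptilde_{\lf_i \mid Y=1})$ and $\KL(\mathrm{Pr}_{\lf_i \mid Y=-1} \| \Ptilde_{\lf_i \mid Y=-1})$ are equal, since swapping the roles of $+1$ and $-1$ leaves the two-term sum invariant. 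Hence, conditioning on $Y$ and averaging collapses to the single scalar
$$g(\widetilde a_i) := \frac{1+a_i}{2}\log\frac{1+a_i}{1+\widetilde a_i} + \frac{1-a_i}{2}\log\frac{1-a_i}{1-\widetilde a_i},$$
so that $\E{Y}{\KL(\mathrm{Pr}_{\lf_i \mid Y} \| \Ptilde_{\lf_i \mid Y})} = g(\widetilde a_i)$, with the only remaining randomness entering through $\widetilde a_i$ via $\N, \tau$.

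Next I would Taylor expand $g$ as a function of $\widetilde a_i$ about $\bar a_i$. Writing $g(\widetilde a_i) = h(\widetilde a_i) + C$ with $h(x) = -\frac{1+a_i}{2}\log(1+x) - \frac{1-a_i}{2}\log(1-x)$ and $C$ independent of $\widetilde a_i$, the derivatives are $h'(x) = -\frac{1+a_i}{2(1+x)} + \frac{1-a_i}{2(1-x)}$ and $h''(x) = \frac{1+a_i}{2(1+x)^2} + \frac{1-a_i}{2(1-x)^2}$. Expanding to second order gives $g(\widetilde a_i) = g(\bar a_i) + h'(\bar a_i)(\widetilde a_i - \bar a_i) + \tfrac12 h''(\bar a_i)(\widetilde a_i - \bar a_i)^2 + R$. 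Taking $\E{\N,\tau}{\cdot}$ and regrouping by the coefficients $\frac{1 \pm a_i}{2}$, the zeroth-order term $g(\bar a_i)$ produces the two logarithms, using $\log\frac{1+a_i}{1+\bar a_i} = \log\big(1 + \frac{a_i - \bar a_i}{1+\bar a_i}\big)$ and symmetrically; the first-order term yields $\frac{1+a_i}{2}\cdot\frac{\E{\N,\tau}{\bar a_i - \widetilde a_i}}{1+\bar a_i} + \frac{1-a_i}{2}\cdot\frac{\E{\N,\tau}{\widetilde a_i - \bar a_i}}{1-\bar a_i}$; and the second-order term yields the two $\frac{1}{2(1\pm\bar a_i)^2}\E{\N,\tau}{(\widetilde a_i - \bar a_i)^2}$ pieces. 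These match the stated terms exactly.

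The main obstacle is controlling the expected remainder $\E{\N,\tau}{R}$ and certifying it is $o(1/n)$, since the higher derivatives of $h$ blow up as $\widetilde a_i \to \pm 1$. To handle this I would use Assumption \ref{assumptions}: accuracies are bounded away from $0$ (so $\bar a_i \ge b_{\min} > 0$) and from $1$ (via $\bar a_{\max} < 1$), which confines $\widetilde a_i$ to a compact subinterval of $(-1,1)$ on a high-probability event. On that event $|h'''|$ is uniformly bounded, so the Lagrange remainder is of order $|\widetilde a_i - \bar a_i|^3$; on the complementary low-probability event, where sampling fluctuations drive $\widetilde a_i$ near $\pm 1$, the contribution is negligible because the tail probabilities of the empirical moment estimators decay faster than any polynomial in $1/n$. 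Since $\widetilde a_i$ is a smooth function of sample averages converging to $\bar a_i$ at the $1/\sqrt n$ rate, we have $\E{\N,\tau}{|\widetilde a_i - \bar a_i|^3} = O(n^{-3/2}) = o(1/n)$, which delivers the claimed $o(1/n)$ remainder and completes the expansion.
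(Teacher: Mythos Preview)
Your proposal is correct and follows essentially the same approach as the paper: reduce the expected conditional KL to the scalar function $g(\widetilde a_i)$ via Lemma~\ref{lemma:fs_symmetry}, Taylor expand to second order about $\bar a_i$, and control the third-order remainder. Your first step is in fact a bit more direct than the paper's---the paper arrives at the same $g(\widetilde a_i)$ but through a longer calculation that conditions on $\bm{\lf}_{-i}$ and splits into cases according to whether $\lf_i\in E_\lf$, whereas you observe immediately that the log-ratio depends only on $(\lf_i,Y)$ so the other sources integrate out; the paper handles the $o(1/n)$ remainder in a separate lemma via a Hoeffding/union-bound argument of exactly the high-probability/tail-event structure you sketch, though it only establishes $o(1/n)$ rather than the $O(n^{-3/2})$ you assert.
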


\begin{proof}

As discussed previously, this term is equal to $-\E{(Y, \bm{\lf}), \N, \tau}{\log \frac{\Ptilde(\lf_i' = \lf_i | Y' = Y)}{\Pr(\lf_i' = \lf_i | Y' = Y)}}$. By the law of total expectation, we now have
\begin{align}
    -\E{\bm{\lf}, \N, \tau}{ \Pr(Y = 1 | \bm{\lf}' = \bm{\lf}) \log \frac{\Ptilde(\lf_i' = \lf_i | Y = 1)}{\Pr(\lf_i' = \lf_i | Y = 1)} +  \Pr(Y = -1 | \bm{\lf}' = \bm{\lf}) \log \frac{\Ptilde(\lf_i' = \lf_i | Y = -1)}{\Pr(\lf_i' = \lf_i | Y = -1)}}.
    \label{eq:estimation1}
\end{align}

Suppose $\lf_i \notin E_{\lf}$. Conditioning on the value of $\lf_i$ and using Lemma \ref{lemma:fs_symmetry}, \eqref{eq:estimation1} becomes 
\begin{align*}
&-\E{\bm{\lf}_{-i}, \N, \tau}{\E{\lf_i}{\Pr(Y = 1 | \bm{\lf}' = \bm{\lf}) \log \frac{\Ptilde(\lf_i' = \lf_i | Y = 1)}{\Pr(\lf_i' = \lf_i | Y = 1) }  + \Pr(Y = -1 | \bm{\lf}' = \bm{\lf}) \log \frac{\Ptilde(\lf_i' = \lf_i | Y = -1)}{\Pr(\lf_i' = \lf_i | Y = -1)} \Big| \bm{\lf}_{-i}}} \\
= \;&-\mathbb{E}_{\bm{\lf}_{-i}, \N, \tau}\bigg[\big(\Pr(Y = 1 | \bm{\lf}_{-i}, \lf_i = 1) \Pr(\lf_i = 1 | \bm{\lf}_{-i}) + \Pr(Y = -1 | \bm{\lf}_{-i}, \lf_i = -1) \Pr(\lf_i = -1 | \bm{\lf}_{-i}) \big)\log \frac{1 + \widetilde{a}_i}{1 + a_i} \\
&+  \big(\Pr(Y = 1 | \bm{\lf}_{-i}, \lf_i = -1) \Pr(\lf_i = -1 | \bm{\lf}_{-i}) + \Pr(Y = -1 | \bm{\lf}_{-i}, \lf_i = 1) \Pr(\lf_i = 1 | \bm{\lf}_{-i})  \big)\log \frac{1 - \widetilde{a}_i}{1 - a_i} \bigg] \\
= \;& -\mathbb{E}_{\bm{\lf}_{-i}, \N, \tau}\bigg[\Pr(\lf_i Y = 1 | \bm{\lf}_{-i}) \log \frac{1 + \widetilde{a}_i}{1 - a_i} + \Pr(\lf_i Y = -1 | \bm{\lf}_{-i})\log \frac{1 - \widetilde{a}_i}{1 - a_i} \bigg].
\end{align*}

Note that $\Pr(\lf_i = 1, Y = 1 | \bm{\lf}_{-i}) = \Pr(\lf_i = 1 | Y = 1)  \frac{\Pr(\bm{\lf}_{-i}, Y = 1)}{\Pr(\bm{\lf}_{-i})}$ and $\Pr(\lf_i = -1, Y = -1 | \bm{\lf}_{-i}) = \Pr(\lf_i = -1 | Y = -1) \frac{ \Pr(\bm{\lf}_{-i}, Y = -1)}{\Pr(\bm{\lf}_{-i})}$ since $\lf_i$ and $\lf_{-i}$ are conditionally independent given $Y$, so $\Pr(\lf_i Y = 1 | \bm{\lf}_{-i}) = \Pr(\lf_i = 1| Y = 1) = \frac{1 + a_i}{2}$. Similarly, $\Pr(\lf_i Y = -1 | \bm{\lf}_{}-i) = \Pr(\lf_i = -1 | Y = 1) = \frac{1 - a_i}{2}$, so the conditional KL divergence is equal to 
\begin{align}
\E{\N, \tau, Y}{\KL (\mathrm{Pr}_{\lf_i | Y} || \Ptilde_{\lf_i | Y} )} =-&\E{\N, \tau}{\frac{1 + a_i}{2} \log \frac{1 + \widetilde{a}_i}{1 + a_i} + \frac{1 - a_i}{2} \log \frac{1 - \widetilde{a}_i}{1 - a_i}}. \label{eq:estimation2}
\end{align}

Now suppose that $\lf_i \in E_{\lf}$ and has an edge to some $\lf_j$. When we simplify \eqref{eq:estimation1} by conditioning on $\lf_i, \lf_j$, we find that $\sum_{l \in \{\pm 1\}}\Pr(Y = 1 | \bm{\lf}_{-i, j}, \lf_i = 1, \lf_j = l) \Pr(\lf_i = 1, \lf_j = l | \bm{\lf}_{-i, j}) + \Pr(Y = -1 | \bm{\lf}_{-i, j}, \lf_i = -1, \lf_j = l) \Pr(\lf_i = -1, \lf_j = l | \bm{\lf}_{-i, j} )$ (i.e, the coefficient for $\log \frac{1 + \widetilde{a}_i}{1 + a_i}$) is equal to $\Pr(\lf_i Y = 1 | \bm{\lf}_{-i, j})$, and this is still equal to $\frac{1 + a_i}{2}$. The same holds for the coefficient of $\log \frac{1 - \widetilde{a}_i}{1 - a_i}$. Therefore, \eqref{eq:estimation2} holds for all $\lf_i$.

Next, we evaluate $-\E{}{\log \frac{1 + \widetilde{a}_i}{1 + a_i}}$ and $-\E{}{\log \frac{1 - \widetilde{a}_i}{1 - a_i}}$, where expectation is over $\N$ and $\tau$. We apply a second-order Taylor approximation of $f(x) = \log \frac{1 + x}{1 + a_i}$ at $x = \bar{a}_i$:
\begin{align*}
\log \frac{1 + \widetilde{a}_i}{1 + a_i} \approx \log \frac{1 + \bar{a}_i}{1 + a_i} + \frac{1 + a_i}{1 + \bar{a}_i} \cdot \frac{1}{1 + a_i} (\widetilde{a}_i - \bar{a}_i) - \frac{1}{2(1 + \bar{a}_i)^2} (\widetilde{a}_i - \bar{a}_i)^2 + o(1/n).
\end{align*}

Taking the expectation on both sides, we get 
\begin{align*}
-\E{\N, \tau}{\log \frac{1 + \widetilde{a}_i}{1 + a_i}} &\approx -\Big(\log \frac{1 + \bar{a}_i}{1 + a_i} + \frac{\E{\N, \tau}{\widetilde{a}_i} - \bar{a}_i}{1 + \bar{a}_i} - \frac{1}{2(1 + \bar{a}_i)^2} \E{\N, \tau}{(\widetilde{a}_i - \bar{a}_i)^2}\Big) + o(1/n) \\
&= \log \Big(1 + \frac{a_i - \bar{a}_i}{1 + \bar{a}_i} \Big) + \frac{\E{\N, \tau}{\bar{a}_i - \widetilde{a}_i}}{1 + \bar{a}_i} + \frac{1}{2(1 + \bar{a}_i)^2} \E{\N, \tau}{(\widetilde{a}_i - \bar{a}_i)^2} + o(1/n),
\end{align*}
where we have used Lemma \ref{lemma:taylor}. 

Similarly, we apply a second-order Taylor approximation of $f(x) = \log \frac{1 - x}{1 - a_i}$ at $x = \bar{a}_i$:
\begin{align*}
\log \frac{1 - \widetilde{a}_i}{1 - a_i} \approx \log \frac{1 - \bar{a}_i}{1 - a_i} + \frac{1 - a_i}{1 - \bar{a}_i} \cdot \frac{-1}{1 - a_i} (\widetilde{a}_i - \bar{a}_i) - \frac{1}{2(1 - \bar{a}_i)^2} (\widetilde{a}_i - \bar{a}_i)^2 + o(1/n).
\end{align*}

Taking the expectation of both sides,
\begin{align*}
-\E{}{\log \frac{1 - \widetilde{a}_i}{1 - a_i}} &= -\Big(\log \frac{1 - \bar{a}_i}{1 - a_i} + \frac{\E{\N, \tau}{\bar{a}_i - \widetilde{a}_i}}{1 - \bar{a}_i} - \frac{1}{2(1 - \bar{a}_i)^2} \E{\N, \tau}{(\widetilde{a}_i - \bar{a}_i)^2}\Big) + o(1/n) \\
&= \log \Big(1 +  \frac{\bar{a}_i - a_i}{1 - \bar{a}_i}\Big) + \frac{\E{\N, \tau}{\widetilde{a}_i - \bar{a}_i}}{1 - \bar{a}_i} + \frac{1}{2(1 - \bar{a}_i)^2} \E{\N, \tau}{(\widetilde{a}_i - \bar{a}_i)^2} + o(1/n).
\end{align*}

Substituting these expressions into \eqref{eq:estimation2}, we get our desired equation.

\end{proof}

\begin{lemma}
The remainder of the Taylor approximation done in Lemma \ref{lemma:KL_estimation} is $o(1/n)$ for estimation done on $n$ samples in both the labeled and unlabeled cases.
\label{lemma:taylor}
\end{lemma}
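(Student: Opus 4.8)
The remainder in question is the error from truncating, to second order, the Taylor expansions of $f^+(x) = \log\frac{1+x}{1+a_i}$ and $f^-(x) = \log\frac{1-x}{1-a_i}$ about the asymptotic value and evaluated at $\widetilde a_i$. Using the Lagrange form and the fact that $f^{+\prime\prime\prime}(x) = 2/(1+x)^3$ and $f^{-\prime\prime\prime}(x) = -2/(1-x)^3$, the per-source remainder is exactly
\[
R_i = \frac{1+a_i}{2}\cdot\frac{(\widetilde a_i - \bar a_i)^3}{3(1+\xi_i^+)^3} \;-\; \frac{1-a_i}{2}\cdot\frac{(\widetilde a_i - \bar a_i)^3}{3(1-\xi_i^-)^3},
\]
for points $\xi_i^\pm$ between $\bar a_i$ and $\widetilde a_i$. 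So it suffices to show $\E{\N,\tau}{|R_i|} = o(1/n)$ for each $i$. The plan is to (i) bound the coefficients $1/(1\pm\xi_i^\pm)^3$ by a constant on a high-probability event, and (ii) establish the cubic-fluctuation bound $\E{\N,\tau}{|\widetilde a_i - \bar a_i|^3} = O(n^{-3/2})$.

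For step (ii) I would work conditionally on the selected triplet $\tau$, expanding about the conditional asymptotic value $\bar a_i^{(j,k)}$; since every bound below is uniform over the finitely many triplets, taking $\E{\tau}{\cdot}$ preserves the rate, and the genuine triplet-induced gap is already captured exactly by the non-remainder terms of Lemma \ref{lemma:KL_estimation}. In the labeled case, $\widetilde a_i^L = \Ehat{\lf_i Y}$ is an average of the bounded i.i.d.\ variables $\lf_i Y \in \{-1,1\}$, so $\E{}{(\widetilde a_i^L - a_i)^2} = O(1/n_L)$ and $\E{}{(\widetilde a_i^L - a_i)^4} = O(1/n_L^2)$, whence the Cauchy--Schwarz split $\E{}{|Z|^3} \le (\E{}{Z^2})^{1/2}(\E{}{Z^4})^{1/2}$ gives $\E{}{|\widetilde a_i^L - a_i|^3} = O(n_L^{-3/2})$. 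In the unlabeled case, $\widetilde a_i^{(j,k)} = \sqrt{\Ehat{\lf_i\lf_j}\,\Ehat{\lf_i\lf_k}/\Ehat{\lf_j\lf_k}}$ is a $C^3$ function $g$ of the empirical pairwise moments whose denominators satisfy $\Ehat{\lf_j\lf_k}\ge b_{\min}>0$ by Assumption \ref{assumptions}; hence $g$ has bounded first derivative on the relevant compact domain and $|\widetilde a_i^{(j,k)} - \bar a_i^{(j,k)}| \le L\,\| \hat\mu - \mu \|_1$ for the vector of moment deviations. Each coordinate of $\hat\mu - \mu$ is again a mean of bounded i.i.d.\ variables with second moment $O(1/n)$ and fourth moment $O(1/n^2)$, so the same split yields $\E{\N}{|\widetilde a_i^{(j,k)} - \bar a_i^{(j,k)}|^3} = O(n^{-3/2}) = o(1/n)$. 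This parallels the second-moment control already used in Lemma \ref{lemma:sampling_error}.

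For step (i) I would bound the Lagrange coefficients uniformly. Under Assumption \ref{assumptions} the true accuracies and asymptotic estimates lie in a compact subinterval $[a_{\min},\bar a_{\max}]\subset(0,1)$, bounded away from $\pm 1$, so on the event $\mathcal E$ that $\widetilde a_i$ stays in $[0,\tfrac{1+\bar a_{\max}}{2}]$ we have $1\pm\xi_i^\pm \ge \tfrac{1-\bar a_{\max}}{2}$ and the coefficients are at most a constant $C(b_{\min},\bar a_{\max})$; combined with step (ii), $\E{\N,\tau}{|R_i|\,\ind{\mathcal E}} = O(n^{-3/2})$. On the complement $\mathcal E^c$, which forces some empirical pairwise moment to deviate from its mean by a constant amount, Hoeffding's inequality gives $\Pr(\mathcal E^c)\le e^{-cn}$; since $|R_i|$ is at worst polynomially large there (as $\widetilde a_i \le \sqrt{1/b_{\min}}$), this contribution is exponentially small and hence $o(1/n)$. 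Summing the $m$ per-source remainders over both logarithmic pieces leaves the rate unchanged, giving the claim.

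The main obstacle I anticipate is step (i): the third-derivative coefficient blows up if the estimator wanders toward the boundary $\pm 1$, so a naive uniform bound fails and one must split on the confinement event $\mathcal E$ and absorb the rare complement through an exponential tail bound. A secondary subtlety is fixing the correct expansion point under the triplet randomness $\tau$, which I handle by conditioning so that only the true sampling fluctuation feeds the cubic remainder.
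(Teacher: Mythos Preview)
Your proposal is correct and shares the paper's overall strategy---a good-event/bad-event split driven by Hoeffding concentration of the empirical pairwise moments---but the execution differs in two places worth noting.

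First, for the cubic fluctuation (your step (ii)), the paper takes a more hands-on route: it fixes $\epsilon = n^{-3/8}$, shows via the mean-value theorem that $|\widetilde a_i - \bar a_i^{(j,k)}| \le C_3\epsilon$ on the event that all three empirical pairwise moments are within $\epsilon$ of their means, bounds $|\widetilde a_i - \bar a_i^{(j,k)}|$ crudely by $2$ (after clipping) on the complement, and combines to get $n\,\E{}{|\widetilde a_i - \bar a_i^{(j,k)}|^3} \to 0$. Your appeal to Lipschitzness of the triplet map on the compact domain $[b_{\min},1]^3$ followed by Cauchy--Schwarz on second and fourth moments is shorter and yields the sharper rate $O(n^{-3/2})$ directly, without the $n^{-3/8}$ truncation. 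Second, your step (i)---bounding the Lagrange coefficient $(1\pm\xi)^{-3}$ on a confinement event---is something the paper handles only informally: it writes the remainder with $\bar a_i$ in the denominator rather than the intermediate $\xi$, and relies on clipping $\widetilde a_i$ to $[-1,1]$. Your treatment is more honest about this issue, though your claim that $|R_i|$ is ``at worst polynomially large'' on $\mathcal E^c$ needs the same clipping (or a bound away from $\pm 1$) to be literally true, since otherwise $\widetilde a_i$ can exceed $1$ and the log is undefined. Finally, your remark about conditioning on $\tau$ and expanding about $\bar a_i^{(j,k)}$ rather than $\bar a_i$ is well taken: the paper's own proof silently does exactly this (it bounds $\E{\N}{|\bar a_i^{(j,k)} - \widetilde a_i|^3 \mid j,k}$), even though the lemma is stated with the unconditional $\bar a_i$.
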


\begin{proof}
The remainder for $-\E{\N, \tau}{\log \frac{1 + \widetilde{a}_i}{1 + a_i}}$ is bounded by $\frac{1}{3(1 + \bar{a}_i)^3}\E{\N, \tau}{(\bar{a}_i - \widetilde{a}_i)^3}$, and the remainder for $-\E{\N, \tau}{\log \frac{1 - \widetilde{a}_i}{1 - a_i}}$ is bounded by $\frac{1}{3(1 - \bar{a}_i)^3} \E{\N, \tau}{(\bar{a}_i - \widetilde{a}_i)^3}$.

For the labeled data case, it is easy to check that $\E{\N}{(\bar{a}_i - \widetilde{a}_i)^3} \sim \mathcal{O}(1/n_L^2)$. Therefore, we focus on analyzing the unlabeled data case's estimator by bounding $\E{\N}{|\bar{a}_i - \widetilde{a}_i|^3 \; | \; \lf_j, \lf_k}$ independent of choice of $j$ and $k$. For ease of notation, define $X =\lf_i \lf_j$ and $Y = \lf_i \lf_k$, such that $XY = \lf_j \lf_k$, and let 
\begin{align}
    a := \bar{a}_i^{(j, k)} = \sqrt{  \frac{\mathbb{E}[X] \mathbb{E}[Y]}{\mathbb{E}[XY]}   }, \qquad \hat{a} := \widetilde{a}_i = \sqrt{  \frac{\hat{\mathbb{E}}[X] \hat{\mathbb{E}}[Y]}{\hat{\mathbb{E}}[XY]}   }.
\end{align}

Note $a \in [-1,1]$, so clip $\hat{a} \in [-1,1]$. Because $X \in \{-1,1\}$ and $\hat{\mathbb{E}}[X]$ is an i.i.d. sum of $n = n_U$ samples from $X$, we can apply Hoeffding's inequality to get:
\begin{align}
    \Pr \left( |\hat{\mathbb{E}}[X] - \mathbb{E}[X]| \geq \epsilon \right) &\leq 2 \exp \left( - \frac{2 n^2 \epsilon^2}{n \cdot 2^2}   \right) = 2 \exp \left( - \frac{n \epsilon^2}{2} \right).
\end{align}

The same is true for $\hat{\mathbb{E}}[Y]$ and $\hat{\mathbb{E}}[XY]$. Thus, by union bound,
\begin{align}
    \Pr \left( |\hat{\mathbb{E}}[X] - \mathbb{E}[X]| \geq \epsilon \vee |\hat{\mathbb{E}}[Y] - \mathbb{E}[Y]| \geq \epsilon \vee |\hat{\mathbb{E}}[XY] - \mathbb{E}[XY]| \geq \epsilon \right) &\leq 6 \exp \left( - \frac{n \epsilon^2}{2} \right).
\end{align}

Refer to the event $\left( |\hat{\mathbb{E}}[X] - \mathbb{E}[X]| \geq \epsilon \vee |\hat{\mathbb{E}}[Y] - \mathbb{E}[Y]| \geq \epsilon \vee |\hat{\mathbb{E}}[XY] - \mathbb{E}[XY]| \geq \epsilon \right)$ as $B$. If $\neg B$ and $\epsilon< \frac{1}{2} \min(\mathbb{E}[X], \mathbb{E}[Y], \mathbb{E}[XY]) < 1$, then
\begin{align}
    |\hat{\mathbb{E}}[X] - \mathbb{E}[X]| &< \epsilon, \quad|\hat{\mathbb{E}}[Y] - \mathbb{E}[Y]| < \epsilon, \quad |\hat{\mathbb{E}}[XY] - \mathbb{E}[XY]| < \epsilon.
\end{align}

By the mean value theorem with $f(x) = \sqrt{x}$, there exists a $u$ between $\frac{\hat{\mathbb{E}}[X] \hat{\mathbb{E}}[Y]}{\hat{\mathbb{E}}[XY]}$ and $\frac{\mathbb{E}[X] \mathbb{E}[Y]}{\mathbb{E}[XY]}$ such that 
\begin{align}
    \left| \hat{a} -  a \right| &= \left| \frac{1}{2\sqrt{u}} \left(\frac{\hat{\mathbb{E}}[X] \hat{\mathbb{E}}[Y]}{\hat{\mathbb{E}}[XY]} - \frac{\mathbb{E}[X] \mathbb{E}[Y]}{\mathbb{E}[XY]} \right) \right|.
\end{align}

Note that 
\begin{align}
    u &\geq \min \left( \frac{\hat{\mathbb{E}}[X] \hat{\mathbb{E}}[Y]}{\hat{\mathbb{E}}[XY]}, \frac{\mathbb{E}[X] \mathbb{E}[Y]}{\mathbb{E}[XY]} \right) \geq \min \left( \frac{(\mathbb{E}[X] - \epsilon) (\mathbb{E}[Y] - \epsilon)}{\mathbb{E}[XY] + \epsilon}, \frac{\mathbb{E}[X] \mathbb{E}[Y]}{\mathbb{E}[XY]} \right) \\
      &\geq \min \left( \frac{(\mathbb{E}[X]/2) (\mathbb{E}[Y]/2)}{1 + \epsilon}, \frac{\mathbb{E}[X] \mathbb{E}[Y]}{\mathbb{E}[XY]} \right) \geq \min \left( \frac{\mathbb{E}[X] \mathbb{E}[Y]}{8}, \frac{\mathbb{E}[X] \mathbb{E}[Y]}{\mathbb{E}[XY]} \right) \geq \frac{\mathbb{E}[X] \mathbb{E}[Y]}{8}. \nonumber 
\end{align}

Thus,
\begin{align}
    | \hat{a} -  a | &\leq \frac{\sqrt{2}}{\sqrt{\mathbb{E}[X] \mathbb{E}[Y]}} \left| \frac{\hat{\mathbb{E}}[X] \hat{\mathbb{E}}[Y]}{\hat{\mathbb{E}}[XY]} - \frac{\mathbb{E}[X] \mathbb{E}[Y]}{\mathbb{E}[XY]}  \right|.
\end{align}

For the term on the right inside the absolute value:
\begin{align}
    \frac{(\mathbb{E}[X] - \epsilon)(\mathbb{E}[Y] - \epsilon)}{\mathbb{E}[XY] + \epsilon} &\leq \frac{\hat{\mathbb{E}}[X] \hat{\mathbb{E}}[Y]}{\hat{\mathbb{E}}[XY]} \leq \frac{(\mathbb{E}[X] + \epsilon)(\mathbb{E}[Y] + \epsilon)}{\mathbb{E}[XY] - \epsilon} \\
    \frac{(\mathbb{E}[X] - \epsilon)(\mathbb{E}[Y] - \epsilon)}{\mathbb{E}[XY] + \epsilon}  - \frac{\mathbb{E}[X] \mathbb{E}[Y]}{\mathbb{E}[XY]} &\leq \frac{\hat{\mathbb{E}}[X] \hat{\mathbb{E}}[Y]}{\hat{\mathbb{E}}[XY]} - \frac{\mathbb{E}[X] \mathbb{E}[Y]}{\mathbb{E}[XY]} \leq \frac{(\mathbb{E}[X] + \epsilon)(\mathbb{E}[Y] + \epsilon)}{\mathbb{E}[XY] - \epsilon} - \frac{\mathbb{E}[X] \mathbb{E}[Y]}{\mathbb{E}[XY]} \nonumber \\
    \left| \frac{\hat{\mathbb{E}}[X] \hat{\mathbb{E}}[Y]}{\hat{\mathbb{E}}[XY]} - \frac{\mathbb{E}[X] \mathbb{E}[Y]}{\mathbb{E}[XY]} \right| &\leq \max \bigg( \left|  \frac{(\mathbb{E}[X] - \epsilon)(\mathbb{E}[Y] - \epsilon)}{\mathbb{E}[XY] + \epsilon}  - \frac{\mathbb{E}[X] \mathbb{E}[Y]}{\mathbb{E}[XY]} \right|, \nonumber \\
    &\left| \frac{(\mathbb{E}[X] + \epsilon)(\mathbb{E}[Y] + \epsilon)}{\mathbb{E}[XY] - \epsilon} - \frac{\mathbb{E}[X] \mathbb{E}[Y]}{\mathbb{E}[XY]} \right| \bigg). \nonumber 
\end{align}

Examining the left term in the max,
\begin{align}
    \left| \frac{(\mathbb{E}[X] - \epsilon)(\mathbb{E}[Y] - \epsilon)}{\mathbb{E}[XY] + \epsilon}  - \frac{\mathbb{E}[X] \mathbb{E}[Y]}{\mathbb{E}[XY]} \right| &= \left| \frac{(\mathbb{E}[X] - \epsilon)(\mathbb{E}[Y] - \epsilon)\mathbb{E}[XY] - \mathbb{E}[X] \mathbb{E}[Y] (\mathbb{E}[XY] + \epsilon)}{\mathbb{E}[XY](\mathbb{E}[XY] + \epsilon)} \right| \\
    &= \left|\frac{ -\epsilon (\mathbb{E}[X] \mathbb{E}[Y] + \mathbb{E}[X] \mathbb{E}[XY] + \mathbb{E}[Y] \mathbb{E}[XY] - \epsilon \mathbb{E}[XY])}{\mathbb{E}[XY](\mathbb{E}[XY] + \epsilon)} \right| \nonumber \\
    &\leq \epsilon \left| \frac{ \mathbb{E}[X] \mathbb{E}[Y] + \mathbb{E}[X] \mathbb{E}[XY] + \mathbb{E}[Y] \mathbb{E}[XY]}{\mathbb{E}[XY]^2} \right| \nonumber \\
    &= \epsilon C_1 > 0 \nonumber 
\end{align}

Examining the right term in the max,
\begin{align}
    \left| \frac{(\mathbb{E}[X] + \epsilon)(\mathbb{E}[Y] + \epsilon)}{\mathbb{E}[XY] - \epsilon}  - \frac{\mathbb{E}[X] \mathbb{E}[Y]}{\mathbb{E}[XY]} \right| &= \left| \frac{(\mathbb{E}[X] + \epsilon)(\mathbb{E}[Y] + \epsilon)\mathbb{E}[XY] - \mathbb{E}[X] \mathbb{E}[Y] (\mathbb{E}[XY] - \epsilon)}{\mathbb{E}[XY](\mathbb{E}[XY] - \epsilon)} \right| \\
    &= \left|\frac{ \epsilon (\mathbb{E}[X] \mathbb{E}[Y] + \mathbb{E}[X] \mathbb{E}[XY] + \mathbb{E}[Y] \mathbb{E}[XY] + \epsilon \mathbb{E}[XY])}{\mathbb{E}[XY](\mathbb{E}[XY] - \epsilon)} \right| \\
    &\leq \epsilon \left| \frac{ \mathbb{E}[X] \mathbb{E}[Y] + \mathbb{E}[X] \mathbb{E}[XY] + \mathbb{E}[Y] \mathbb{E}[XY] + \mathbb{E}[XY]}{\mathbb{E}[XY]^2/2} \right| \\
    &= \epsilon C_2 > 0
\end{align}

Combining the max argument bounds, we have that $ \left| \frac{\hat{\mathbb{E}}[X] \hat{\mathbb{E}}[Y]}{\hat{\mathbb{E}}[XY]} - \frac{\mathbb{E}[X] \mathbb{E}[Y]}{\mathbb{E}[XY]} \right| \leq \epsilon \max(C_1, C_2) \leq \epsilon C_2$. Therefore, 
\begin{align}
    | \hat{a} -  a | &\leq \epsilon \frac{\sqrt{2}C_2}{\sqrt{\mathbb{E}[X] \mathbb{E}[X]}} = \epsilon C_3
\end{align}

where $C_3$ is a positive function of $\mathbb{E}[X]$, $\mathbb{E}[Y]$, and $\mathbb{E}[XY]$. To recap, this is satisfied if $\neg B$ and $\epsilon$ is small. Let $\epsilon = n^{-3/8}$, thus for large enough $n$, $\epsilon$ is smaller than any constant. Recall, $\Pr(B) \leq 6 \exp( - n \epsilon^2 / 2)$. With this definition of $\epsilon$, $\Pr(B) \leq 6 \exp(-n^{1/4}/2)$.

Now, we are finally ready to evaluate the limit:
\begin{align}
    \lim_{n \rightarrow \infty} n \mathbb{E}[ |\hat{a} - a|^3 ] &= \lim_{n \rightarrow \infty} n \left( \mathbb{E}[ |\hat{a} - a|^3 | B] \Pr(B) + \mathbb{E}[ |\hat{a} - a|^3 | \neg B] P(\neg B) \right) \\
    &\leq \lim_{n \rightarrow \infty} n \left(C_3^3 \epsilon^3 \cdot 1 + 2^3 \cdot 6\exp(-n^{1/4}/2) \right) \\
    &= C_3^3 \lim_{n \rightarrow \infty} n (n^{-3/8})^3 + 48 \lim_{n \rightarrow \infty} n \exp(-n^{1/4}/2) \\
    &= C_3^3 \lim_{n \rightarrow \infty} n^{-1/8} + 48 \lim_{m \rightarrow \infty} m^4 \exp(-m/2) = 0
\end{align}

Trivially, $\lim_{n \rightarrow \infty} n \mathbb{E}[ |\hat{a} - a|^3 ] \geq 0$. Thus, $\lim_{n \rightarrow \infty} n \mathbb{E}[ |\hat{a} - a|^3 ] = 0$.
\end{proof}

\begin{lemma} (Quantifying per-edge misspecification.)
If $(i, j) \in E_{\lf}$, then
\begin{align}
    \varepsilon_{ij} = \Delta_{ij} - \Delta_i a_j' - \Delta_j a_i' - \Delta_i \Delta_j,
\end{align}

where
\begin{align}
    \Delta_i &= \frac{2}{z_{ij} z_{ij}'} (\exp(\theta_{ij}) - \exp(-\theta_{ij})) (\exp(2\theta_j) - \exp(-2\theta_j)) \\
    \Delta_j &= \frac{2}{z_{ij} z_{ij}'} (\exp(\theta_{ij}) - \exp(-\theta_{ij})) (\exp(2\theta_i) - \exp(-2\theta_i)) \\
    \Delta_{ij} &= \frac{2}{z_{ij} z_{ij}'} (\exp(\theta_{ij}) - \exp(-\theta_{ij})) (\exp(2\theta_i) + \exp(-2\theta_i) + \exp(2\theta_j) + \exp(-2\theta_j)) \\
    a_i' &= \frac{2}{z_{ij}'} \exp(\theta_i) (\exp(\theta_j) + \exp(-\theta_j)) - 1 \\
    a_j' &= \frac{2}{z_{ij}'} \exp(\theta_j) (\exp(\theta_i) + \exp(-\theta_i)) - 1 \\
    z_{ij} &= \sum_{{s_i, s_j}} \exp(s_i \theta_i + s_j \theta_j + s_i s_j \theta_{ij}) \\
    z_{ij}' &=  \sum_{{s_i, s_j}} \exp(s_i \theta_i + s_j \theta_j)
\end{align}

Using these values, it is also possible to verify that $\varepsilon_{ij} \in (0, 1)$ if $\theta_i, \theta_j, \theta_{ij} > 0$.

\label{lemma:varepsilon}

\end{lemma}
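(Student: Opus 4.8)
The plan is to expand $\varepsilon_{ij} = \E{}{\lf_i \lf_j} - \E{}{\lf_i Y}\E{}{\lf_j Y}$ directly and reduce all three expectations to a computation local to the edge $(i,j)$. The key structural observation is that under Assumption~\ref{assumptions}, with $\deg(\lf_i)\le 2$, the source $\lf_i$ is adjacent only to $Y$ and to its single partner $\lf_j$ (and symmetrically for $\lf_j$); hence $Y$ separates $\{\lf_i,\lf_j\}$ from the rest of the graph, and conditioned on $Y=y$ the pair $(\lf_i,\lf_j)$ has distribution proportional to the local factor $\exp(\theta_i \lf_i y + \theta_j \lf_j y + \theta_{ij}\lf_i\lf_j)$, with normalizer $\sum_{s_i,s_j}\exp(\theta_i s_i y + \theta_j s_j y + \theta_{ij}s_i s_j) = z_{ij}$ independent of $y$ (via the substitution $s\mapsto ys$). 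A parity argument — flipping $(s_i,s_j)\mapsto(-s_i,-s_j)$ — shows $\E{}{\lf_i\lf_j \mid Y=y}$ is independent of $y$ and $\E{}{\lf_i \mid Y=-1}=-\E{}{\lf_i \mid Y=1}$, so that $\E{}{\lf_i\lf_j} = \E{}{\lf_i\lf_j \mid Y=1}$ and $a_i = \E{}{\lf_iY} = \E{}{\lf_i \mid Y=1}$, both depending only on $\theta_i,\theta_j,\theta_{ij}$.

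Next I would evaluate the three reduced expectations as explicit sums over the four configurations $(s_i,s_j)\in\{\pm1\}^2$ with weights $e^{\theta_i s_i + \theta_j s_j + \theta_{ij}s_i s_j}/z_{ij}$. Setting $\theta_{ij}=0$ and writing $z_{ij}' = (e^{\theta_i}+e^{-\theta_i})(e^{\theta_j}+e^{-\theta_j})$ for the decoupled normalizer, I would first check the bookkeeping identity $a_i' = \tfrac{2}{z_{ij}'}e^{\theta_i}(e^{\theta_j}+e^{-\theta_j}) - 1 = \tanh\theta_i$, i.e.\ $a_i'$ is exactly the accuracy the source would have in the absence of the edge, and likewise $a_j' = \tanh\theta_j$ and $\E{}{\lf_i\lf_j}\big|_{\theta_{ij}=0} = a_i'a_j'$. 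The substance is then three difference computations: expanding the full-model sums and subtracting the decoupled values to obtain $a_i = a_i' + \Delta_i$, $a_j = a_j' + \Delta_j$, and $\E{}{\lf_i\lf_j} = a_i'a_j' + \Delta_{ij}$, where each $\Delta$ collects the terms carrying the factor $(e^{\theta_{ij}}-e^{-\theta_{ij}})$ and matches the stated closed forms. With these identities the lemma is immediate: substituting into $\varepsilon_{ij} = \E{}{\lf_i\lf_j} - a_i a_j$ and expanding $a_ia_j = (a_i'+\Delta_i)(a_j'+\Delta_j)$ cancels the $a_i'a_j'$ term and leaves exactly $\Delta_{ij} - \Delta_i a_j' - \Delta_j a_i' - \Delta_i\Delta_j$.

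For the range claim, the upper bound is the easy direction: $\E{}{\lf_i\lf_j}\le 1$ with strict inequality for finite parameters (the event $\lf_i=\lf_j$ has probability $<1$), while positivity of $\theta_i,\theta_j$ forces $a_i,a_j>0$, so $\varepsilon_{ij} = \E{}{\lf_i\lf_j} - a_ia_j < 1$. The positivity $\varepsilon_{ij}>0$ is the one place the argument is not pure bookkeeping: I would establish it by monotonicity in the coupling, noting that $\varepsilon_{ij}=0$ at $\theta_{ij}=0$ and showing $\partial\varepsilon_{ij}/\partial\theta_{ij}>0$ for $\theta_{ij}>0$ — equivalently, that the ferromagnetic coupling strictly increases the residual correlation $\E{}{\lf_i\lf_j \mid Y} - \E{}{\lf_i \mid Y}\E{}{\lf_j \mid Y}$ beyond the $Y$-induced part, which can also be read off as a two-spin correlation inequality. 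The main obstacle is therefore not the decomposition, which is a mechanical four-term expansion once the local reduction is in place, but rather verifying this strict positivity cleanly and checking that the collected $(e^{\theta_{ij}}-e^{-\theta_{ij}})$ terms align precisely with the stated forms for $\Delta_i,\Delta_j,\Delta_{ij}$, including the $z_{ij}z_{ij}'$ denominators that arise from placing the full- and decoupled-model normalizers over a common denominator.
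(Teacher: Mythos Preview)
Your proposal is correct and follows essentially the same approach as the paper: both compare the true model to the decoupled model obtained by dropping the edge $(i,j)$, define $\Delta_i,\Delta_j,\Delta_{ij}$ as the resulting shifts in $a_i,a_j,\E{}{\lf_i\lf_j}$, and read off the identity from $\varepsilon_{ij}=(a_i'a_j'+\Delta_{ij})-(a_i'+\Delta_i)(a_j'+\Delta_j)$. The paper carries this out by writing the global partition sums and explicitly cancelling the contribution of the remaining sources, whereas you invoke graph separation up front to reduce to the two-spin conditional law of $(\lf_i,\lf_j)$ given $Y$; this is a cleaner route to the same local formulas (and your identification $a_i'=\tanh\theta_i$ is exactly the paper's $\Eprime{}{\lf_iY}$). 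For the range claim, the paper merely asserts verifiability from the closed forms, while your argument via $\varepsilon_{ij}=\mathrm{Cov}(\lf_i,\lf_j\mid Y=1)$ and monotonicity in the ferromagnetic coupling is a genuine (and tidier) proof of strict positivity.
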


\begin{proof}
We define a new distribution, which we denote by $\Pr'$ and $\mathbb{E}'$, that does not have an edge between $\lf_i$ and $\lf_j$:
\begin{align}
    \mathrm{Pr}'(Y, \bm{\lf}) = \frac{1}{Z'} \exp \Big(\theta_Y + \sum_{i = 1}^m \theta_i \lf_i Y + \sum_{(k, l) \neq (i,j)} \theta_{kl} \lf_k \lf_l \Big).
    \label{eq:wrong_pgm}
\end{align}

This distribution uses all the same canonical parameters as \eqref{eq:pgm} except $\theta_{ij} \lf_i \lf_j$. We know that for this distribution, $\Eprime{}{\lf_i \lf_j} = \Eprime{}{\lf_i Y} \Eprime{}{\lf_j Y}$. Our approach to compute $\varepsilon_{ij} = \E{}{\lf_i \lf_j} - \E{}{\lf_i Y} \E{}{\lf_j Y}$ is to bound the differences between $\mathbb{E}$ and $\mathbb{E}'$. 

First, we evaluate $\E{}{\lf_i Y} - \E{}{\lf_j Y}$. We write $\E{}{\lf_i Y}$ as $2\Pr(\lf_i Y = 1) - 1 = \frac{2}{p} \Pr(\lf_i = 1, Y = 1) - 1$ and $\Eprime{}{\lf_i Y}$ as $\frac{2}{p} \Pr'(\lf_i = 1, Y = 1) - 1$  by Lemma \ref{lemma:fs_symmetry}, where $p = \Pr(Y = 1).$ Then, letting $s_{-i}$ represent all combinations of labels on all $\bm{\lf}$ besides $\lf_i$,
\begin{align}
    \Delta_i &= \E{}{\lf_i Y} - \Eprime{}{\lf_i Y} = \frac{2}{p} \sum_{s_{-i}} \exp \bigg(\theta_Y + \theta_i + \sum_{k \neq i} \theta_k l_k + \sum_{(k, l) \neq (i, j)} \theta_{kl} s_k s_l \bigg) \left(\frac{\exp(\theta_{ij} l_j)}{Z} - \frac{1}{Z'} \right)
\end{align}

Next, note that $p = \frac{z_Y}{Z}$ and $p = \frac{z_Y'}{Z'}$, where $z_Y = \sum_{s} \exp(\theta_Y + \sum_{k = 1}^m \theta_k s_k + \sum_{(k, l) \neq (i, j)} \theta_{kl} s_k s_l + \theta_{ij} s_i s_j)$ and  $z_Y' = \sum_s \exp(\theta_Y + \sum_{k = 1}^m \theta_i s_i + \sum_{(k, l) \neq (i, j)} \theta_{kl} s_k s_l)$ (we can check these expressions for $p$ are equal, since the edgewise potentials are canceled out). $\Delta_i$ is now 
\begin{align}
     &2 \exp(\theta_i) \sum_{s_{-i}} \exp \bigg(\theta_Y + \sum_{k \neq i} \theta_k l_k + \sum_{(k, l) \neq (i, j)} \theta_{kl} s_k s_l\bigg) \bigg(\frac{\exp(\theta_{ij} l_j)}{z_Y} - \frac{1}{z_Y'} \bigg) \\
     = &2 \exp(\theta_i + \theta_j) \sum_{s_{-i, j}} \exp \bigg(\theta_Y + \sum_{k \neq i, j} \theta_k l_k + \sum_{(k, l) \neq (i, j)} \theta_{kl} s_k s_l\bigg) \bigg(\frac{\exp(\theta_{ij})}{z_Y} - \frac{1}{z_Y'} \bigg) \nonumber \\
     + &2 \exp(\theta_i - \theta_j) \sum_{s_{-i, j}} \exp \bigg(\theta_Y + \sum_{k \neq i, j} \theta_k l_k + \sum_{(k, l) \neq (i, j)} \theta_{kl} s_k s_l\bigg) \bigg(\frac{\exp(-\theta_{ij})}{z_Y} - \frac{1}{z_Y'} \bigg) \nonumber 
\end{align}

$\frac{\exp(\pm \theta_{ij})}{z_Y} - \frac{1}{z_Y'}$ can be written as $\frac{1}{z_Y z_Y'} \sum_{s'} \exp(\theta_Y + \sum_k \theta_k s_k' + \sum_{(k, l) \neq (i, j)} \theta_{kl} s_k' s_l') (\exp(\pm \theta_{ij}) - \exp(\theta_{ij} s_i' s_j'))$. Then for positive $\theta_{ij}$, this becomes $\frac{1}{z_Y z_Y'} (\exp(\theta_i - \theta_j) + \exp(-\theta_i + \theta_j))\sum_{s'} \exp(\theta_Y + \sum_{k \neq i, j} \theta_k s_k' + \sum_{(k, l) \neq (i, j)} \theta_{kl} s_k' s_l') (\exp(\theta_{ij}) - \exp(-\theta_{ij}))$, and for negative $-\theta_{ij}$, this becomes $\frac{1}{z_Y z_Y'} (\exp(\theta_i + \theta_j) + \exp(-\theta_i - \theta_j))\sum_{s'} \exp(\theta_Y + \sum_{k \neq i, j} \theta_k s_k' + \sum_{(k, l) \neq (i, j)} \theta_{kl} s_k' s_l') (\exp(-\theta_{ij}) - \exp(\theta_{ij}))$. Then, our expression becomes
\begin{align}
    \frac{2}{z_Y z_Y'} \bigg(\sum_{s_{-i, j}} &\exp \Big(\theta_Y + \sum_{k \neq i, j} \theta_k l_k + \sum_{(k, l) \neq (i, j)} \theta_{kl} s_k s_l\Big)\bigg)^2   (\exp(\theta_{ij}) - \exp(-\theta_{ij})) \\
    &\times \big(\exp(\theta_i + \theta_j) (\exp(\theta_i - \theta_j) + \exp(-\theta_i + \theta_j)) - \exp(\theta_i - \theta_j) (\exp(\theta_i + \theta_j) +\exp(-\theta_i - \theta_j)) \big) \nonumber 
\end{align}

The second line simplifies $\exp(2\theta_i) + \exp(2\theta_j) - \exp(2\theta_i) - \exp(-2\theta_j) = \exp(2\theta_j) - \exp(-2\theta_j)$. Lastly, note that $z_Y = \sum_{s_{-i, j}} \exp \Big(\theta_Y + \sum_{k \neq i, j} \theta_k l_k + \sum_{(k, l) \neq (i, j)} \theta_{kl} s_k s_l\Big) \cdot \sum_{s_i, s_j} \exp(s_i \theta_i + s_j \theta_j + s_i s_j \theta_{ij})$, and $z_Y' = \sum_{s_{-i, j}} \exp \Big(\theta_Y + \sum_{k \neq i, j} \theta_k l_k + \sum_{(k, l) \neq (i, j)} \theta_{kl} s_k s_l\Big) \cdot \sum_{s_i, s_j} \exp(s_i \theta_i + s_j \theta_j)$. Canceling out the summations over the other sources, we have our desired expression for $\Delta_i$. We can do the same to get our result for $\Delta_j$.

Next, we compute $\Delta_{ij} = \E{}{\lf_i \lf_j} - \Eprime{}{\lf_i \lf_j},$ which is equal to $2(\Pr(\lf_i = 1, \lf_j = 1) - \Pr'(\lf_i = 1, \lf_j = 1) + \Pr(\lf_i = -1, \lf_j = -1) - \Pr'(\lf_i = -1, \lf_j = -1))$:
\begin{align}
    \Pr(\lf_i = 1, \lf_j = 1) &- \mathrm{Pr}'(\lf_i = 1, \lf_j = 1) \label{eq:deltaij_pos} \\
    &= \sum_{Y, s_{-i, j}} \exp \Big(\theta_Y Y + \theta_i Y + \theta_j Y + \sum_{k \neq i, j} \theta_k s_k Y + \sum_{(k, l) \in (i, j)} \theta_{kl} s_k s_l\Big) \left(\frac{\exp(\theta_{ij})}{Z} - \frac{1}{Z'} \right), \nonumber  \\
    \Pr(\lf_i = -1, \lf_j = -1) &- \mathrm{Pr}'(\lf_i = -1, \lf_j = -1) \label{eq:deltaij_neg} \\
    &= \sum_{Y, s_{-i, j}} \exp \Big(\theta_Y Y - \theta_i Y - \theta_j Y + \sum_{k \neq i, j} \theta_k s_k Y + \sum_{(k, l) \in (i, j)} \theta_{kl} s_k s_l\Big) \left(\frac{\exp(\theta_{ij})}{Z} - \frac{1}{Z'} \right). \nonumber 
\end{align}

We can write $\frac{\exp(\theta_{ij})}{Z} - \frac{1}{Z'}$ as $\frac{1}{Z' Z} \sum_{Y, s} \exp\big(\theta_Y Y + \sum_{k = 1}^m \theta_k s_k Y + \sum_{(k, l) \neq (i, j)} \theta_{kl} s_k s_l\big) (\exp(\theta_{ij}) - \exp(\theta_{ij} s_i s_j))$, which is equal to $\frac{1}{Z' Z} (\exp(\theta_{ij}) - \exp(-\theta_{ij})) (\exp(\theta_i - \theta_j) + \exp(-\theta_i + \theta_j)) \sum_{Y, s_{-i, j}} \exp\big(\theta_Y Y + \sum_{k \neq i, j} \theta_k s_k Y + \sum_{(k, l) \neq (i, j)} \theta_{kl} s_k s_l\big)$. Therefore, $\Delta_{ij}$ is equal to two times $\eqref{eq:deltaij_pos}$ plus $\eqref{eq:deltaij_neg}$:
\begin{align}
    \Delta_{ij} =& 2 \left(\frac{\exp(\theta_{ij})}{Z} - \frac{1}{Z'} \right) \sum_{Y, s_{-i, j}} \exp \big(\theta_Y Y + \sum_{k \neq i, j} \theta_k s_k Y + \sum_{(k, l) \in (i, j)} \theta_{kl} s_k s_l\big) \big(\exp(\theta_i Y + \theta_j Y) + \exp(-\theta_i Y - \theta_j Y) \big) \\
    =& \frac{2}{Z' Z} (\exp(\theta_{ij}) - \exp(-\theta_{ij})) (\exp(\theta_i - \theta_j) + \exp(-\theta_i + \theta_j))(\exp(\theta_i + \theta_j) + \exp(-\theta_i - \theta_j))  \nonumber \\
    & \times \bigg(\sum_{Y, s_{-i, j}} \exp \big(\theta_Y Y + \sum_{k \neq i, j} \theta_k s_k Y + \sum_{(k, l) \in (i, j)} \theta_{kl} s_k s_l\big)\bigg)^2 \nonumber \\
    =& \frac{2}{Z' Z} (\exp(\theta_{ij}) - \exp(-\theta_{ij})) (\exp(2\theta_i) + \exp(-2\theta_i) + \exp(2\theta_j) + \exp(-2\theta_j))  \nonumber \\
    & \times \bigg(\sum_{Y, s_{-i, j}} \exp \big(\theta_Y Y + \sum_{k \neq i, j} \theta_k s_k Y + \sum_{(k, l) \in (i, j)} \theta_{kl} s_k s_l\big)\bigg)^2. \nonumber 
\end{align}

Note that $Z = \sum_{Y, s_{-i, j}} \exp(\theta_Y Y + \sum_{k \neq i, j} \theta_k s_k Y + \sum_{(k, l) \in (i, j)} \theta_{kl} s_k s_l) \sum_{s_i, s_j} \exp(s_i \theta_i + s_j \theta_j + s_i s_j \theta_{ij})$ and $Z' = \sum_{Y, s_{-i, j}} \exp(\theta_Y Y + \sum_{k \neq i, j} \theta_k s_i Y + \sum_{(k, l) \in (i, j)} \theta_{kl} s_k s_l) \sum_{s_i, s_j} \exp(s_i \theta_i + s_j \theta_j)$. Plugging this back in and canceling out summations, we obtain our desired result for $\Delta_{ij}$.

We now can compute $\varepsilon_{ij}$:
\begin{align}
    \varepsilon_{ij} &= \E{}{\lf_i \lf_j} - \E{}{\lf_i Y} \E{}{\lf_j Y} = \Eprime{}{\lf_i \lf_j} + \Delta_{ij} - (\Eprime{}{\lf_i Y} + \Delta_i)(\Eprime{}{\lf_j Y} + \Delta_j) \\
    &= \Delta_{ij} - \Delta_i \Eprime{}{\lf_j Y} - \Delta_j \Eprime{}{\lf_i Y} - \Delta_i \Delta_j. \nonumber 
\end{align}

Lastly, we need to compute $\Eprime{}{\lf_i Y}$ and $\Eprime{}{\lf_j Y}$:
\begin{align}
    \Eprime{}{\lf_i Y} = &2 \left(\textrm{Pr}'(\lf_i = 1, Y = 1) + \textrm{Pr}'(\lf_i = -1, Y = -1) \right) - 1 \\
    = &\frac{2}{Z'} \exp(\theta_i) (\exp(\theta_j) + \exp(-\theta_j)) \nonumber \\
    & \times \sum_{s_{-i, j}} \exp \Big( \sum_{(k, l) \neq (i, j)} \theta_{kl} s_k s_l\Big) \Big(\exp\Big(\theta_Y + \sum_{k \neq i, j} \theta_k s_k\Big) + \exp\Big(-\theta_Y - \sum_{k \neq i, j} \theta_k s_k\Big)\Big) - 1. \nonumber 
\end{align}

$Z'$ can be written as $\sum_{s_i, s_j} \exp(s_i \theta_i + s_j \theta_j) \sum_{s_{-i, j}} \exp \Big(\sum_{(k, l) \notin (i, j)} \theta_{kl} s_k s_l \Big)\Big(\exp(\theta_Y + \sum_{k \neq i, j} \theta_k s_k) + \exp(-\theta_Y - \sum_{k \neq i, j} \theta_k s_k) \Big)$, Therefore $\Eprime{}{\lf_i Y}$ is equal to
\begin{align}
    \Eprime{}{\lf_i Y} &= \frac{2 \exp(\theta_i) (\exp(\theta_j) + \exp(-\theta_j))}{\sum_{s_i, s_j} \exp(s_i \theta_i + s_j \theta_j)} - 1.
\end{align}

The key takeaways from this lemma are:
\begin{enumerate}
    \item Impact of misspecification in our computations exhibits some form of Lipschitzness, i.e. it is bounded in terms of the canonical parameters of our distribution. 
    \item One misspecified edge only contributes error defined in terms of the canonical parameters on the two vertices and the unmodeled edge between them.
    \item Under our assumptions, $\varepsilon_{ij} > 0$.
\end{enumerate}

\end{proof}

\begin{lemma} (Estimation error of accuracies via triplet method.) In the case of unlabeled data, accuracies estimated using the triplet method in \eqref{eq:triplet} satisfy
\begin{align*}
\E{\N, \tau}{\widetilde{a}_i - \bar{a}_i} &\le \frac{\sqrt{3}}{2\sqrt{n_U}} \cdot \sqrt{\frac{1 - b_{\min}^2}{b_{\min}^2} \left(\frac{1}{b_{\min}^4} + \frac{2}{b_{\min}^2} \right)}, \\
\E{\N, \tau}{(\widetilde{a}_i - \bar{a}_i)^2} &\le \frac{3}{4 n_U} \cdot \frac{1 - b_{\min}^2}{b_{\min}^2} \left(\frac{1}{b_{\min}^4} + \frac{2}{b_{\min}^2} \right).
\end{align*}
\label{lemma:sampling_error}
\end{lemma}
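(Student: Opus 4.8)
The two bounds are linked. For a fixed triplet $(j,k)$ the deviation $\widetilde{a}_i^{(j,k)} - \bar{a}_i^{(j,k)}$ is purely sampling error, so Jensen's inequality gives $\E{\N}{\widetilde{a}_i^{(j,k)} - \bar{a}_i^{(j,k)}} \le \E{\N}{|\widetilde{a}_i^{(j,k)} - \bar{a}_i^{(j,k)}|} \le \sqrt{\E{\N}{(\widetilde{a}_i^{(j,k)} - \bar{a}_i^{(j,k)})^2}}$, and the claimed first-moment bound is precisely the square root of the claimed second-moment bound. The plan is therefore to prove the MSE bound for an arbitrary fixed triplet; since the resulting constant depends only on $b_{\min}$ and $n_U$ and not on $(j,k)$, it is unchanged after averaging over the triplet randomness $\tau$, and the first bound then follows immediately.

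Reparametrize as in the proof of Lemma \ref{lemma:taylor}: set $X = \lf_i\lf_j$, $Y = \lf_i\lf_k$, so $XY = \lf_j\lf_k$ and $\widehat{a} := \widetilde{a}_i^{(j,k)} = \sqrt{\Ehat{X}\Ehat{Y}/\Ehat{XY}}$, $a := \bar{a}_i^{(j,k)} = \sqrt{\E{}{X}\E{}{Y}/\E{}{XY}}$. Each of $\Ehat{X},\Ehat{Y},\Ehat{XY}$ is an average of $n_U$ i.i.d.\ $\pm 1$ variables, hence unbiased with variance $(1-\mu^2)/n_U$ for the corresponding true mean $\mu$. The first step is the algebraic reduction $(\widehat{a} - a)^2 = (\widehat{a}^2 - a^2)^2/(\widehat{a} + a)^2 \le (\widehat{a}^2 - a^2)^2/(4 b_{\min}^2)$, valid because $\E{}{X},\E{}{Y},\Ehat{X},\Ehat{Y}\ge b_{\min}$ while $\E{}{XY},\Ehat{XY}\le 1$ force $\widehat{a}, a \ge b_{\min}$. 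This removes the square root and leaves a rational function of the empirical means.

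It remains to bound $\E{\N}{(\widehat{a}^2 - a^2)^2}$. Writing $\delta_X = \Ehat{X} - \E{}{X}$ (and similarly for $Y, XY$) and placing the two ratios over the common denominator $\Ehat{XY}\,\E{}{XY}\ge b_{\min}^2$, the numerator is $\E{}{XY}(\E{}{X}\delta_Y + \E{}{Y}\delta_X) - \E{}{X}\E{}{Y}\delta_{XY}$ plus the second-order term $\E{}{XY}\,\delta_X\delta_Y$, whose squared expectation is $O(1/n_U^2)=o(1/n_U)$ and is dropped. Applying $(p+q+r)^2 \le 3(p^2+q^2+r^2)$ to the leading numerator, taking expectations, using $\E{}{\delta^2}=(1-\mu^2)/n_U$, and bounding every mean by $b_{\min}\le\mu\le 1$ yields
\[
\E{\N}{(\widehat{a}^2 - a^2)^2} \le \frac{3(1-b_{\min}^2)}{n_U\, b_{\min}^2}\left(2 + \frac{1}{b_{\min}^2}\right).
\]
Dividing by $4 b_{\min}^2$ reproduces the stated constant $\frac{3}{4 n_U}\cdot\frac{1-b_{\min}^2}{b_{\min}^2}\big(\frac{1}{b_{\min}^4} + \frac{2}{b_{\min}^2}\big)$; the step $(p+q+r)^2\le 3(p^2+q^2+r^2)$ is exactly what lets us avoid tracking the covariances $\Cov{}{\delta_X,\delta_{XY}}$ while still landing on clean constants.

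The main obstacle is making the linearization rigorous rather than merely asymptotic: the denominator $\Ehat{XY}$ is random and $\sqrt{xy/z}$ is nonlinear, so one must justify that (i) the division is harmless and (ii) the discarded second- and higher-order terms contribute only $o(1/n_U)$ to the squared error. Point (i) is handled deterministically by the definition of $b_{\min}$, which lower-bounds the \emph{empirical} pairwise rates, so $\Ehat{XY}\ge b_{\min}$ always. For (ii) the cleanest route mirrors Lemma \ref{lemma:taylor}: on the high-probability event where all three empirical means lie within a vanishing tolerance of their true values, the mean value theorem bounds $|\widehat{a}^2 - a^2|$ linearly in the $\delta$'s, while the complementary event has probability decaying faster than any polynomial and contributes negligibly because $\widehat{a}, a$ are bounded. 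Carrying this through confirms the leading-order expression above as a genuine upper bound up to the absorbed $o(1/n_U)$ remainder.
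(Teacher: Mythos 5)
Your skeleton is close to the paper's: both reduce to a bound for a single fixed triplet that is uniform in $(j,k)$, both pass through $\widetilde{a}_i - \bar{a}_i^{(j,k)} = (\widetilde{a}_i^2 - \bar{a}_i^{(j,k)2})/(\widetilde{a}_i + \bar{a}_i^{(j,k)})$ with the denominator lower-bounded by $2b_{\min}$ (valid because Assumption \ref{assumptions} lower-bounds the \emph{empirical} pairwise moments by $b_{\min}$ as well), and both finish with Cauchy--Schwarz over the three deviations together with $\mathrm{Var}(\Ehat{\lf_i\lf_j}) \le (1-b_{\min}^2)/n_U$. Your observation that the first bound is exactly the square root of the second, hence follows from it by Jensen, is correct and is in fact a small streamlining of the paper, which proves the two bounds by parallel computations.

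The gap is in how you control $\widetilde{a}_i^2 - \bar{a}_i^{(j,k)2}$. The lemma is a clean finite-sample inequality with no remainder term, and the paper obtains it from an \emph{exact} telescoping decomposition of $\frac{\Ehat{X}\Ehat{Y}}{\Ehat{XY}} - \frac{\E{}{X}\E{}{Y}}{\E{}{XY}}$ into three terms, each with a coefficient bounded via $b_{\min}$. Your common-denominator expansion instead leaves the genuine second-order term $\E{}{XY}\,\delta_X\delta_Y$ in the numerator, which you discard as $o(1/n_U)$; what you then prove is the stated bound \emph{plus} an $o(1/n_U)$ remainder, a strictly weaker statement than the lemma (it is Theorem \ref{thm:unlabeled} downstream that carries $o(1/n_U)$ slack, not this lemma). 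The event/mean-value-theorem argument you invoke in your last paragraph, mirroring Lemma \ref{lemma:taylor}, cannot close this either, since by construction it only controls the remainder asymptotically. The fix is a one-line regrouping that makes your expansion exact: fold the cross term into the $\delta_Y$ coefficient, i.e.
\begin{align*}
\Ehat{X}\Ehat{Y}\,\E{}{XY} - \E{}{X}\E{}{Y}\,\Ehat{XY} \;=\; \E{}{XY}\,\Ehat{X}\,\delta_Y \;+\; \E{}{XY}\,\E{}{Y}\,\delta_X \;-\; \E{}{X}\E{}{Y}\,\delta_{XY},
\end{align*}
with $\Ehat{X}$ (not $\E{}{X}$) in the first coefficient. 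After dividing by the denominator $\Ehat{XY}\,\E{}{XY} \ge b_{\min}^2$ and cancelling, the three coefficients are bounded by $1/b_{\min}$, $1/b_{\min}$, and $1/b_{\min}^2$ respectively, and your Cauchy--Schwarz step then yields exactly the stated constant $\frac{3}{4n_U}\cdot\frac{1-b_{\min}^2}{b_{\min}^2}\bigl(\frac{1}{b_{\min}^4}+\frac{2}{b_{\min}^2}\bigr)$ with nothing discarded; this regrouped identity is precisely the paper's telescoping decomposition written over a common denominator.
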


\begin{proof}
First, note that $\E{\N, \tau}{\bar{a}_i - \widetilde{a}_i} = \E{\N, \tau}{\E{\tau}{\bar{a}_i^{(j, k)}} - \widetilde{a}_i} = \E{\N, \tau}{\bar{a}_i^{(j, k)} - \widetilde{a}_i}$. Therefore, is it sufficient to produce an upper bound on $\E{\N}{\bar{a}_i^{(j, k)} - \widetilde{a}_i | \lf_j, \lf_k}$ independent of $j, k$. For ease of notation, we refer to this expectation as $\E{}{\bar{a}_i - \widetilde{a}_i}$. Then, $\E{}{\bar{a}_i - \widetilde{a}_i} = \E{}{ \frac{\bar{a}_i^2 - \widetilde{a}_i^2}{\bar{a}_i + \widetilde{a}_i} } \le  \frac{1}{2b_{\min}} \E{}{|\bar{a}_i^2 - \widetilde{a}_i^2 |}$. Denote $M_{ij} = \E{}{\lf_i \lf_j}$ and $\hat{M}_{ij} = \Ehat{\lf_i \lf_j}$. Then, by definition of our estimator in \eqref{eq:triplet},
\begin{align}
    \E{}{\bar{a}_i - \widetilde{a}_i} &\le \frac{1}{2b_{\min}} \E{}{\frac{\hat{M}_{ij} \hat{M}_{ik}}{\hat{M}_{jk} M_{jk}} |\hat{M}_{jk} - M_{jk}| + \frac{\hat{M}_{ij}}{M_{jk}}|\hat{M}_{ik} - M_{ik}| + \frac{M_{ik}}{M_{jk}} |\hat{M}_{ij} - M_{ij} | } \label{eq:M_decomposition} \\
    &\le \frac{1}{2b_{\min}} \E{}{\frac{1}{b_{\min}^2} |\delta_{jk}| + \frac{1}{b_{\min}} |\delta_{ik} | + \frac{1}{b_{\min}}|\delta_{ij} |},\nonumber
\end{align}

where $\delta_{ij} = \hat{M}_{ij} - M_{ij}$ is the estimation error for the pairwise expectations. Using Cauchy-Schwarz inequality,
\begin{align*}
    \E{}{\bar{a}_i - \widetilde{a}_i} &\le \frac{1}{2b_{\min}} \sqrt{\frac{1}{b_{\min}^4} + \frac{2}{b_{\min}^2}} \E{}{\sqrt{\delta_{ij}^2 + \delta_{ik}^2 + \delta_{jk}^2}} \\
    &\le \frac{1}{2b_{\min}} \sqrt{\frac{1}{b_{\min}^4} + \frac{2}{b_{\min}^2}} \sqrt{\Var{}{\hat{M}_{ij}} + \Var{}{\hat{M}_{ik}} + \Var{}{\hat{M}_{jk}}}.
\end{align*}

Formally, $\hat{M}_{ij} = \frac{1}{n_U} \sum_{l = 1}^{n_U} \lf_i^l \lf_j^l$. Therefore, $\Var{}{M_{ij}} = \frac{1}{n_U^2} \sum_{l = 1}^{n_U} \E{}{(\lf_i^l)^2 (\lf_j^l)^2} - M_{ij}^2 = \frac{1 - M_{ij}^2}{n_U} \le \frac{1 - b_{\min}^2}{n_U}$, and our bound becomes
\begin{align*}
    \E{}{\bar{a}_i - \widetilde{a}_i} \le \frac{\sqrt{3}}{2\sqrt{n_U}} \cdot \sqrt{\frac{1 - b_{\min}^2}{b_{\min}^2} \left(\frac{1}{b_{\min}^4} + \frac{2}{b_{\min}^2} \right)}.
\end{align*}

Next, to bound $\E{\N, \tau}{(\widetilde{a}_i - \bar{a}_i)^2}$, it is sufficient to upper bound $\E{\N}{(\widetilde{a}_i - \bar{a}_i^{(j, k)})^2 \; | \; \lf_j, \lf_k}$ independent of choice of $j$ and $k$. Refer to this expectation as $\E{}{(\widetilde{a}_i - \bar{a}_i)^2}$. Then, $\E{}{(\widetilde{a}_i - \bar{a}_i)^2 } = \E{}{\frac{(\widetilde{a}_i^2 - \bar{a}_i^2)^2}{(\widetilde{a}_i + \bar{a}_i)^2}} \le \frac{1}{4b_{\min}^2} \E{}{(\widetilde{a}_i^2 - \bar{a}_i^2)^2}$. Similar to \eqref{eq:M_decomposition},
\begin{align}
    \E{}{(\widetilde{a}_i - \bar{a}_i)^2 } &\le \frac{1}{4b_{\min^2}} \E{}{\left(\frac{\hat{M}_{ij} \hat{M}_{ik}}{\hat{M}_{jk} M_{jk}} |\hat{M}_{jk} - M_{jk}| + \frac{\hat{M}_{ij}}{M_{jk}}|\hat{M}_{ik} - M_{ik}| + \frac{M_{ik}}{M_{jk}} |\hat{M}_{ij} - M_{ij} |  \right)^2} \\
    &\le \frac{1}{4b_{\min}^2} \E{}{\left(\frac{1}{b_{\min}^2} |\delta_{jk}| + \frac{1}{b_{\min}} |\delta_{ik} | + \frac{1}{b_{\min}}|\delta_{ij} | \right)^2} \nonumber \\
    &\le \frac{1}{4b_{\min}^2} \left(\frac{1}{b_{\min}^4} + \frac{2}{b_{\min}^2}\right) \left(\Var{}{\hat{M}_{ij}} + \Var{}{\hat{M}_{ik}} + \Var{}{\hat{M}_{jk}}\right) \nonumber \\
    &\le \frac{3}{4 n_U} \cdot \frac{1 - b_{\min}^2}{b_{\min}^2} \left(\frac{1}{b_{\min}^4} + \frac{2}{b_{\min}^2} \right) \nonumber 
\end{align}

\end{proof}

\section{Additional Experimental Details}

We provide additional details on experiments. Our code can be found at \url{https://github.com/bencw99/comparing-labeled-and-unlabeled-data}.

\subsection{Synthetic Experiments} \label{subsec:supp_synthetics}

In this section, we first provide our protocol for generating synthetic data, which is fixed across our synthetic experiments. We then discuss the details of the experiments performed for each of the plots in \autoref{sec:theory} and \autoref{sec:applications}. 

\paragraph{Generating synthetic data}

We use the same synthetic data distributions for all of our synthetic experiments. We set the number of sources to $m=10$, and draw accuracies uniformly from $[.55,.75]$, both of which would be typical in relevant applications (ex., in weak supervision). We report these accuracies in \autoref{tab:synthetic_accuracies}. For experiments with dependencies, when $d=1$ we add the edge $(0,1)$, when $d=2$ we add a second edge $(2,3)$ and so on. Every dependency is fixed at $\varepsilon_{ij}=\mathbb{E}[\lambda_i\lambda_j]-\mathbb{E}[\lambda_i]\mathbb{E}[\lambda_j]=0.1$.

\begin{table}[t]
\vskip 0.15in
\renewcommand{\arraystretch}{1.25} 
\begin{center}
\begin{small}
\begin{tabular}{l|cccccccccccr}
\hline
$i$ & 0 & 1 & 2 & 3 & 4 & 5 & 6 & 7 & 8 & 9\\
\hline
Accuracy & .6893 & .6072 & .5954 & .6603 & .6939 & .6346 & .7462 & .6870 & .6462 & .6284 \\
\hline
\end{tabular}
\end{small}
\end{center}
\vskip -0.1in
\caption{The source accuracies used for synthetic experiments. They were each drawn uniformly from $[.55,.75]$.}
\label{tab:synthetic_accuracies}
\end{table}

\paragraph{\autoref{fig:gen_err}: Excess generalization error} 

We measure the expected excess generalization error for several different estimators and values of $n$. For each value of $n$, we take $1000$ samples and measure the generalization error of an estimator trained on this sample. We average the results over these $1000$ samples.

\paragraph{\autoref{fig:data_value_ratio}: Computing the data value ratio}

We compute the data value ratio for unlabeled models with mean and median aggregation for different numbers of dependencies $d$. The definition of the data value ratio requires finding the smallest $n_L$ with which learning from $n_L$ labeled points achieves lower expected generalization error than learning from $n_U$ unlabeled points. To measure the expected generalization error for some $n$, we average over $1000$ samples, which would be intractable to do for every $n_L$. Therefore, we measure the expected generalization error for every $n_L$ between $10$ and $100$, every $n_L$ divisible by $2$ between $100$ and $1000$ and every $n_L$ divisible by $10$ between $1000$ and $5000$. Besides this shortcut, we compute the data value ratio according to its definition.

\paragraph{\autoref{fig:combined}: Combining labeled and unlabeled data}

We compare the practical approach of weighting the unlabeled and labeled estimators according to \cite{GreenStrawderman2001}, formally defined in section \ref{subsec:combined}, with the optimal weight. We let the optimal weight vary with $n_U$ and $n_L$, but not with the specific data points drawn. In other words, we compute the optimal weight to be that which minimizes the average generalization error over $1000$ trials for each $n_L$. On the other hand, the weight from \cite{GreenStrawderman2001} is a function of the learned accuracies (and thus of the specific data points drawn). In \autoref{fig:combined_with_alphas} we report the optimal $\alpha$ for each $n_L$ ($n_U$ is fixed at $1000$) as well as the \textit{average} weight from \cite{GreenStrawderman2001} over $1000$ trials.

\begin{figure}
    \centering
    \includegraphics[width=.6\textwidth]{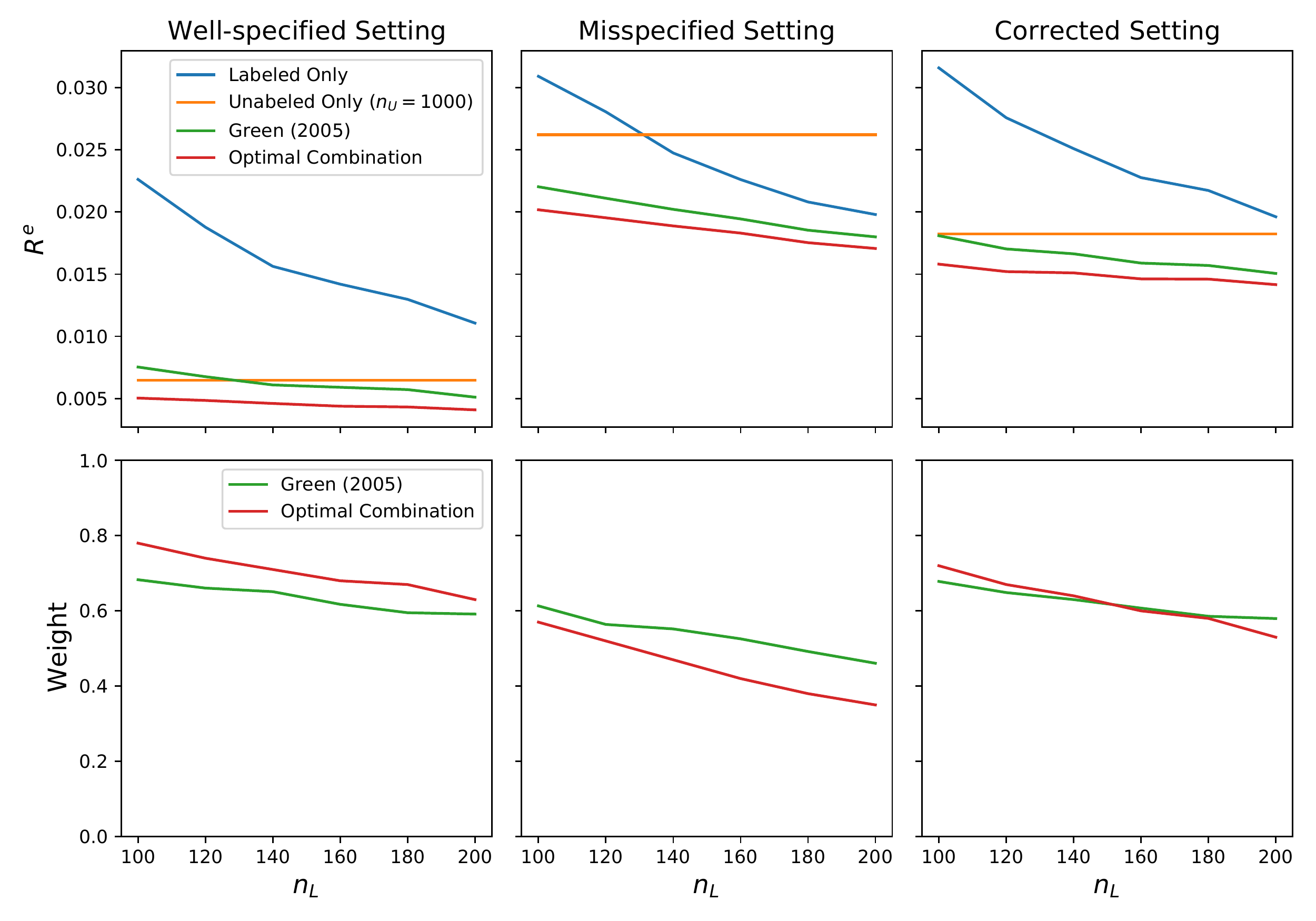}
    \caption{Excess generalization error and associated combination weight $\alpha$ for an optimally weighted combination of labeled and unlabeled estimators, and a combination weighted according to \cite{GreenStrawderman2001} across the well-specified (left), misspecified (center), and corrected (right) settings. The number of unlabeled points is fixed at $n_U=1000$.}
    \label{fig:combined_with_alphas}
\end{figure}

\subsection{Real-World Case Study: Weak Supervision} \label{subsec:supp_realdata}

We discuss the weak supervision dataset we create and clarify the details of our experimental protocol for the real-world case study.

\paragraph{Creating a weak supervision dataset}

In weak supervision, soft labels from latent variable estimation are used as an alternative to a hand-labeled dataset. The sources used are usually heuristics which incorporate domain-specific knowledge about a particular task and can be acquired relatively cheaply. For our real-world case study, we choose the simple sentiment analysis task of classifying IMDB reviews as positive or negative. Our sources are defined simply: for a collection of positive sentiment words, output ``yes'' if the word appears in the review and ``no'' otherwise; for a collection of negative sentiment words, similarly output ``no'' if the word appears and ``yes'' otherwise. The specific words used and their sentiments are reported in \autoref{tab:imdb_words}. We select these words because they are empirically predictive, appear relatively frequently in reviews and are intuitively associated with positive/negative reviews.

\begin{table}[t]
\vskip 0.15in
\renewcommand{\arraystretch}{1.25} 
\begin{center}
\begin{small}
\begin{tabular}{l|cccccccccccccr}
\hline
Word & love & like & good & great & best & excellent & terrible & worst & bad & better & could & would \\
\hline
Sentiment & + & + & + & + & + & + & - & - & - & - & - & - \\
\hline
\end{tabular}
\end{small}
\end{center}
\vskip -0.1in
\caption{The words used as sources for the real-world weak supervision task of classifying IMDB reviews as positive or negative.}
\label{tab:imdb_words}
\end{table}

\paragraph{\autoref{fig:real_biases_data_value_ratio} and \autoref{tab:real_combo}: Experiments with real data} We measure excess generalization error, the data value ratio and the performances of combined estimators for the real-world dataset. Our protocols for these experiments mirror those we used for synthetic datasets, with two key differences: (1) for each trial, we sample points uniformly from the training set of 40,000 points, since we cannot sample directly from the distribution and (2) we measure generalization error on the test set, since we cannot compute the expected generalization error directly.

\end{document}